\documentclass[11pt]{article}

\usepackage{amsmath,amssymb,mathtools}
\usepackage{times}
\usepackage{bm}
\usepackage{natbib}
\usepackage{algorithm}
\usepackage{mathrsfs}
\usepackage{graphicx}
\usepackage{subcaption}
\usepackage{microtype}
\usepackage[flushleft]{threeparttable}
\usepackage{enumitem}
\usepackage{titletoc}
\usepackage{tocloft}
\usepackage{nexais}
\usepackage{bbm}
\usepackage{adjustbox}
\usepackage{nicefrac}
\usepackage{float}

\usepackage[colorlinks,
  linkcolor=metablue,
  anchorcolor=metablue,
  citecolor=metablue,
  urlcolor=metablue
]{hyperref}
\hypersetup{
  pdftitle={Mixture of Geodesic Factor Analyzers on Riemannian Homogeneous Spaces},
  pdfauthor={Hengchao Chen, Yuanyao Tan, Chao Huang, Hongtu Zhu, and Qiang Sun},
  pdfsubject={Mixture modeling and clustering for manifold-valued data},
  pdfkeywords={manifold-valued data, Riemannian homogeneous spaces, mixture models, geodesic factor analysis, clustering}
}
\usepackage[capitalize,noabbrev]{cleveref}

\usepackage{txfonts}
\usepackage{geometry}
\newcommand{\HH}{\mathbb{H}}
\newcommand{\sn}{\mathrm{sn}}
\newcommand{\Poincare}{Poincar{\'e} }
\newcommand{\seg}{{\rm seg}}
\newcommand*{\Sc}{\cS^{\perp}}
\newcommand*{\Ac}{\cA^{\perp}}
\newcommand*{\nn}{\notag}
\newcommand{\Frechet}{\textnormal{Fr$\rm \acute{e}$chet }}
\newcommand{\rw}{\mathrm{w}}
\def\v{{\varepsilon}}
\newcommand{\rF}{\textnormal{F}}
\renewcommand{\tr}{\textrm{trace}}
\newcommand{\vol}{{\rm vol}}
\newcommand{\xk}{\mathfrak{x}}
\newcommand{\vc}[1]{{\mbox{\boldmath${#1}$}}}
\newcommand{\ve}[1]{{\mbox{\boldmath${#1}$}}}
\renewcommand{\bx}{\vc{x}}
\renewcommand{\bw}{\vc{w}}
\renewcommand{\bbeta}{\vc{\beta}}
\renewcommand{\d}{\textnormal{d}}
\newcommand{\dvol}{{\rm dvol}}
\newcommand{\Exp}{\textnormal{Exp}}
\newcommand{\Log}{\textnormal{Log}}
\renewcommand{\cov}{\textnormal{cov}}
\newcommand{\TV}{\textnormal{TV}}
\newcommand{\brac}{{[]}}
\newcommand{\rO}{\mathrm{O}}
\newcommand{\Iso}{\mathrm{Iso}}
\newcommand{\id}{\mathrm{id}}
\newtheorem{condition}{Condition}[section]

\begin{document}

\title{Mixture of Geodesic Factor Analyzers on Riemannian Homogeneous Spaces}

\author[1]{Hengchao Chen}
\author[2,3]{Yuanyao Tan}
\author[4]{Chao Huang}
\author[5]{Hongtu Zhu}
\author[1,6]{Qiang Sun\thanks{Corresponding author; Email: \texttt{qsunstats@gmail.com}. }}

\affiliation[1]{University of Toronto}
\affiliation[2]{Florida State University}
\affiliation[3]{Eli Lilly and Company}
\affiliation[4]{University of Georgia}
\affiliation[5]{University of North Carolina at Chapel Hill}
\affiliation[6]{MBZUAI}

\abstract{
This paper introduces Mixtures of Geodesic Factor Analyzers (MGFA) on Riemannian homogeneous spaces. MGFA uses a geodesic factor model within each mixture component, providing greater expressiveness than mixtures of Riemannian radial distributions and enabling clustering of manifold-valued data with anisotropic subpopulations. We establish root-$n$ consistency for the MGFA maximum likelihood estimator (MLE), thereby filling a theoretical gap for mixtures of Riemannian radial distributions as a special case. We also propose an iterative estimation algorithm and implement it on spheres, shape spaces, and hyperbolic spaces. Numerical experiments show that MGFA substantially outperforms competing methods in well-specified regimes while remaining robust under model misspecification. Finally, case studies on corpus callosum and left hippocampus shape datasets demonstrate MGFA's effectiveness for both 2D contour and 3D shape analysis.
}

\metadata[Keywords]{manifold-valued data; Riemannian homogeneous spaces; mixture models; geodesic factor analysis; clustering}
\date{August 4, 2026}

\maketitle


\startcontents[main]
\section*{\contentsname}
\printcontents[main]{}{2}{}

\section{Introduction}

Advances in data acquisition and processing have made manifold-valued data increasingly common across many scientific domains, including shape analysis \citep{kendall1984shape, srivastava2016functional}, medical imaging \citep{fletcher2007riemannian, tuzel2006region, you2021re, dryden2009non, zhu2023statistical}, bioinformatics \citep{mardia2000directional, banerjee2005clustering}, and network analysis \citep{krioukov2010hyperbolic}. Such data often lie on non-Euclidean spaces, such as spheres, shape spaces, hyperbolic spaces, and symmetric positive definite (SPD) matrix spaces, each equipped with a Riemannian metric. A central challenge in analyzing manifold-valued data is that the intrinsic nonlinearity of these spaces can invalidate Euclidean methods. This has motivated the development of manifold statistics, which designs inferential tools that respect the underlying geometry \citep{marron2021object, srivastava2016functional, dubey2024metric, cornea2017regression, fletcher2013geodesic, yuan2012local, bhattacharya2010nonparametric, fletcher2004principal, bhattacharya2003large, chen2023statistical, banerjee2005clustering, said2017gaussian, lin2017extrinsic, lin2019extrinsic, dai2018principal}.

This paper develops a flexible mixture modeling framework on \emph{Riemannian homogeneous spaces} for manifold-valued data with heterogeneous subpopulations. This class includes Euclidean spaces, spheres, hyperbolic spaces, real and complex projective spaces, tori, symmetric positive definite (SPD) matrix spaces, Grassmann manifolds, and Stiefel manifolds. These manifolds possess useful geometric properties (e.g., completeness, homogeneity, and a positive injectivity radius) that facilitate the construction of Riemannian radial distributions \citep{chen2024riemannian}. Notable examples include the Riemannian Gaussian \citep{said2017gaussian, chakraborty2019statistics} and the von Mises–Fisher distribution \citep{banerjee2005clustering}.

We make four main contributions.
First, we introduce a \textit{geodesic factor model} for manifold-valued observations. For $x \in \mathcal{M}$, we assume
\[
x = \Exp(\Exp(\alpha, Vz), \epsilon),
\]
where $\alpha \in \mathcal{M}$ is a location parameter, $V=(v_1,\ldots,v_q)$ with $v_i \in T_\alpha \mathcal{M}$ is a loading matrix, $z \sim \mathcal{N}(0, I_q)$ is a latent vector, and $\epsilon$ is Riemannian noise. Here $\Exp(a,b)$ maps a tangent vector $b \in T_a\mathcal{M}$ to the manifold. Conditional on $z$, $x$ follows a Riemannian Gaussian $RN(\Exp(\alpha,Vz),\sigma)$, although other Riemannian radial distributions are also possible. This construction captures anisotropic dispersion and spatial dependence through low-dimensional latent factors, generalizing isotropic radial models. We extend this idea to clustering via \textit{Mixtures of Geodesic Factor Analyzers (MGFA)}, which generalize mixtures of factor analyzers \citep{ghahramani1996algorithm} to Riemannian homogeneous spaces. Unlike standard Riemannian radial distributions, which are inherently isotropic and thus limited in modeling directional variation and latent heterogeneity \citep{huang2021geo}, MGFA provides a practical geometry-aware alternative.

Second, we characterize the complexity of MGFA and address theoretical gaps in manifold mixture modeling. Using tools from Riemannian geometry, including Jacobi fields and volume comparison, we derive entropy bounds for MGFA under Riemannian Gaussian noise. Under mild conditions, the MGFA maximum likelihood estimator (MLE) achieves a convergence rate of
\[
(Km\max\{q,1\})^{1/2} n^{-1/2}
\]
up to logarithmic factors in Hellinger distance, where $K$ denotes the number of components, $m=\dim(\mathcal{M})$, $q$ is the latent dimension, and $n$ is the sample size. This rate matches the classical Euclidean benchmark \citep{ghosal2001entropies}. We further extend our analysis to mixtures of Riemannian radial distributions, including von Mises--Fisher distributions \citep{banerjee2005clustering}, and establish analogous guarantees. In contrast, existing convergence analyses in manifold learning based on empirical process arguments \citep{petersen2019frechet, chen2022uniform, dubey2019frechet, dubey2020frechet} typically rely on entropy conditions that are difficult to verify on general manifolds and do not directly apply to mixture models. Even for mixtures of radial distributions, convergence guarantees have been unavailable; our results fills this gap.

Third, we propose an iterative procedure for MGFA estimation and clustering that alternates between (i) updating model parameters given labels and (ii) updating labels given the current parameter estimates. Parameter updates follow ideas similar to principal geodesic analysis, combined with geometric estimators for the dispersion parameter $\sigma$. For label updates, we use Monte Carlo approximations to the likelihood and assign each point to the component with the largest posterior probability. To mitigate poor local optima, we use multiple random initializations and select the fit with the highest likelihood \citep{banerjee2005clustering}.

Finally, we evaluate MGFA on spheres, shape spaces, and hyperbolic spaces across low- and high-dimensional regimes, multiple classes, and model misspecification. MGFA consistently outperforms competing methods in well-specified settings and remains robust under misspecification. We also study two applications: a corpus callosum shape dataset and a left hippocampus shape dataset from the Alzheimer’s Disease Neuroimaging Initiative (ADNI)\footnote{Data used in preparation of this article were obtained from the Alzheimer's Disease Neuroimaging Initiative (ADNI) database (adni.loni.usc.edu). As such, the investigators within the ADNI contributed to the design and implementation of ADNI and/or provided data  but did not participate in the analysis or writing of this report. A complete listing of ADNI  investigators can be found  at \href{https://adni.loni.usc.edu/wp-content/uploads/how_to_apply/ADNI_Acknowledgement_List.pdf}{https://adni.loni.usc.edu/wp-content/uploads/how\_to\_apply/ADNI\_Acknowledgement\_List.pdf}.}. These case studies illustrate MGFA’s ability to identify meaningful subgroups in both 2D contour and 3D shape analysis.

\section{Preliminaries}\label{sec:pre}

We briefly review Riemannian homogeneous spaces and the geometric tools used in our analysis. Section~\ref{sec:riemannian-homo-space} introduces Riemannian homogeneous spaces and summarizes properties used throughout the paper. Section~\ref{sec:4.geo} reviews integration on manifolds, including the $L^1$ and Hellinger distances, and points to the volume comparison theorem that underpins our theoretical results.

We assume familiarity with basic Riemannian geometry, with introductory material deferred to the Appendix. Further background is available in standard references \citep{do1992riemannian, helgason1979differential, petersen2006riemannian, cheeger1975comparison}.

\subsection{Riemannian homogeneous spaces}\label{sec:riemannian-homo-space}

We review \textit{Riemannian homogeneous spaces} and their key properties \citep{helgason1979differential}. A Riemannian manifold $(\mathcal{M}, g^{\mathcal{M}})$ is \textit{homogeneous} if its isometry group $\Iso(\mathcal{M})$ acts transitively; that is, for any $x,y\in\mathcal{M}$, there exists an isometry $F\in\Iso(\mathcal{M})$ such that $F(x)=y$.

An important subclass is the family of \textit{Riemannian symmetric spaces}. Informally, a symmetric space admits a geodesic symmetry at every point: for each $x\in\mathcal{M}$, there exists an isometry $s_x$ that fixes $x$ and reverses geodesics through $x$. Equivalently, $s_x$ is an involutive isometry (i.e., $s_x\circ s_x=\id$) whose differential at $x$ equals $-\id$. Homogeneous and symmetric spaces enjoy useful geometric properties; the ones used in our analysis are summarized in Proposition~\ref{prop:2.1} of Appendix~\ref{sec:geom-prop}.


Beyond their geometric and algebraic structure, homogeneous and symmetric spaces provide natural models for diverse application-driven data types; many examples are given in Appendix~\ref{sec:apd-example}.


\subsection{Riemannian geometry}\label{sec:4.geo}

We briefly review integration on manifolds and the distance measures used in our analysis. Let $\mathcal{M}$ be an $m$-dimensional Riemannian manifold, and let $\dvol$ denote its Riemannian volume measure. For an integrable function $f:\mathcal{M}\to\mathbb{R}$, we write its integral as $\int_{\mathcal{M}} f\,\dvol$.

We rely on two distances between functions on $\mathcal{M}$: the $L^1$ distance and the Hellinger distance. For integrable functions $f_1,f_2$ on $\mathcal{M}$, the $L^1$ distance is
\[
d_1(f_1,f_2)=\int_{\mathcal{M}} |f_1-f_2|\,\dvol.
\]
For density functions $f_1,f_2$ on $\mathcal{M}$, the Hellinger distance is defined by
\[
d_h(f_1,f_2)=\left(\int_{\mathcal{M}}\big(\sqrt{f_1}-\sqrt{f_2}\big)^2\,\dvol\right)^{1/2}.
\]
Both distances are used throughout the theoretical analysis.

Evaluating integrals of the form $\int_{\mathcal{M}} f\,\dvol$ is a central task in Riemannian geometry. A key tool is the volume comparison theorem, which allows us to control volume growth and related integrals. For completeness, we state the theorem in Appendix~\ref{sec:vol-cmp-thm}.


\section{Mixtures of geodesic factor analyzers}\label{sec:3}

We introduce MGFA on Riemannian homogeneous spaces. Section~\ref{sec:model-3.1} presents the model, and Section~\ref{sec:alg-3.2} describes an efficient iterative procedure for estimation and clustering that alternates between parameter updates and label updates. Implementation details for spheres, shape spaces, and hyperbolic spaces are deferred to Appendix~\ref{sec:app-implementation}.


\subsection{Model}\label{sec:model-3.1}

We begin by reviewing Riemannian Gaussian distributions in Section~\ref{sec:3.1.1-gau}. Section~\ref{sec:geo-factor-model} introduces a geodesic factor model that extends geodesic regression \citep{fletcher2013geodesic} by treating the regression coefficients as latent factors. Building on this model, Section~\ref{sec:mixture-3.1.3} proposes MGFA as a mixture of geodesic factor models. The Appendix extends the construction to general Riemannian radial distributions, including von Mises--Fisher and Riemannian Laplace distributions.

\subsubsection{Riemannian Gaussian distributions}\label{sec:3.1.1-gau}

Given a location parameter $\alpha\in\cM$ and a scale parameter $\sigma>0$, the Riemannian Gaussian distribution $RN(\alpha,\sigma)$ \citep{chen2024riemannian,said2017riemannian} is defined through the unnormalized density
\begin{equation}\label{equ:3.1}
\tilde f(x;\alpha,\sigma)=\exp\left\{-\frac{d_g^2(x,\alpha)}{2\sigma^2}\right\},\quad x\in\cM.
\end{equation}
Proposition~\ref{prop:3.1} shows that $\tilde f$ is integrable on $\cM$ and that its normalizing constant
\begin{equation}\label{equ:normalizing}
Z(\alpha,\sigma)\coloneqq\int_{\cM}\tilde f(x;\alpha,\sigma)\,\dvol(x)
\end{equation}
does not depend on $\alpha$. We therefore write $Z(\sigma)$ and express the normalized density as
\begin{equation}\label{equ:3.2}
f(x;\alpha,\sigma)=\frac{1}{Z(\sigma)}\exp\left\{-\frac{d_g^2(x,\alpha)}{2\sigma^2}\right\}.
\end{equation}

We use two basic properties of this distribution throughout the paper. First, if $\{x_i\}_{i=1}^n \overset{\mathrm{i.i.d.}}{\sim} RN(\alpha,\sigma)$, then the MLE of $\alpha$ is the sample Fr\'echet mean:
\begin{equation}\label{equ:3.3}
\hat\alpha^{\mathrm{MLE}}=\argmin_{a\in\cM}\sum_{i=1}^n d_g^2(x_i,a).
\end{equation}
Second, when $\cM$ is a Riemannian symmetric space, the population Fr\'echet mean under $f(\cdot;\alpha,\sigma)$,
\[
\alpha^{\mathrm{FM}}\coloneqq\argmin_{a\in\cM}\int_{\cM} d_g^2(x,a)\,f(x;\alpha,\sigma)\,\dvol(x),
\]
is uniquely attained at $a=\alpha$. Thus, $RN(\alpha,\sigma)$ provides a convenient isotropic location-scale model on homogeneous spaces.

\begin{proposition}\label{prop:3.1}
Let $(\cM,g^{\cM})$ be a Riemannian homogeneous space.
\begin{itemize}
\item[(i)] The function $\tilde f$ defined in \eqref{equ:3.1} is integrable on $\cM$.
\item[(ii)] The normalizing constant $Z(\alpha,\sigma)$ defined in \eqref{equ:normalizing} is independent of $\alpha$ and thus $Z(\alpha,\sigma)=Z(\sigma)$.
\item[(iii)] If $\{x_i\}_{i=1}^n \overset{\mathrm{i.i.d.}}{\sim} RN(\alpha,\sigma)$, then the MLE of $\alpha$ is the sample Fr\'echet mean in \eqref{equ:3.3}, regardless of whether $\sigma$ is known.
\item[(iv)] If $\cM$ is a Riemannian symmetric space, then the population Fr\'echet mean under $f(\cdot;\alpha,\sigma)$ is uniquely attained at $\alpha$.
\item[(v)] Let $(\cN,g^\cN)$ be another Riemannian homogeneous space, and equip $\cM\times\cN$ with the product metric $g^{\cM}+g^{\cN}$. Then $(x,y)\in\cM\times\cN$ follows $RN((\alpha,\beta),\sigma)$ if and only if $x\sim RN(\alpha,\sigma)$, $y\sim RN(\beta,\sigma)$, and $x$ and $y$ are independent.
\end{itemize}
\end{proposition}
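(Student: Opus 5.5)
We take the five items in order. The main geometric inputs are the volume comparison theorem (Appendix~\ref{sec:vol-cmp-thm}) and the properties of homogeneous and symmetric spaces in Proposition~\ref{prop:2.1}: completeness, a lower bound on Ricci curvature, and the transitive isometry group.

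For (i), write the integral in geodesic polar coordinates about $\alpha$. Because $\cM$ is homogeneous, its sectional (hence Ricci) curvature is bounded, so $\mathrm{Ric}\ge (m-1)\kappa$ for some $\kappa\le 0$. Bishop--Gromov comparison then gives
\[
\vol\bigl(B(\alpha,r)\bigr)\le V_\kappa(r)\le C e^{(m-1)\sqrt{|\kappa|}\,r}\quad\text{for } r\ge 1,
\]
where $V_\kappa(r)$ is the volume of a radius-$r$ ball in the model space of constant curvature $\kappa$. By the layer-cake formula,
\[
\int_\cM \tilde f\,\dvol=\int_0^\infty \frac{r}{\sigma^2}\,e^{-r^2/(2\sigma^2)}\,\vol\bigl(B(\alpha,r)\bigr)\,dr .
\]
This is finite because the Gaussian factor beats exponential volume growth. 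We expect this step to be the main obstacle. One has to check that Proposition~\ref{prop:2.1} really supplies a uniform Ricci lower bound; it follows from homogeneity, since curvature at every point is the pushforward of curvature at a single point. One also has to handle cut loci, but the layer-cake formula avoids exponential coordinates entirely, so they cause no difficulty.

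For (ii), take $\alpha,\alpha'\in\cM$ and an isometry $F$ with $F(\alpha)=\alpha'$. Isometries preserve $d_g$ and $\dvol$, so the change of variables $x=F(y)$ gives
\[
\int \tilde f(x;\alpha',\sigma)\,\dvol(x)=\int \exp\bigl\{-d_g^2(F(y),F(\alpha))/(2\sigma^2)\bigr\}\,\dvol(y)=Z(\alpha,\sigma).
\]

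For (iii), by (ii) the log-likelihood is
\[
-n\log Z(\sigma)-\frac{1}{2\sigma^2}\sum_{i=1}^n d_g^2(x_i,\alpha).
\]
The first term does not involve $\alpha$. For any fixed $\sigma$, maximizing over $\alpha$ is therefore the same as minimizing $\sum_i d_g^2(x_i,a)$, and the joint maximizer has the same $\alpha$-component whether or not $\sigma$ is known. Existence of a minimizer follows from completeness: the objective is coercive and closed balls are compact by Hopf--Rinow.

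For (iv), set $F(a)=\int d_g^2(x,a)\,f(x;\alpha,\sigma)\,\dvol(x)$. Differentiating under the integral sign (justified by the tail bound from (i); the cut locus has measure zero) gives
\[
\nabla F(a)=-2\int \Log_a(x)\, f\,\dvol .
\]
At $a=\alpha$, the geodesic symmetry $s_\alpha$ is an isometry fixing $\alpha$, preserves $f$, and satisfies $\Log_\alpha(s_\alpha x)=-\Log_\alpha x$, so $\nabla F(\alpha)=0$. For uniqueness we use a strict comparison. Since $d_g(x,\alpha)=d_g(s_\alpha x,\alpha)$, the midpoint and triangle inequality give
\[
d_g^2(x,a)+d_g^2(s_\alpha x,a)\ \ge\ \tfrac12\bigl(d_g(x,a)+d_g(s_\alpha x,a)\bigr)^2\ \ge\ \tfrac12 d_g^2(x,s_\alpha x).
\]
This bound is too weak on its own. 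Instead, we write $f=g(d_g(\cdot,\alpha))$ with $g$ strictly decreasing and argue by rearrangement. The alternative is to show that $F(a)-F(\alpha)=\int d_g^2(x,\alpha)\,\bigl(f(x;a,\sigma)-f(x;\alpha,\sigma)\bigr)\,\dvol(x)$, which holds by the isometry $s_m$ at the midpoint $m$ of $a$ and $\alpha$ swapping them. This quantity is strictly positive for $a\ne\alpha$, because $d^2(\cdot,\alpha)$ is larger on the region where $f(\cdot;a)>f(\cdot;\alpha)$, namely where $d(x,a)<d(x,\alpha)$. We regard this as the cleanest route.

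For (v), the product metric satisfies
\[
d^2\bigl((x,y),(\alpha,\beta)\bigr)=d_\cM^2(x,\alpha)+d_\cN^2(y,\beta)\quad\text{and}\quad \dvol_{\cM\times\cN}=\dvol_\cM\otimes\dvol_\cN .
\]
Hence the unnormalized density factorizes, Fubini gives $Z_{\cM\times\cN}(\sigma)=Z_\cM(\sigma)Z_\cN(\sigma)$, and the joint density is the product of the two marginal densities. This is exactly independence with the stated marginals, and the converse follows by reading the same identities backwards.
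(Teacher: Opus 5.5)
\textbf{Gap in (iv).} The one real gap is the strict positivity step in (iv). Your identity
\[
F(a)-F(\alpha)=\int_{\mathcal{M}} d_g^2(x,\alpha)\bigl(f(x;a,\sigma)-f(x;\alpha,\sigma)\bigr)\,\dvol(x),
\]
obtained from the involution $s_m$ that swaps $a$ and $\alpha$, is correct. It is also exactly the device the paper uses. The reason you give for its sign, however, is not a proof.

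The signed weight $f(\cdot;a,\sigma)-f(\cdot;\alpha,\sigma)$ has total mass zero and is positive exactly on $P=\{x: d_g(x,a)<d_g(x,\alpha)\}$. But $d_g^2(\cdot,\alpha)$ is not pointwise larger on $P$ than on $N=\{x: d_g(x,a)>d_g(x,\alpha)\}$. In $\mathbb{R}^m$, for instance, $N$ is a half-space containing points arbitrarily far from $\alpha$, whereas near $a\in P$ the weight is only about $d_g^2(a,\alpha)$. So nothing yet fixes the sign of the integral.

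The missing step is to apply $s_m$ a second time. It maps $P$ onto $N$, preserves $\dvol$, and interchanges $d_g(\cdot,a)$ and $d_g(\cdot,\alpha)$. Hence the same quantity also equals $\int d_g^2(x,a)\bigl(f(x;\alpha,\sigma)-f(x;a,\sigma)\bigr)\,\dvol(x)$, and averaging the two expressions gives
\[
F(a)-F(\alpha)=\tfrac12\int_{\mathcal{M}}\bigl(d_g^2(x,\alpha)-d_g^2(x,a)\bigr)\bigl(f(x;a,\sigma)-f(x;\alpha,\sigma)\bigr)\,\dvol(x).
\]
Because $f$ is a strictly decreasing function of the distance, this integrand is pointwise nonnegative. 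It is strictly positive on the nonempty open set $P\ni a$, so $F(a)>F(\alpha)$. This is precisely the paper's inequality $D_1(u,v)=(u^p-v^p)(e^{-v^2/2\sigma^2}-e^{-u^2/2\sigma^2})\ge 0$ with $p=2$, $u=d_g(x,\alpha)$, $v=d_g(x,\tilde\alpha)$.

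\textbf{Alternative fix via rearrangement.} The rearrangement idea you mention can also be made rigorous through superlevel sets, and then no involution is needed. By homogeneity, $F(\alpha)=\int d_g^2(x,a)f(x;a,\sigma)\,\dvol(x)$. With $h=d_g^2(\cdot,a)$, the layer-cake formula then gives
\[
F(a)-F(\alpha)=\int_0^{1/Z(\sigma)}\Bigl(\int_{B(\alpha,r_t)}h\,\dvol-\int_{B(a,r_t)}h\,\dvol\Bigr)\,dt,
\]
where $B(b,r_t)=\{x: d_g(x,b)<r_t\}$ is the $t$-superlevel set of $f(\cdot;b,\sigma)$. The two balls have equal volume. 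Moreover $h\ge r_t^2$ on $B(\alpha,r_t)\setminus B(a,r_t)$ and $h<r_t^2$ on $B(a,r_t)\setminus B(\alpha,r_t)$. So each bracket is nonnegative, and it is strictly positive for all $t$ near $1/Z(\sigma)$, where $r_t<d_g(a,\alpha)/2$ and the balls are disjoint. This route uses only homogeneity, so it would even dispense with the symmetric-space hypothesis.

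\textbf{The other items.} The rest of your proposal is sound. Items (ii), (iii) and (v) follow the paper, and you make explicit the existence of a Fr\'echet mean and the factorization $Z_{\mathcal{M}\times\mathcal{N}}=Z_{\mathcal{M}}Z_{\mathcal{N}}$. Your (i) replaces the paper's polar-coordinate density bound $\lambda(r,\Theta)\le\sn_{\kappa_{\min}}^{m-1}(r)$ by Bishop--Gromov plus the layer-cake formula. That is equally valid, provided you take $\kappa<0$ strictly so that $V_\kappa(r)\le Ce^{(m-1)\sqrt{|\kappa|}r}$ actually holds.
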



\subsubsection{Geodesic factor models}\label{sec:geo-factor-model}

\begin{figure}[t]
    \centering
    \includegraphics[width=0.8\linewidth]{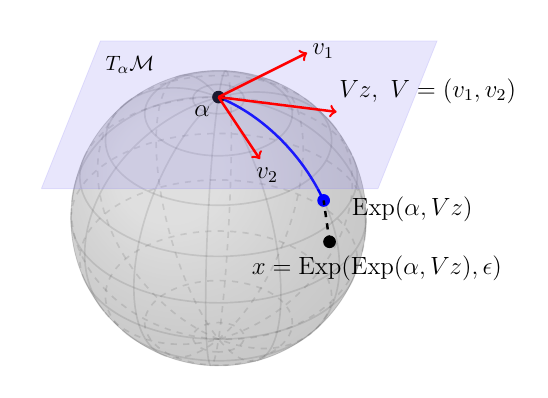}
    \caption{Visualization of the geodesic factor model in \eqref{equ:3.4}.}
    \label{fig:factor_reg}
\end{figure}

Riemannian Gaussian distributions provide an isotropic model on $\mathcal{M}$, which can be too restrictive when the data exhibit anisotropic variability. To capture such structure, we introduce a \emph{geodesic factor model} with low-dimensional latent factors. Specifically, we assume
\begin{align}\label{equ:3.4}
x &= \Exp \big(\Exp(\alpha, Vz), \epsilon\big),~~~ \alpha\in\mathcal{M}, ~~~
V = (v_1,\ldots,v_q), \quad v_j\in T_\alpha\mathcal{M},
\end{align}
where $\Exp(a,b)\coloneqq \Exp_a(b)$ is the exponential map, $z\sim\mathcal{N}(0,I_q)$ is a latent vector, and $\epsilon$ is Riemannian noise. Conditional on $z$, the observation follows $x\sim RN(\Exp_\alpha(Vz),\sigma)$. If $z$ were observed, \eqref{equ:3.4} would reduce to geodesic regression \citep{fletcher2004principal, cornea2017regression}; treating $z$ as latent yields a manifold analogue of factor analysis. In the Euclidean case $\mathcal{M}=\mathbb{R}^m$, \eqref{equ:3.4} reduces to the classical factor model
\[
x=\alpha+Vz+\epsilon,\quad z\sim\mathcal{N}(0,I_q),\ \ \epsilon\sim\mathcal{N}(0,\sigma^2 I_m).
\]

Model \eqref{equ:3.4} is parameterized by $(\alpha,V,\sigma)$, where $V$ is identifiable only up to right orthogonal transformations: $V$ and $VO$ induce the same distribution for any orthogonal matrix $O\in\mathbb{R}^{q\times q}$. We therefore regard $V$ as an equivalence class and, with slight abuse of notation, continue to write it as $V$. Marginalizing over the latent factor $z$ gives the density $f(x;\alpha,V,\sigma)$:
\begin{align}\label{equ:3.5}
\frac{1}{Z(\sigma)}\int_{\mathbb{R}^q}
\exp\!\left\{-\frac{d_g^2\big(x,\Exp_\alpha(Vz)\big)}{2\sigma^2}\right\}p(z)\,dz
\end{align}
for any $x\in\cM$, where $p(z)=(2\pi)^{-q/2}\exp\{-\|z\|^2/2\}$ is the density of $\mathcal{N}(0,I_q)$ and $Z(\sigma)$ is the normalizing constant in \eqref{equ:3.2}. In particular, $Z(\sigma)$ is independent of both $\alpha$ and $V$.

\begin{proposition}\label{prop:3.2-normalizing}
Under model \eqref{equ:3.4}, the marginal density of $x$ is given by \eqref{equ:3.5}.
\end{proposition}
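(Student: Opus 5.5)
The plan is to compute the density of $x$ by conditioning on the latent factor $z$ and then integrating $z$ out, justifying each step with Proposition~\ref{prop:3.1}. Under model \eqref{equ:3.4}, $z\sim\mathcal{N}(0,I_q)$. Given $z$, the point $\mu(z)\coloneqq\Exp_\alpha(Vz)$ is deterministic, and the model specifies $x\mid z\sim RN(\mu(z),\sigma)$. Since $\cM$ is a Riemannian homogeneous space, it is complete, so $\Exp_\alpha$ is defined on all of $T_\alpha\cM$ and $\mu(z)\in\cM$ for every $z\in\mathbb{R}^q$. By Proposition~\ref{prop:3.1}(i)--(ii) and \eqref{equ:3.2}, the conditional density of $x$ given $z$ with respect to $\dvol$ is
\[
f(x\mid z)=\frac{1}{Z(\sigma)}\exp\left\{-\frac{d_g^2(x,\mu(z))}{2\sigma^2}\right\}.
\]
The normalizing constant is $Z(\mu(z),\sigma)=Z(\sigma)$, the same for every $z$, because Proposition~\ref{prop:3.1}(ii) says it does not depend on the location.

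Next I would form the joint law of $(x,z)$ on $\cM\times\mathbb{R}^q$. Its density with respect to $\dvol\otimes dz$ is $f(x\mid z)p(z)$. This requires $(x,z)\mapsto f(x\mid z)p(z)$ to be jointly measurable, which holds because it is continuous: $\Exp$ is smooth, and $d_g$ is continuous on $\cM\times\cM$. For any Borel set $A\subseteq\cM$,
\[
P(x\in A)=\int_{\mathbb{R}^q}\int_A f(x\mid z)\,\dvol(x)\,p(z)\,dz .
\]
Since the integrand is nonnegative, Tonelli's theorem lets me swap the order of integration. This shows $P(x\in A)=\int_A g(x)\,\dvol(x)$, where $g(x)=\int_{\mathbb{R}^q}f(x\mid z)p(z)\,dz$. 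Pulling the constant $1/Z(\sigma)$ out of the $z$-integral gives exactly \eqref{equ:3.5}. The same computation also shows that $g$ integrates to one. Finally, the expression depends on $\alpha$ and $V$ only through the integrand, so the normalizer is free of $(\alpha,V)$.

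The main obstacle is the hidden point that $Z(\mu(z),\sigma)$ is constant in $z$. If it were not, the normalizer would have to stay inside the $z$-integral and could not be factored out. Proposition~\ref{prop:3.1}(ii) disposes of this; its proof uses that an isometry $F$ with $F(\alpha)=\alpha'$ preserves both $d_g$ and $\dvol$.

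A secondary obstacle is the measurability and Tonelli justification, which I would handle via continuity and nonnegativity as above. I would also remark that the identifiability issue (replacing $V$ by $VO$) is harmless here. The Gaussian measure $p(z)\,dz$ is invariant under $z\mapsto O z$, so \eqref{equ:3.5} is unchanged.
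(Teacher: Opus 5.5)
Your proposal is correct and follows the same route as the paper: write the joint density $p(z)\,Z(\sigma)^{-1}\exp\{-d_g^2(x,\Exp_\alpha(Vz))/2\sigma^2\}$ and integrate out $z$. Your extra steps make explicit what the paper's two-line proof leaves implicit: the location-independence of the normalizing constant via Proposition~\ref{prop:3.1}(ii), joint measurability, and the Tonelli interchange.
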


\subsubsection{Mixtures of geodesic factor analyzers}\label{sec:mixture-3.1.3}

We define MGFA as a finite mixture of geodesic factor models:
\begin{equation}\label{equ:3.6}
f(x;\Upsilon)=\sum_{k=1}^K \omega_k\, f(x;\alpha_k,V_k,\sigma_k),
\end{equation}
where $\omega=(\omega_1,\ldots,\omega_K)$ denotes the mixture weights, with $\omega_k\ge 0$ and $\sum_{k=1}^K \omega_k=1$. Each component density $f(x;\alpha_k,V_k,\sigma_k)$ is of the form \eqref{equ:3.5}, with location $\alpha_k\in\cM$, loading matrix $V_k=(v_{k1},\ldots,v_{kq})$ satisfying $v_{kj}\in T_{\alpha_k}\cM$, and scale parameter $\sigma_k>0$. We write $\Upsilon=\{\omega_k,\alpha_k,V_k,\sigma_k\}_{k=1}^K$ for the full parameter set.

\subsection{Algorithm}\label{sec:alg-3.2}

We propose an iterative procedure for MGFA estimation and clustering that alternates between updating model parameters and updating cluster labels. Given samples $\{x_i\}_{i=1}^n\subseteq\cM$, the goal is to fit an MGFA model with $K$ components and $q$ latent factors. Algorithm~\ref{alg:mgfa} summarizes the procedure.

This algorithmic formulation also distinguishes MGFA from the mixture-of-PPGA model of \citet{zhang2019mixture}, which extends probabilistic principal geodesic analysis (PPGA) \citep{zhang2013probabilistic}. In that work, the quantity corresponding to the component-specific latent factor in MGFA is introduced as a latent variable but later optimized as a model parameter.\footnote{The corresponding notation in \citet{zhang2019mixture} is $x_{nk}$.} This role mismatch obscures the objective being optimized and complicates theoretical analysis. In contrast, MGFA keeps latent factors separate from model parameters, yielding a clearer estimation procedure and supporting the theoretical analysis in Section~\ref{sec:4}.

\begin{algorithm}[t]
\caption{Mixture of geodesic factor analyzers (MGFA).}\label{alg:mgfa}
\begin{algorithmic}[1]
\INPUT{data $\{x_i\}_{i=1}^n\subseteq\cM$, number of components $K$, latent dimension $q$.}
\OUTPUT{labels $\{y_i\}_{i=1}^n\subseteq[K]$.}
\STATE Randomly initialize labels $\{y_i\}_{i=1}^n$.
\FOR{$t \gets 1$ \textbf{to} $T$}
\STATE Update parameters $\hat\Upsilon=\{\hat\omega_k,\hat\alpha_k,\hat V_k,\hat\sigma_k\}_{k=1}^K$ given labels $\{y_i\}_{i=1}^n$. \hfill (\textit{parameter update})
\STATE Update labels $\{y_i\}_{i=1}^n$ given $\hat\Upsilon$. \hfill (\textit{label update})
\ENDFOR
\end{algorithmic}
\end{algorithm}

\subsubsection{Parameter update}\label{sec:param-update}

Fix a label assignment $\{y_i\}_{i=1}^n\subseteq[K]$, and let $\cI_k=\{i:y_i=k\}$ denote the index set for component $k$. We estimate a geodesic factor model \eqref{equ:3.4} separately within each component. The parameter update consists of four steps:

\begin{itemize}[leftmargin=15pt]
\item[(a)] \textbf{Location.} Estimate $\alpha_k$ by the sample Fr\'echet mean
\[
\hat\alpha_k=\argmin_{a\in\cM}\sum_{i\in\cI_k} d_g^2(x_i,a),
\]
which can be computed via gradient descent on $\cM$ \citep{fletcher2013geodesic}.

\item[(b)] \textbf{Tangent projection and PCA.} For each $i\in\cI_k$, map $x_i$ to the tangent space by $u_i=\Log_{\hat\alpha_k}(x_i)\in T_{\hat\alpha_k}\cM$. Perform PCA on $\{u_i\}_{i\in\cI_k}$ to obtain the leading $q$ eigenpairs $\{(\lambda_{kj},\hat e_{kj})\}_{j=1}^q$ of the empirical covariance $|\cI_k|^{-1}\sum_{i\in\cI_k} u_i u_i^\top$. Set
\[
\hat V_k=\big(\sqrt{\lambda_{k1}}\hat e_{k1},\ldots,\sqrt{\lambda_{kq}}\hat e_{kq}\big)\in (T_{\hat\alpha_k}\cM)^q.
\]

\item[(c)] \textbf{Residuals and scale.} Let $P_k$ denote the orthogonal projection onto $\mathrm{span}(\hat V_k)$, and define the residuals $u_i^\perp=u_i-P_k u_i$. Update $\sigma_k$ by
\begin{equation}\label{equ:sigma-opt}
\hat\sigma_k=\argmax_{\sigma>0}\Big\{-\log Z(\sigma)-\frac{\mathrm{res}_k}{2\sigma^2}\Big\},
\end{equation}
where $\mathrm{res}_k=|\cI_k|^{-1}\sum_{i\in\cI_k}\|u_i^\perp\|^2$.

\item[(d)] \textbf{Weights.} Update the mixture weights using empirical proportions:
\[
\hat\omega_k=\frac{|\cI_k|}{n},\qquad k=1,\ldots,K.
\]
\end{itemize}

\subsubsection{Label update}\label{sec:label-update}

Given $\hat\Upsilon=\{\hat\omega_k,\hat\alpha_k,\hat V_k,\hat\sigma_k\}_{k=1}^K$, update the labels by maximum a posteriori assignment:
\[
y_i=\argmax_{k\in[K]}\ \hat\omega_k\, f(x_i;\hat\alpha_k,\hat V_k,\hat\sigma_k).
\]
Because \eqref{equ:3.5} is generally unavailable in closed form, we approximate the integral by Monte Carlo. Draw $B$ i.i.d.\ samples $\{z_{ij}\}_{j=1}^B\sim\mathcal{N}(0,I_q)$ and use
\#\label{equ:approx-f}
f^{\rm approx}(x_i;\alpha,V,\sigma) 
&= \frac{1}{B\cdot Z(\sigma)}\sum_{j=1}^B \exp\left\{-\frac{d_g^2(x_i,\Exp_{\alpha}(Vz_{ij}))}{2\sigma^2}\right\}.
\#

\subsection{Implementation}\label{sec:impl}

Implementation depends on the geometry of $\mathcal{M}$. For example, updating $\sigma$ in \eqref{equ:sigma-opt} requires evaluating $Z(\sigma)$ and its derivative, while the label update in \eqref{equ:approx-f} also requires $Z(\sigma)$. Appendix~\ref{sec:app-implementation} provides explicit expressions and numerical details for spheres, shape spaces, and hyperbolic spaces, together with initialization and model selection strategies.


\section{Theory} \label{sec:4}


Statistical analysis of manifold-valued models is important but challenging because the underlying geometry affects both likelihood behavior and model complexity. This section combines Riemannian geometry with empirical process theory to derive entropy bounds for MGFA and a non-asymptotic convergence rate for the maximum likelihood estimator (MLE). Specifically, given $n$ independent samples $\{x_i\}_{i=1}^n$ drawn from model~\eqref{equ:3.6}, the MLE $\hat{\Upsilon}$ of the parameter $\Upsilon$ is defined as
\begin{equation}\label{equ:3.7}
\hat{\Upsilon} = \argmax_{\Upsilon \in \Xi} \sum_{i=1}^n \log f(x_i; \Upsilon),
\end{equation}
where $\Xi$ denotes the feasible parameter space. Our analysis establishes a $(Km\max\{q,1\})^{1/2}n^{-1/2}\log n$ convergence rate in Hellinger distance, $d_h(f(\cdot; \hat{\Upsilon}), f(\cdot; \Upsilon^*))$, between the estimated and true distributions. Here $\Upsilon^* \in \Xi$ denotes the true parameter set, $K$ is the number of mixture components, $m$ is the manifold dimension, and $q$ is the latent dimension. To our knowledge, this is the first convergence theory for mixture models on general manifolds and among the first to explicitly characterize their model complexity. The Appendix extends these theoretical results to MGFA models with Riemannian radial distributions, including von Mises--Fisher and Riemannian Laplace distributions as examples.

The main technical challenge is to control the entropy of the MGFA model. This requires both integration over manifolds and bounds on geodesic deviation arising from the geodesic factor structure. We address these issues using tools from Riemannian geometry, particularly the volume comparison theorem and Jacobi field theory. Additional details are provided in Appendix~\ref{sec:c31}.  



\subsection{Model complexity}

We now quantify the model complexity of MGFA. Let $(\mathcal{M}, g^{\mathcal{M}})$ be a Riemannian homogeneous space. Our first result gives entropy bounds for the function class $\mathcal{F}_{\Xi} = \{f(x; \Upsilon) \mid \Upsilon \in \Xi\}$, where $f(x; \Upsilon)$ is the MGFA density in~\eqref{equ:3.6} and $\Upsilon$ ranges over the feasible set $\Xi$.
For MGFA, we define the feasible parameter set $\Xi$ as
\#\label{equ:4.3}
\Xi = \Bigg\{ 
    \Upsilon = \{\omega_k, \alpha_k, V_k, \sigma_k\}_{k=1}^K \ \Big|\ & \omega_k \geq 0,\ \sum_k \omega_k = 1,
    \alpha_k \in \mathcal{B}_{\mathcal{M}}(\alpha^*, D),\ \sigma_k \in [\sigma_{\min}, \sigma_{\max}],\notag \\
&\qquad V_k = (v_{k1}, \ldots, v_{kq}),\ \|V_k\|_{\mathrm{F}} \leq A \Bigg\},
\#
where $\mathcal{B}_{\mathcal{M}}(\alpha^*, D) = \{x \in \mathcal{M} \mid d_g(x, \alpha^*) \leq D\}$ denotes the geodesic ball in $\mathcal{M}$ centered at a fixed reference point $\alpha^* \in \mathcal{M}$ with radius $D$. The constants $\alpha^*, D, \sigma_{\min} > 0, \sigma_{\max}$, and $A$ are fixed.
Thus, $\Xi$ imposes three constraints: the location parameters $\{\alpha_k\}$ lie in a bounded region of the manifold; the dispersion parameters $\{\sigma_k\}$ are bounded below and above, with $\sigma_k \in [\sigma_{\min}, \sigma_{\max}]$; and the factor loading matrices $\{V_k\}$ have bounded Frobenius norms. These assumptions ensure finite entropy bounds and are standard even in Euclidean mixture models.



Our first main result, Theorem~\ref{thm:4.1}, bounds the bracketing entropy of $\cF_{\Xi}$ under both the $L^1$ and Hellinger metrics by $C_*Km\max\{q,1\}\log(1/\epsilon)$, where $C_*$ is a universal constant. The factor $Km\max\{q,1\}$ reflects the degrees of freedom in MGFA: each of the $K$ components has a location parameter on an $m$-dimensional manifold and an $m q$-dimensional loading matrix. Thus, the complexity scaling of MGFA on manifolds matches the corresponding Euclidean order.


\begin{theorem} \label{thm:4.1}
Let $\cM$ be an $m$-dimensional Riemannian homogeneous space, and consider the function class
$
\mathcal{F}_{\Xi} = \{f(x; \Upsilon) \mid \Upsilon \in \Xi\},
$
where $\Xi$ is the parameter space in~\eqref{equ:4.3}. For all sufficiently small $\epsilon > 0$, the following bracketing entropy bounds hold:
\$
    \log \cN_{B}(\epsilon,\cF_{\Xi},d_1)&\leq C_*Km\max\{q,1\}\log(1/\epsilon),\\
    \log\cN_B(\epsilon,\cF_\Xi,d_h)&\leq C_*Km\max\{q,1\}\log(1/\epsilon),
    \$
        where $d_1$ and $d_h$ denote the $L^1$ and Hellinger distances, respectively, and $C_*$ is a constant independent of $m,K,q,\epsilon$.
\end{theorem}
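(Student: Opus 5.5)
We would follow the standard Ghosal--van der Vaart route for finite mixtures: build an $\eta$-net of the parameter space $\Xi$, turn each net point into a bracket via an envelope function, and count. Write $\Upsilon=\{\omega_k,\alpha_k,V_k,\sigma_k\}$. The Hellinger bound reduces to the $L^1$ bound, since $d_h^2\le d_1$ gives $\cN_B(\epsilon,\cF_\Xi,d_h)\le \cN_B(\epsilon^2,\cF_\Xi,d_1)$, which changes only the constant. For mixtures, we first separate the weights from the components. Suppose $\{\omega_k\}$ ranges over an $\eta$-net of the simplex in $\ell^1$, which has size $(C/\eta)^{K}$. Suppose also that, for each $k$, the component densities are covered by brackets $[l_k,u_k]$ with $d_1(l_k,u_k)\le\eta$ and $\int u_k\,\dvol\le 1+\eta$. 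Then $[\sum \omega_k' l_k,\ \sum(\omega_k'+\eta)u_k]$ is a bracket of $L^1$-size $O(K\eta)$. It therefore suffices to show that the single-component class $\{f(\cdot;\alpha,V,\sigma)\}$ has bracketing number at most $(C/\eta)^{C m\max\{q,1\}}$, and then take $\eta=\epsilon/(CK)$. The factor $\log K$ this produces is absorbed for small $\epsilon$.

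\textbf{Single component.} We parametrize the location by a net of $\mathcal{B}_\cM(\alpha^*,D)$. By volume comparison (Bishop--Gromov, with Ricci curvature bounded below because $\cM$ is homogeneous), this net has cardinality $(C/\eta)^m$. To handle loadings that live in different tangent spaces, we fix the isometries $F_\alpha$ supplied by homogeneity with $F_\alpha(\alpha^*)=\alpha$. We then write $V=dF_\alpha(W)$ with $W\in (T_{\alpha^*}\cM)^q\cong\mathbb{R}^{m\times q}$ and $\|W\|_{\rm F}\le A$; this set has an $\eta$-net of size $(C/\eta)^{mq}$. The scale $\sigma\in[\sigma_{\min},\sigma_{\max}]$ contributes a one-dimensional net. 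The total count is $(C/\eta)^{C m\max\{q,1\}}$, which is where $\max\{q,1\}$ arises.

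\textbf{Brackets for one component.} For two parameters $\theta=(\alpha,V,\sigma)$ and $\theta'$ within $\eta$ of each other, we bound the pointwise difference $|f(x;\theta)-f(x;\theta')|$ by $\eta\,H(x)$ with $H$ integrable, and take the bracket $[f(\cdot;\theta')-\eta H,\ f(\cdot;\theta')+\eta H]$. Two ingredients enter.
\begin{itemize}
\item[(i)] \emph{Geodesic deviation.} We need $d_g(\Exp_\alpha(Vz),\Exp_{\alpha'}(V'z))\le C\eta(1+\|z\|)e^{c\|z\|}$. This follows from Jacobi field estimates (Rauch comparison), using bounded sectional curvature on a homogeneous space. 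Writing $\alpha'=F(\alpha)$ with $F$ an isometry close to the identity, a geodesic variation in the initial point and initial velocity has Jacobi fields growing at most like $\cosh(\sqrt{\kappa}\,t)$ or $\sinh(\sqrt{\kappa}\,t)/\sqrt{\kappa}$.
\item[(ii)] \emph{Integrand sensitivity.} Using $|d^2(x,p)-d^2(x,p')|\le d(p,p')\,(2d(x,p)+d(p,p'))$, together with the Lipschitz dependence of $1/Z(\sigma)$ and of $e^{-d^2/2\sigma^2}$ on $\sigma$, we get $|f(x;\theta)-f(x;\theta')|\le \eta\int (\text{poly})\,e^{c\|z\|}\,e^{-d^2(x,p_z)/2\sigma_{\max}'^2}\,p(z)\,dz$.
\end{itemize}
Integrating over $x$ first, the volume comparison theorem bounds $\int d^j e^{-d^2/2s^2}\,\dvol$ by a constant $C_m$, uniformly in the center $p_z$ by homogeneity (Proposition~\ref{prop:3.1}). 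Integrating next over $z$, the factor $e^{c\|z\|}$ is killed by the Gaussian weight. Together these give $\int H\,\dvol\le C$. For $\epsilon$ small, any polynomial dependence of $C$ on $m$, $A$, $D$ is swallowed into $\log(1/\epsilon)$ with a constant $C_*$.

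\textbf{Main obstacle.} The hardest step is (i), controlling the deviation $d_g(\Exp_\alpha(Vz),\Exp_{\alpha'}(V'z))$ uniformly for unbounded $z$, since in negative curvature the deviation grows exponentially in $\|V\|\|z\|$. We would first try to use the Gaussian tail of $p(z)$ to dominate $e^{c A\|z\|}$. If that fails, we would truncate at $\|z\|\le R\asymp\sqrt{\log(1/\eta)}$, treat the tail mass directly as part of the bracket, and absorb the resulting $\eta^{-c}$ losses into the logarithm.
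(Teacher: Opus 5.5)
Your outline is sound, but it builds the brackets by a genuinely different route from the paper.

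The shared skeleton is the same. Both arguments use a product net on the parameters (volume comparison for the locations, Euclidean nets for loadings, scales and weights). Both use a Jacobi-field deviation bound growing like $e^{c\|z\|}$, which the Gaussian weight absorbs. Both pass to Hellinger via $d_h^2\le d_1$.

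The paper, however, never constructs an integrable pointwise envelope per net point. Lemmas~\ref{lma:e3} and~\ref{lma:e4} give only a \emph{sup-norm} bound $d_\infty(f(\cdot;\Upsilon),f(\cdot;\tilde\Upsilon))\le\eta(\epsilon)=O(\epsilon)$ for the whole mixture, weights included. That bound uses just the global Lipschitz constant of $r\mapsto e^{-r^2/2\sigma^2}$, so $x$ never enters. The paper then takes brackets $[\max\{f_i-\eta,0\},\min\{f_i+\eta,H\}]$ with a single global envelope $H$. In the noncompact case it splits $\int(u_i-l_i)\,\dvol$ at radius $B=W\log(1/\epsilon)$. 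Exponential volume growth gives $2\eta\,\vol(\cB_\cM(\alpha^*,B))\le 2\sqrt{\epsilon}$, the tail of $H$ gives $2\epsilon$, and the lost square root disappears on rescaling.

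Your Ghosal--van der Vaart route yields bracket widths linear in $\eta$ and avoids the log-radius split. But the step you leave vague is the delicate one. The envelope may depend only on the net point, so its Gaussian factor must be centered at $p_z'=\Exp_{\alpha'}(V'z)$, not at $\Exp_\alpha(Vz)$. Moving the decay there costs a factor $e^{C\delta_z^2}$, where $\delta_z=d_g(\Exp_\alpha(Vz),p_z')$.

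\begin{itemize}
\item Your bound $\delta_z\lesssim\eta e^{c\|z\|}$ is useless in that exponent, since it is doubly exponential in $\|z\|$.
\item What works is the crude bound $\delta_z\le 2A\|z\|+\eta$, combined with an inflated scale $\sigma'$ chosen so that $C\delta_z^2\le\|z\|^2/4+C'$.
\item After that, your argument of integrating first in $x$ (uniformly in the center, by homogeneity) and then in $z$ goes through. Your truncation fallback would also work.
\end{itemize}

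The paper's sup-norm route is what lets it extend to general radial $\phi$ under only a Lipschitz and a tail condition. Yours leans on the explicit Gaussian decay.

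Two small repairs are needed. First, the lower bracket for the weights should be $\sum_k(\omega_k'-\eta)_+\,l_k$ with $l_k\ge 0$, because $\omega_k$ may lie below $\omega_k'$. Second, homogeneity gives no continuity of $\alpha\mapsto F_\alpha$. A product net over $(\alpha,W)$ with $V=dF_\alpha(W)$ therefore need not be close in $d_{T\cM}$. Anchor the loading net at the net location instead: for $d_g(\alpha,\alpha')\le\eta<c_0$, cover $dF_{\alpha'}^{-1}\cP_{\alpha\to\alpha'}V$. This is exactly the paper's per-point net $\cS_{V;\alpha}$ combined with parallel transport. It also makes the ``isometry close to the identity'' device in your step (i) unnecessary.
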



\begin{proof}[Proof Sketch]

The full proof of Theorem~\ref{thm:4.1} is given in Appendix~\ref{sec:app-theory}; here we outline the main ideas and make explicit the notation used in the geometric step. The proof consists of three steps, each addressing a geometric difficulty that does not arise in the Euclidean setting. The first step is to construct finite nets for the parameter space $\Xi$. For the mixture weights, dispersion parameters, and loading coordinates, this follows from standard Euclidean covering arguments. For the location parameters $\alpha_k\in\cM$, however, the covering number must be controlled on the manifold. We use the volume comparison theorem, Theorem~\ref{thm:a7}, to cover the bounded geodesic ball $\mathcal{B}_{\mathcal{M}}(\alpha^*,D)$ and hence obtain an $\epsilon$-net $\cS_{\Upsilon}$ for $\Xi$ with cardinality of order $\exp\{C Km\max\{q,1\}\log(1/\epsilon)\}$.

The second step converts this parameter net into a uniform approximation of the density class. Specifically, we bound
\#\label{equ:max-dist-d-infty}
\max_{\Upsilon\in\Xi}\min_{\Upsilon'\in\cS_{\Upsilon}}d_{\infty}(f(\cdot;\Upsilon),f(\cdot;\Upsilon'))
\#
where $d_{\infty}(g,h)=\sup_{x\in\cM}|g(x)-h(x)|$. The main non-Euclidean issue is the dependence of $f(x;\alpha,V,\sigma)$ on the geodesic factor map $\Exp_{\alpha}(Vz)$. To measure perturbations in $(\alpha,V)$, write $V_i=(v_{i1},\ldots,v_{iq})$ with $v_{ij}\in T_{\alpha_i}\cM$, and define
\$
d_{T\cM}^2((\alpha_1,V_1),(\alpha_2,V_2)) 
&=d_g^2(\alpha_1,\alpha_2)+\sum_{j=1}^q\|v_{1j}-\cP_{\alpha_2\to\alpha_1}v_{2j}\|^2,
\$
where $\cP_{\alpha_2\to\alpha_1}:T_{\alpha_2}\cM\to T_{\alpha_1}\cM$ is parallel transport along a minimizing geodesic; if several minimizing geodesics exist, we choose one that minimizes the displayed quantity. For a single tangent vector $u_i\in T_{\alpha_i}\cM$, the same notation denotes the corresponding quantity with the summation omitted. For any nonzero latent factor $z\in\RR^q$, let $\xi_z=z/\|z\|$ and $t=\|z\|$, so that $\Exp_{\alpha_i}(V_i z)=\Exp_{\alpha_i}(tV_i\xi_z)$. Applying our geometric Lemma~\ref{lma:geo}, proved by Jacobi field analysis, to the tangent vectors $u_i=V_i\xi_z$ gives
\$
d_g(\Exp_{\alpha_1}(V_1z),\Exp_{\alpha_2}(V_2z))
&\leq d_{T\cM}((\alpha_1,V_1\xi_z),(\alpha_2,V_2\xi_z))e^{c\|z\|}
 \leq d_{T\cM}((\alpha_1,V_1),(\alpha_2,V_2))e^{c\|z\|},
\$
where $c=(1+\kappa A^2)/2$ and $\kappa$ bounds the absolute sectional curvature. This lemma is the central geometric contribution in the entropy proof: it quantifies how perturbations of both the base point and the loading directions propagate through the exponential map. Although the factor $e^{c\|z\|}$ is faster than the linear growth in Euclidean spaces, it is integrable against the Gaussian density of $z$. This yields Lipschitz continuity of $f(x;\alpha,V,\sigma)$ with respect to $(\alpha,V)$. A separate use of volume comparison controls the normalizing constant $Z(\sigma)=\int_{\cM}\tilde f(x;\alpha,\sigma)\dvol(x)$ and yields the Lipschitz bound
\#\label{equ:sigma-lip}
|Z(\sigma_1)-Z(\sigma_2)|
&\leq \int_{\cM}|\tilde f(x;\alpha,\sigma_1)-\tilde f(x;\alpha,\sigma_2)|\dvol(x)
 \leq C|\sigma_1-\sigma_2|
\#
for some constant $C$, where $\tilde f(x;\alpha,\sigma)$ is the unnormalized Riemannian Gaussian kernel in \eqref{equ:3.1}.

The third step constructs brackets from the uniform approximation above. We first build an envelope function for $\cF_\Xi$ and then, using \eqref{equ:max-dist-d-infty}, form brackets $\{[l_i,u_i]\}$ such that $\cF_{\Xi}\subseteq\cup_i[l_i,u_i]$. It remains to control the $L^1$ width of each bracket:
\$
d_1(u_i,l_i)=\int_{\cM}|u_i(x)-l_i(x)|\dvol(x).
\$
This integral is not directly tractable on a general manifold. Passing to polar coordinates as in \eqref{equ:integral} reduces the problem to controlling the manifold volume density, but this density is generally complicated on spaces such as SPD matrix spaces, shape spaces, and Grassmann manifolds. The volume comparison theorem replaces it by the comparison function $\sn_{\kappa}(r)$ in \eqref{equ:sn}, where $\kappa$ is a curvature bound. This reduction yields the required $L^1$ and Hellinger bracket widths and completes the entropy bound.
\end{proof}

Building on Theorem~\ref{thm:4.1}, we derive the convergence rate of the MLE using empirical process theory. Assume that the true parameter set $\Upsilon^*$ belongs to $\Xi$, let $\hat\Upsilon$ be the MLE defined in \eqref{equ:3.7}, and write $\hat f=f(x;\hat\Upsilon)$. We show that the Hellinger distance $d_h(\hat f,f^*)$ between the estimated density $\hat f$ and the true density $f^*=f(x;\Upsilon^*)$ converges at rate $(Km\max\{q,1\})^{1/2}n^{-1/2}$ up to logarithmic factors. To our knowledge, this is the first convergence result for mixture models on general manifolds. The root-$n$ dependence on sample size matches the Euclidean case, while the factor $Km\max\{q,1\}$ captures the model complexity.
When $q=0$, the result yields theoretical guarantees for mixtures of Riemannian Gaussian distributions on Riemannian homogeneous spaces.





\begin{theorem}\label{corollary:1}
    Let $\cM$ be a Riemannian homogeneous space, and let $\Xi$ be defined by \eqref{equ:4.3}. Assume that the true density is $f^*=f(x;\Upsilon^*)$ for some $\Upsilon^*\in\Xi$. Given $n$ independent samples $\{x_i\}_{i=1}^n$ drawn from $f^*$, the MLE $\hat\Upsilon$ defined in \eqref{equ:3.7} satisfies, for sufficiently large $n$ and with probability at least $1-ce^{-c\log^2 n}$,
    \$
    d_h(\hat f,f^*)\leq \frac{C(Km\max\{q,1\})^{1/2}\log n}{n^{1/2}},
    \$
    where $c,C>0$ are constants independent of $K,m,q,n$, and $\hat f=f(x;\hat\Upsilon)$ is the estimated density.
\end{theorem}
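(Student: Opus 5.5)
The plan is to combine the bracketing entropy bound of Theorem~\ref{thm:4.1} with the standard MLE rate theorem of Wong and Shen (1995), as used by \citet{ghosal2001entropies}. Write $J_B(\delta)=\int_{\delta^2/2^8}^{\sqrt2\delta}\log^{1/2}\cN_B(u/c_3,\cF_\Xi,d_h)\,du$. The Wong--Shen theorem says the following. Suppose $\delta_n$ satisfies $J_B(\delta_n)\le c_4 n^{1/2}\delta_n^2$. Then the MLE satisfies
\[
\PP\bigl(d_h(\hat f,f^*)\ge\delta_n\bigr)\le c_5\exp\{-c_6 n\delta_n^2\}.
\]
This holds for any density class, with no further conditions such as a bounded likelihood ratio. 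It applies directly because the MLE is taken over $\Xi$ and $f^*\in\cF_\Xi$. I will also need that the MLE exists, or work with an approximate maximizer. Existence follows from compactness of $\Xi$ modulo the orthogonal equivalence of $V$, together with continuity of $\Upsilon\mapsto f(x;\Upsilon)$, which Step~2 of the proof of Theorem~\ref{thm:4.1} supplies.

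The main computation is to verify the entropy integral condition. Set $D_*=Km\max\{q,1\}$. By Theorem~\ref{thm:4.1}, for $u$ small we have
\[
\log\cN_B(u/c_3,\cF_\Xi,d_h)\le C_*D_*\log(c_3/u).
\]
On the integration range $u\ge\delta^2/2^8$ this gives
\[
\log\cN_B(u/c_3,\cF_\Xi,d_h)\lesssim D_*\log(1/\delta).
\]
Hence $J_B(\delta)\lesssim\delta\,(D_*\log(1/\delta))^{1/2}$, and the requirement reduces to $(D_*\log(1/\delta_n))^{1/2}\lesssim n^{1/2}\delta_n$. The choice $\delta_n=C D_*^{1/2}n^{-1/2}\log n$ works. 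Indeed, $\log(1/\delta_n)\le\log n$ for large $n$, so the left side is at most $(D_*\log n)^{1/2}$. The right side equals $CD_*^{1/2}\log n$, which dominates once $C$ is large. Note that $C$ does not depend on $K,m,q$, since $C_*$ is uniform in them. One technical point: Theorem~\ref{thm:4.1} holds only for sufficiently small $\epsilon$. Because $\delta_n\to0$ and $u\le\sqrt2\delta_n$, for large $n$ the whole range of integration lies in that regime. This is exactly where the hypothesis ``sufficiently large $n$'' enters.

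Plugging in the tail bound gives $n\delta_n^2=C^2D_*\log^2n\ge C^2\log^2 n$, using $D_*\ge1$. This yields probability at least $1-c_5e^{-c\log^2 n}$, matching the stated form after renaming constants. The main obstacle is bookkeeping rather than a deep step. The constants in the Wong--Shen theorem must be universal. The threshold on $n$ may depend on $D_*$, since it is needed so that $\delta_n$ is small, but the constants $c,C$ must not. The monotonicity of the integrand, used when bounding $J_B$, must also be handled. If the uniformity of the small-$\epsilon$ regime in Theorem~\ref{thm:4.1} is not explicit, I would first check its proof to confirm that the smallness threshold does not depend on $K,m,q$. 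Failing that, I would absorb the dependence into the requirement that $n$ be large, which the statement allows.
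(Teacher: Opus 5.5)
Your proposal is correct and follows the paper's route. Like the paper, you plug the Hellinger bracketing bound of Theorem~\ref{thm:4.1} into the Wong--Shen likelihood-ratio inequality and solve the entropy-integral condition with $\delta_n\asymp (Km\max\{q,1\})^{1/2}n^{-1/2}\log n$. The paper is terser: it bounds $\int_0^\delta$ of the root entropy by $\delta\log(1/\delta)$ and cites Theorem 2 of Wong and Shen. Your explicit treatment of the integration range, the uniformity of the constants, and the existence of the MLE only adds care.
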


These results highlight both the parallels and the differences between Euclidean and manifold-valued mixture models. For Riemannian Gaussian components, the Gaussian tail behavior is strong enough to recover a Euclidean-type root-$n$ rate up to logarithmic factors. For more general Riemannian radial distributions, however, geometric effects become more pronounced. For example, Laplace distributions are well-defined on Euclidean spaces and compact manifolds, but on hyperbolic and other noncompact manifolds they require additional tail conditions. The conditions needed for root-$n$ convergence may therefore depend on both the manifold geometry and the choice of radial distribution. We examine these geometric effects in more detail in the Appendix.


\section{Synthetic data experiments}\label{sec:7}




We evaluate MGFA through simulations on spheres, shape spaces, and hyperbolic spaces. The experiments cover low- and high-dimensional settings, multiple clusters, and model misspecification. We assess clustering accuracy using the Rand index (RI)~\citep{rand1971objective} and the adjusted Rand index (ARI). Across these settings, MGFA consistently outperforms baseline methods such as distance-based Ward clustering~\citep{ward1963hierarchical} and mixtures of Riemannian Gaussian distributions (MoRN) when the underlying subgroups exhibit latent factor structure. MGFA also remains robust under model misspecification.

Here, we present representative results for spherical, shape, and hyperbolic data clustering. Additional simulation results are deferred to Appendices~\ref{sec:sim-sphere}, \ref{sec:sim-shape}, and~\ref{sec:sim-hyperboloid}.


\subsection{Spherical data}

Our first synthetic study considers clustering on the sphere. We compare MGFA with Ward clustering, mixtures of von Mises--Fisher distributions (movMF) \citep{banerjee2005clustering,hornik2014movmf}, spherical $k$-means (skmeans) \citep{hornik2012spherical}, and MoRN. Data are generated from the MGFA model~\eqref{equ:3.6} on $\SSS^2$ with two groups and one latent factor: (G1) $\omega_1=0.5,\ \alpha_1=(1,0,0),\ v_1=(0,1,0),\ \sigma_1=0.05$; (G2) $\omega_2=0.5,\ \alpha_2=(-0.6,0,-0.8),\ v_2=(0,1,0),\ \sigma_2=0.1$.


In each trial, we draw $500$ samples and apply all competing clustering methods. Figure~\ref{fig:1} shows that MGFA more accurately identifies the latent factor structure, achieving an RI of 0.93 compared with at most 0.77 for the alternative methods.



\begin{figure}[t]
    \centering
    \includegraphics[width=.8\linewidth]{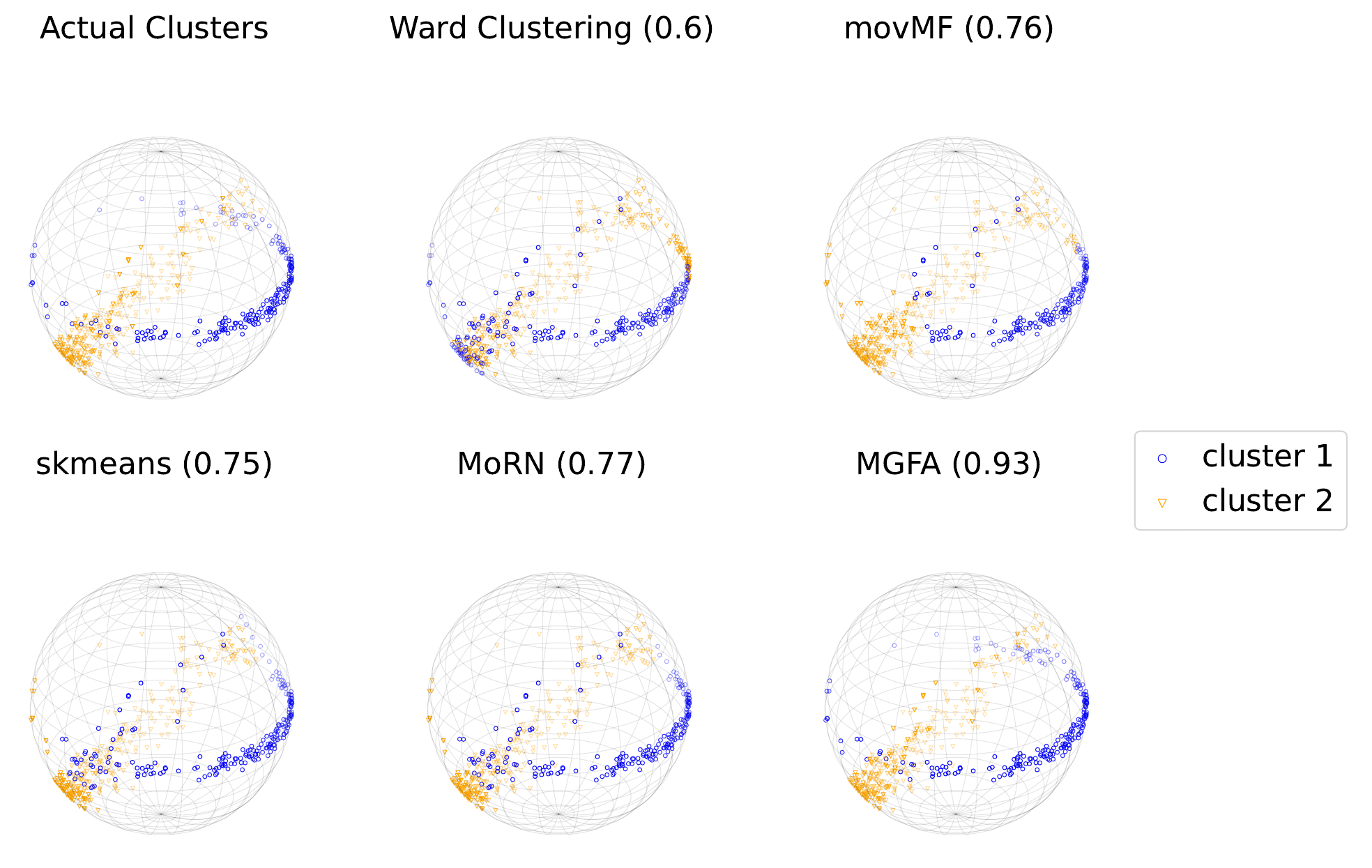}
    \caption{
    Visualization of clustering results on $\SSS^2$. The first panel shows the true clusters, and the remaining panels show the clustering results produced by different methods.
    Rand scores are reported in parentheses.
    }
    \label{fig:1}
\end{figure}

\subsection{Shape data}

This subsection investigates binary clustering on the shape space $\CC\PP^3$. We compare MGFA with two alternatives that do not incorporate factor structure: Ward clustering and MoRN. Performance is evaluated using RI and ARI.

\begin{table}[t]
    \centering
    \caption{Mean RIs and ARIs, with standard deviations in parentheses, for Ward clustering, MoRN, and MGFA on $\CC\PP^3$. Bold numbers indicate the best results.}
    \vspace{10pt}
    \begin{tabular}{cccc}
    \midrule[1pt]
        Criteria & Ward & MoRN & MGFA  \\
        \midrule[0.1pt]
         RI & 0.70 (0.08) & 0.59 (0.10) & \textbf{0.89} (0.06) \\
         ARI & 0.40 (0.16) & 0.18 (0.20) & \textbf{0.79} (0.12)\\
        \midrule[1pt]
    \end{tabular}

    \label{tab:2-1}
\end{table}



We consider an MGFA model with two groups and one latent factor: (G1) $\alpha_1=(1,0,0,0)\in\CC^4,$ $v_1=(0,0.7,0.7,0)\in\CC^4,$ $\sigma_1=0.05,$ $\omega_1=0.4$; and (G2) $\alpha_2=(0,0,0,1)\in\CC^4,$ $v_2=(0,0.7,0.7,0)\in\CC^4,$ $\sigma_2=0.05,$ $\omega_2=0.6$.
Here the equivalence class $[\alpha_k]=\{c\cdot \alpha_k\mid c\in\SSS^1\}\in\CC\PP^{3}$ is the base point, and $\bar v_k\in T_{[\alpha_k]}\CC\PP^{3}$ is the tangent vector associated with $v_k$; see Section~\ref{sec:6.2.1} for background. In each trial, we draw $n=500$ samples from the MGFA model and apply MGFA with $q=1$, Ward clustering, and MoRN to produce two clusters. Table~\ref{tab:2-1} reports the average RIs and ARIs over 100 repeated trials. MGFA achieves an average RI of 0.89, substantially outperforming Ward clustering and MoRN, whose average RIs are at most 0.70.

\subsection{Hyperbolic data}\label{sec:sim-hyperboloid-1}

\begin{figure*}[t]
    \centering
    \includegraphics[width=.9\textwidth]{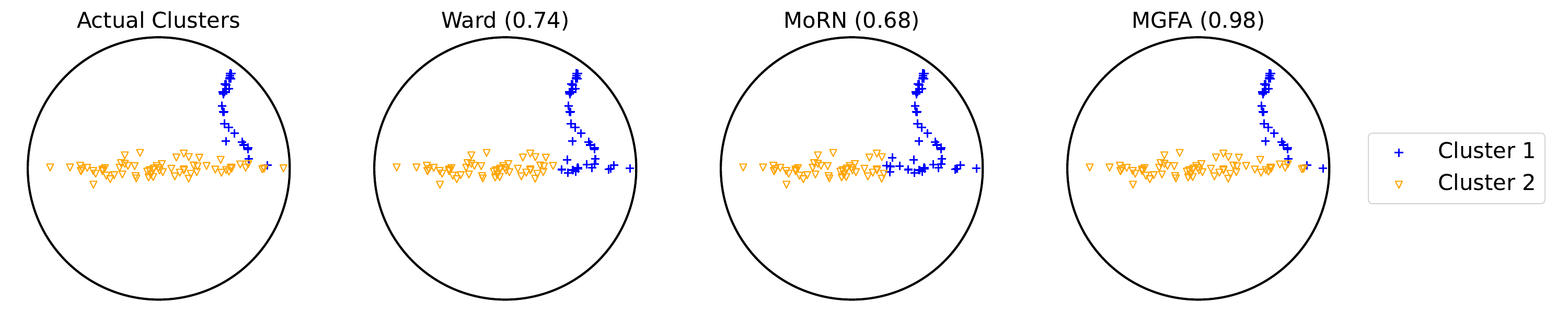}
    \caption{Visualization of clustering results on $\HH^{2}$. The first panel shows the true clusters, and the remaining panels show the results for Ward clustering, MoRN, and MGFA, respectively. RIs are reported in parentheses. The data are displayed in the Poincar\'e representation.}
    \label{fig:3-1}
\end{figure*}

This section studies clustering on the hyperbolic space $\HH^2$. We compare MGFA with Ward clustering and MoRN, neither of which accounts for latent factor structure, and evaluate performance using RI and ARI.

We consider an MGFA model with two groups and one latent factor: (G1) $\omega_1=0.4,\ \alpha_1=(3,2,2),\ v_1=(2,1,2),\ \sigma_1=0.1$; (G2) $\omega_2=0.6,\ \alpha_2=(1,0,0),\ v_2=(0,1,0),\ \sigma_2=0.1$;
where $\alpha_k\in\{x\in\RR^{3}\mid x_1^2=1+x_2^2+x_3^2,\ x_1>0\}$ is represented on the hyperboloid and $v_k\in T_{\alpha_k}\HH^{2}=\{v\in\RR^{3}\mid v_1\alpha_{k1}=v_2\alpha_{k2}+v_3\alpha_{k3}\}$ is the tangent vector. In a single trial, we draw $n=100$ samples from the model and cluster them into two groups using MGFA with $q=1$, MoRN, or Ward clustering. Figure~\ref{fig:3-1} visualizes the clustering results. MGFA recovers the latent factor structure, whereas the two competing methods fail to recover the hidden factor pattern. Consequently, MGFA achieves a substantially higher RI (0.99) than its competitors ($\leq 0.74$).


\section{Real data analysis}\label{sec:7.3}

We also analyze two datasets from the Alzheimer's Disease Neuroimaging Initiative (ADNI) study \citep{mueller2005alzheimer}: a corpus callosal shape dataset and a left hippocampal shape dataset. We apply MGFA to both datasets to demonstrate its ability to identify clinically meaningful subtypes of AD. The corpus callosal shape analysis is deferred to Appendix~\ref{sec:cc-analysis}.




We investigate a left hippocampal shape dataset from the ADNI study. The hippocampus is a central structure in the limbic system and plays a crucial role in memory formation and spatial navigation~\citep{burgess2002human}. It is among the first brain regions affected by Alzheimer's disease (AD) and other dementias, and hippocampal atrophy is strongly associated with mild cognitive impairment (MCI) and AD \citep{pennanen2004hippocampus,tapiola2008mri,rao2022hippocampus}. Previous studies have often used volumetric and morphometric measures, such as hippocampal volume \citep{du2001magnetic,pennanen2004hippocampus} and radial distance \citep{scher2007hippocampal}, as markers of atrophy. In contrast, we focus on hippocampal shape, which provides a more detailed representation of structural variation. Specifically, we study late mild cognitive impairment (LMCI), a transitional stage between normal cognition and AD that is marked by distinct patterns of hippocampal atrophy. Characterizing morphological variation in LMCI may improve understanding of cognitive decline and help predict progression from MCI to AD.


To examine these shape variations, we aim to identify meaningful subgroups within the LMCI cohort using left hippocampal shapes. After excluding subjects with missing data, the final sample contains 320 LMCI subjects; demographic information is summarized in Table~\ref{tab:demographic}. For each subject, we use the segmentation and registration tool \texttt{FSL-FIRST} \citep{patenaude2011bayesian,patenaude_bayesian_2008} to process T1-weighted MRI and extract 3D landmark data for the left hippocampus. Each hippocampus is represented by $m_0=732$ landmarks in $\RR^3$, denoted by $Y\in\RR^{m_0\times 3}$. We model its shape as a point on a sphere using the Helmert submatrix $H\in\RR^{(m_0-1)\times m_0}$, whose rows are given in \eqref{equ:helmert}. Specifically, multiplication by $H$ removes location information, yielding $Y_H=HY\in\RR^{(m_0-1)\times 3}$. We then remove scale by normalizing $Y_H$ and define $X=Y_H/\norm{Y_H}_{\rF}$, which lies on the sphere $\SSS^{3(m_0-1)-1}$. Because orientation has already been removed by \texttt{FSL-FIRST}, $X$ provides a suitable shape representation for the left hippocampus, reducing the analysis to spherical data analysis. In addition to hippocampal shape and demographic information, each subject has measurements of MMSE, hippocampal volume, and intracranial volume (ICV). Our objective is to stratify the LMCI cohort into distinct subgroups and examine subgroup differences.
\begin{table*}[t]
    \centering
    \caption{Demographic information for the LMCI cohort used in the left hippocampal shape analysis.}
    \label{tab:demographic}
    \begin{tabular}{@{}cccc@{}}
    \midrule[1pt]
         Sample Size & Sex (F/M) & Age (years) & Education Length (years) \\
         \midrule[.5pt]
      $n=320$ & 118 / 202 & mean 75.51 (range 59.90--89.60) & mean 16.06 (range 6--20) \\
         \midrule[1pt]
    \end{tabular}
\end{table*}

  \begin{figure}[t]
      \centering
      \includegraphics[width=\linewidth]{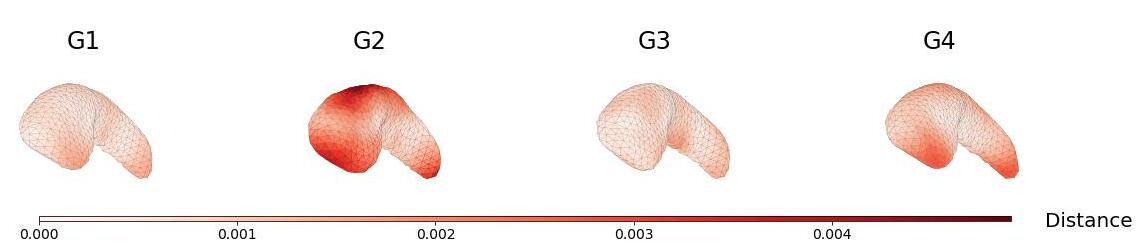}
      \caption{Visualization of the mean hippocampal shapes for each subgroup. The color gradient represents the surface distance between each subgroup-specific mean shape and the overall mean shape of the LMCI cohort.}
      \label{fig:LH-Mean}
  \end{figure}

Using MGFA with latent dimension $q=1$, we cluster the LMCI cohort into four groups. Figure~\ref{fig:LH-Mean} shows the mean hippocampal shape for each subgroup. The color gradient represents the surface distance from the reference shape, defined as the overall mean shape of the LMCI cohort. G2 shows the largest deviation from the reference shape, with the most pronounced differences in the superior and inferior regions. G4 also exhibits a notable deviation, primarily in the head and tail regions. In contrast, G1 and G3 are closer to the reference shape, suggesting that these subgroups represent more typical hippocampal morphology within the LMCI cohort.

Next, we examine subgroup demographic characteristics, reported in Table~\ref{tab:hipp-demo-subgroup} of Appendix~\ref{sec:hippo}. G2 and G4, which show the greatest deviations from the reference shape, also have the most distinctive demographic profiles. G2, the smallest subgroup, has the youngest age, lowest education level, lowest MMSE score, and lowest Hipp/ICV ratio. In contrast, G4 is the oldest subgroup and has the longest education duration and the second-highest MMSE score. This inverse relationship between cognitive performance and education duration has also been reported in previous studies \citep{huang2022functional}.

\section{Discussion}\label{sec:8}



This paper introduces MGFA for clustering data on Riemannian homogeneous spaces. By incorporating latent factor structures, MGFA is more expressive than mixtures of radial distributions and is well suited to manifold-valued data whose subpopulations exhibit anisotropic variation.
On the theoretical side, we develop a geometric statistical framework that yields entropy estimates for MGFA and non-asymptotic convergence rates for the MLE. On the computational side, we propose an efficient iterative algorithm for MGFA estimation and clustering, with implementation details for spheres, shape spaces, and hyperbolic spaces. Extensive numerical studies demonstrate the advantages of MGFA over competing methods. We conclude with several directions for future research.
\begin{itemize}
    \item \textbf{Parameter estimation and identifiability:} We establish a Hellinger convergence rate for the density induced by the MLE, but whether the underlying model parameters can be estimated at the same rate remains open. This question is closely tied to identifiability of the MGFA model. In Euclidean spaces, convergence rates for parameter estimation have been obtained under strong identifiability conditions \citep{ho2016convergence,ho2016strong,heinrich2018strong}. Extending such results to Riemannian homogeneous spaces is an important direction.


    \item \textbf{Model selection:} Selecting the number of clusters and latent factors is critical for applying MGFA. A central question is whether clusters identified by heuristic selection criteria correspond to genuine subgroups or are artifacts of the fitting procedure. In Euclidean settings, hypothesis-testing-based approaches have been proposed \citep{lo2001testing,nylund2007deciding,liu2008statistical}. Extending these ideas to manifold-valued mixture models is a natural next step.


    \item \textbf{Clustering high-dimensional manifold-valued data:}
    High-dimensional manifold clustering raises challenges related to computational efficiency and the curse of dimensionality. Developing scalable algorithms for large, high-dimensional manifold-valued datasets is therefore essential. Another promising direction is to investigate whether regularization can mitigate the statistical and computational effects of high dimensionality.


    \item \textbf{Extending MGFA models:} Several extensions of MGFA are worth exploring. Incorporating covariates, such as age or sex in medical imaging studies, could improve scientific interpretability and reveal covariate-dependent subgroup structure. Another direction is to impose shared parameters across clusters, such as a common variance parameter $\sigma$, which may reduce model complexity and improve interpretability.

\end{itemize}

\section*{Acknowledgments}

We thank the anonymous reviewers for their constructive comments and suggestions, which have helped improve the clarity and presentation of this work. The bulk of the work was carried out while HC was a PhD student at University of Toronto. HC was partially supported by NSERC Grant RGPIN-2026-06888.  QS was partially supported by NSERC Grant RGPIN-2026-06888, Compute Canada, and MBZUAI.

\section*{Impact Statement}


This paper presents work whose goal is to advance the field of Machine Learning. There are many potential societal consequences of our work, none
which we feel must be specifically highlighted here.




\begingroup
\setlength{\emergencystretch}{2em}
\sloppy
\bibliography{references}
\bibliographystyle{apalike}
\endgroup

\newpage
\stopcontents[main]
\appendix
\renewcommand{\theequation}{S.\arabic{equation}}
\renewcommand{\thetable}{S.\arabic{table}}
\renewcommand{\thefigure}{S.\arabic{figure}}
\renewcommand{\thesection}{S.\arabic{section}}

\vspace{30pt}
\noindent\textbf{\LARGE Appendix}

\startcontents[sections]
\section*{\contentsname}
\printcontents[sections]{}{1}{}

\section{Geometry and statistics}\label{sec:DG}

This section provides an additional introduction to differential geometry and empirical process theory. For differential geometry, we introduce smooth manifolds in Section \ref{sec:a1} and Riemannian geometry in Section \ref{sec:a2}, incorporating Riemannian manifolds, geodesics, curvature, Jacobi fields, and integration.  Subsequently, we present properties and examples of Riemannian homogeneous spaces in Section \ref{sec:apd-rhs}. Finally, we review concepts from empirical process theory in Section \ref{sec:empirical-process-theory}.

\subsection{Smooth manifolds}\label{sec:a1}

An $m$-dimensional smooth manifold $\cM$ is a topological manifold $\cM$ equipped with a maximal family of injective mappings $\varphi_{\alpha}:U_{\alpha}\subseteq\cM\to\RR^{m}$ such that (1) $\varphi_\alpha$ is a homeomorphism from an open set $U_{\alpha}$ to an open subset $\varphi(U_{\alpha})\subseteq\RR^m$; (2) $\bigcup_{\alpha}U_{\alpha}=\cM$; (3) for any $\alpha,\beta$ with a non-empty $W\coloneqq U_{\alpha}\cap U_{\beta}$, the mapping $\varphi_{\beta}\circ\varphi_{\alpha}^{-1}$ is a differentiable mapping of $\varphi_{\alpha}(W)$ onto $\varphi_{\beta}(W)$. The collection $\{(U_\alpha,\varphi_\alpha)\}$ satisfying the above properties is called a smooth structure on $\cM$.
The pair $(U_{\alpha},\varphi_{\alpha})$ is referred to as an open chart or a local coordinate system on $\cM$. If $x\in U_{\alpha}$ and $\varphi_{\alpha}(x)=(\xk_1(x),\ldots,\xk_m(x))$, the set $U_{\alpha}$ is called a  coordinate neighborhood of $x$ and the numbers $\xk_i(x)$ are called local coordinates of $x$.

Let $\cM$ and $\cN$ be smooth manifolds. A mapping $f:\cM\to\cN$ is called differentiable at  $x\in\cM$ if given a local chart $(V,\psi)$ at $f(x)$ there exists a local chart $(U,\varphi)$ at $x$ such that $ f(U)\subseteq V$ and the mapping $\psi\circ f\circ\varphi^{-1}:\varphi(U)\to\psi(f(U))$ is differentiable at $\varphi(x)$. The mapping $f$ is called  differentiable if it is differentiable at each point $x\in\cM$. A differentiable mapping $f$ is called a diffeomorphism if it is bijective and its inverse $f^{-1}$ is differentiable. Let $C^{\infty}(\cM,\cN)$ be the set of all differentiable mappings of $\cM$ onto $\cN$. In particular, when $\cN=\RR$, $C^{\infty}(\cM,\RR)$ is the set of all differentiable functions on $\cM$, which we denote by $C^{\infty}(\cM)$.  In addition, we let $C^{\infty}_x(\cM)$ be the set of all real-valued functions on $\cM$ that are differentiable at $x\in\cM$.

Suppose $\cM$ is an $m$-dimensional smooth manifold.
A tangent vector at $x\in\cM$ is a linear mapping $v: C^{\infty}_x(\cM)\to\RR$ such that $v(fg)=f(x)v(g)+g(x)v(f)$ for all $f,g\in C^{\infty}_x(\cM)$. The set of all tangent vectors at $x$ forms an $m$-dimensional vector space, called the tangent space to $\cM$ at $x$ and denoted by $T_{x}\cM$. Let $(U,\varphi)$ be a local chart  at $x$ and $\varphi(y)=(\xk_1(y),\ldots,\xk_m(y))$ be the coordinate representation. Then $\{\left.\frac{\partial}{\partial\xk_i}\right|_{x}\}_{1\leq i\leq m}$ gives a set of basis vectors of the tangent space $T_{x}\cM$, where $\frac{\partial}{\partial \xk_{i}}$ is defined as follows:
\$
\left.\frac{\partial}{\partial\xk_i}\right|_{y}f=\left.\frac{\partial\left(f\circ \varphi^{-1}\right)}{\partial \xk_i}\right|_{\varphi(y)},\quad \forall f\in C^{\infty}_y(\cM),\quad\forall y\in U.
\$
A tangent vector $v$ in $T_{x}\cM$ can be represented as $v=\sum_{i=1}^ma_i(x)\left.\frac{\partial}{\partial\xk_i}\right|_{x}$, which is called the coordinate representation of $v$ associated with $(U,\varphi)$. Such representation may vary across different local charts $(U,\varphi)$, while the tangent vector $v$ is independent of the choice of the local chart.

Let $\varphi:\cM\to\cN$ be a differentiable mapping and $x\in\cM$. The differential of $\varphi$ at $x$ is a linear mapping $d\varphi_x:T_{x}\cM\to T_{\varphi(x)}\cN$ such that
\$
d\varphi_x(v)(f)=v(f\circ \varphi)
\$
holds for all $v\in T_{x}\cM$ and $f\in C^{\infty}(\cN)$.

A vector field $X$ on a smooth manifold $\cM$ is an assignment that assigns to each point $x\in\cM$ a tangent vector $X_x\in T_{x}\cM$. Given any $f\in C^{\infty}(\cM)$, $Xf:\cM\to\RR$ is a function given by $Xf(x)=X_x(f)$. A vector field $X$ is called  smooth if $Xf\in C^{\infty}(\cM)$ for all $f\in C^{\infty}(\cM)$. Let $(U,\varphi)$ be a local chart on $\cM$ and $\varphi(y)=(\xk_1(y),\ldots,\xk_m(y))$. Then for all $y\in U$,  $X_y$ can be expressed as $\sum_{i=1}^ma_i(y)\left.\frac{\partial}{\partial\xk_i}\right|_{y}$ for some $a_i:U\to\RR$. A vector field $X$ is smooth if and only if $\{a_i\}_{1\leq i\leq m}$ are differentiable functions on $U$ for all local charts $(U,\varphi)$. We denote the set of all smooth vector fields on $\cM$ by $\Gamma^{\infty}(T\cM)$.

Let $X$ and $Y$ be smooth vector fields on $\cM$. There exists a unique smooth vector field $Z$ such that $Zf=(XY-YX)f$ for all $f\in C^{\infty}(\cM)$, where $(XY)f=X(Y(f))$ and $(YX)f=Y(X(f))$. We call such $Z$ as the Lie bracket $[X,Y]$ of $X$ and $Y$.

Let $\cM$ be a smooth manifold. A family of open sets $U_{\alpha}\subseteq\cM$ with $\bigcup_{\alpha} U_{\alpha}=\cM$ is said to be locally finite if every point $x\in\cM$ has a neighborhood $W$ such that $W\cap V_{\alpha}$ is non-empty for only a finite number of indices. The  support of $\rho:\cM\to\RR$ is the closure of the set of points where $\rho$ is different from zero. We say a family $\{\rho_{\alpha}\}$ of differentiable functions $\rho_{\alpha}:\cM\to\RR$ is a smooth partition of unity if:\\
\indent (1) For all $\alpha$, $\rho_{\alpha}\geq0$ and the support of $\rho_{\alpha}$ is contained in a coordinate neighborhood $U_{\alpha}$ of a smooth structure $\{(U_{\beta},\varphi_{\beta})\}$ of $\cM$.\\
\indent (2) The family $\{U_{\alpha}\}$ is locally finite.\\
\indent (3) $\sum_{\alpha}\rho_{\alpha}(x)=1$ for all $x\in\cM$.\\
The following theorem guarantees the existence of the partition of unity.

\begin{theorem}[Theorem 5.6, Chapter 0 in \citet{do1992riemannian}]

    A smooth manifold $\cM$ has a smooth partition of unity if and only if every connected component of $\cM$ is Hausdorff and has a countable basis.

\end{theorem}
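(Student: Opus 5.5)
We prove the two directions separately. Throughout, $\cM$ is a topological space with a smooth atlas in the sense of Section~\ref{sec:a1}, so Hausdorffness and second countability are not assumed. We only use that each coordinate neighborhood $U_\alpha$ is homeomorphic to an open subset of $\RR^m$, and hence is Hausdorff, second countable, and locally compact.

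\emph{Necessity.} Let $\{\rho_\alpha\}$ be a smooth partition of unity subordinate to a locally finite family $\{U_\alpha\}$. Since $\sum_\alpha\rho_\alpha=1$, the sets $U_\alpha$ cover $\cM$.

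\textbf{Hausdorff.} Take $x\neq y$ and choose $\alpha$ with $\rho_\alpha(x)>0$.
\begin{itemize}
\item If $y\in U_\alpha$, we separate $x$ and $y$ inside the Hausdorff chart $U_\alpha$. Since $U_\alpha$ is open in $\cM$, the separating sets are open in $\cM$.
\item If $y\notin U_\alpha$, then $y\notin \mathrm{supp}\,\rho_\alpha\subseteq U_\alpha$. The open sets $\{\rho_\alpha>0\}$ and $\cM\setminus\mathrm{supp}\,\rho_\alpha$ then separate $x$ and $y$.
\end{itemize}

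\textbf{Second countability of each component.} First we show that each $U_\alpha$ meets only countably many $U_\beta$. Every point of $U_\alpha$ has a neighborhood meeting only finitely many $U_\beta$, by local finiteness. The chart $U_\alpha$ is second countable, hence Lindel\"of, so countably many such neighborhoods cover $U_\alpha$. Each meets finitely many $U_\beta$, so $U_\alpha$ meets countably many.

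Now fix a component $\cC$ and some $U_{\alpha_0}$ meeting it. Let $J$ be the set of indices reachable from $\alpha_0$ by finite chains of pairwise intersecting $U$'s. By the previous paragraph, $J$ is countable.

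Set $W=\bigcup_{\beta\in J}U_\beta$, which is open. Its complement is the union of those $U_\gamma$ that do not meet $W$: any $U_\gamma$ meeting $W$ has $\gamma\in J$, so each $U_\gamma$ lies either in $W$ or in its complement. Hence the complement is also open.

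Since $U_{\alpha_0}$ is connected through the chart (or replace it by its components), connectedness gives $\cC\subseteq W$. Thus $\cC$ is covered by the countably many second countable open sets $U_\beta\cap\cC$, $\beta\in J$, and so has a countable basis.

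\emph{Sufficiency.} Components of a manifold are open, so it suffices to build a partition of unity on each component and take the union of the families. Fix a component $\cC$; it is Hausdorff, second countable and locally compact.

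\textbf{Exhaustion by compact sets.} Using a countable basis of relatively compact coordinate balls, we build compact sets $K_1\subseteq K_2\subseteq\cdots$ with $K_i\subseteq\mathrm{int}\,K_{i+1}$ and $\bigcup_i K_i=\cC$. Inductively, we cover $K_i$ by finitely many basic balls and add the next basic ball.

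\textbf{Local covers of the annuli.} For each $i$, the compact annulus $K_{i+1}\setminus\mathrm{int}\,K_i$ lies in the open set $\mathrm{int}\,K_{i+2}\setminus K_{i-1}$. Cover the annulus by finitely many coordinate balls $B$ whose closures lie in that open set. On each such $B$, take a smooth bump function $\psi_B\ge0$ that is positive on a slightly smaller ball, with those smaller balls still covering the annulus, and with $\mathrm{supp}\,\psi_B\subseteq B$. These bump functions are pulled back through the charts from standard $C^\infty$ bumps on $\RR^m$.

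\textbf{Local finiteness and normalization.} A point of $\mathrm{int}\,K_{j}$ meets only balls from the indices $i\le j$, so the family $\{B\}$ is locally finite. Hence $\Psi=\sum_B\psi_B$ is a locally finite sum, smooth, and strictly positive everywhere. Setting $\rho_B=\psi_B/\Psi$ gives the partition of unity.

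The step requiring the most care is this sufficiency construction. The exhaustion must be built so that each ball meets only finitely many annuli, which is exactly what local finiteness requires; the Hausdorff property is what guarantees that the $K_i$ are closed and the supports behave as expected. The remaining steps are routine once this bookkeeping is in place.
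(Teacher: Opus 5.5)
The paper does not prove this statement; it quotes it from do Carmo as background. There is therefore no argument of the paper's to compare against. Your proof is the standard one and it is correct. Reading ``manifold'' without built-in Hausdorff or second-countability assumptions is the right interpretation, since otherwise the necessity direction would be vacuous.

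A few points could be tightened.

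\begin{itemize}
\item In the necessity part, the parenthetical about $U_{\alpha_0}$ being connected is unnecessary. The set $W$ is clopen and contains $U_{\alpha_0}$, which meets $\mathcal{C}$, so connectedness of $\mathcal{C}$ alone gives $\mathcal{C}\subseteq W$.
\item In the sufficiency part, index the annuli from $i=0$ with $K_{-1}=K_0=\emptyset$, so that $K_1$ itself is covered.
\item Your closing remark that ``each ball meets only finitely many annuli'' is not the property local finiteness needs. What you actually use, correctly, in the preceding paragraph is different: the open set $\mathrm{int}\,K_j$ meets only balls from the finitely many stages $i\le j$, and each stage contributes finitely many balls.
\item It is worth stating explicitly where Hausdorffness of $\mathcal{C}$ enters; it is used in three places, all because compact sets are closed.
  \begin{itemize}
  \item It makes $\mathrm{int}\,K_{i+2}\setminus K_{i-1}$ open.
  \item It makes coordinate balls over closed Euclidean balls relatively compact, which is needed for the exhaustion.
  \item It makes the zero extension of each pulled-back bump function smooth: the support is the preimage of a closed Euclidean ball, hence compact, hence closed in $\mathcal{C}$.
  \end{itemize}
  This is exactly the step that fails on non-Hausdorff examples such as the line with two origins.
\end{itemize}
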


\subsection{Riemannian geometry}\label{sec:a2}

This subsection presents an additional introduction to Riemannian geometry. Section \ref{sec:rie-manifold} introduces Riemannian manifolds, 
geodesics, exponential map, logarithm map, and cut locus. Section \ref{sec:a23} reviews connection, curvature, and Jacobi fields. Section \ref{sec:vol} reviews integration on Riemannian manifolds. For a more comprehensive treatment on these topics, we suggest readers to read
\citet{do1992riemannian,petersen2006riemannian,cheeger1975comparison,helgason1979differential}.

\subsubsection{Riemannian manifolds}\label{sec:rie-manifold}

A Riemannian manifold $(\cM,g^{\cM})$ is a smooth manifold $\cM$ equipped with a Riemannian metric $g^{\cM}$. The metric is a smoothly varying family of inner products $g^{\cM}_x: T_{x}\cM\times T_{x}\cM\to\RR$, where $T_{x}\cM$ is the tangent space of $\cM$ at $x\in\cM$. A diffeomorphism $F:\cM\to \cM$ is called an isometry of $\cM$ if it preserves the Riemannian metric $g^\cM$, that is, $g^{\cM}_{F(x)}(dF_x(v),dF_x(w))=g^{\cM}_x(v,w)$ for all $v,w\in T_{x}\cM$ and $x\in\cM$, where $dF_x$ is the differential of $F$ at $x$. We denote the set of all isometries of $\cM$ by ${\rm Iso}(\cM)$. The Riemannian metric allows us to measure geometric quantities on $\cM$, such as the lengths of curves, distances, volumes, and curvatures. All quantities determined by the Riemannian metric are preserved under an isometry.

A mapping $\gamma:[a,b]\to\cM$ is a piecewise smooth curve if it is continuous and  there exists a partition $a=a_1<\ldots<a_k=b$ of $[a,b]$ such that $\gamma|_{[a_i,a_{i+1}]}$ is smooth for $i=1,\ldots,k-1$. Given a piecewise smooth curve $\gamma$ in $\cM$, its length is measured by integrating the norm of its tangent vectors along the curve. For any two points $x,y\in\cM$, the distance $d_g(x,y)$ is defined as the minimum length over all possible piecewise smooth curves between $x$ and $y$. The function $d_{g}$ defines a metric on $\cM$. The manifold $(\cM,g^{\cM})$ is said to be complete if it is complete as a metric space $(\cM,d_g)$.

A curve is called a geodesic if it locally minimizes the length between points. The Hopf-Rinow theorem
states that a connected manifold $\cM$ is complete if and only if all geodesics extend indefinitely \citep{do1992riemannian}.
Moreover, in a connected complete manifold, any two points can be connected by a length-minimizing geodesic.

For any point $x\in\cM$ and vector $v\in T_{x}\cM$, there exists a unique geodesic $\gamma_{v}(t)$ with $x$ as its initial position and $v$ as its initial velocity. The exponential map ${\rm Exp}_{x}:T_{x}\cM\to\cM$ is defined by ${\rm Exp}_{x}(v)=\gamma_{v}(1)$ provided that $\gamma_{v}(1)$ exists. If $\cM$ is a connected complete manifold,  the exponential map is well-defined on the whole tangent space $T_{x}\cM$.  The segment domain of ${\rm Exp}_x$ is defined as
\$
{\rm seg}(x)=\left\{v\in T_{x}\cM\mid
d_g(x,\gamma_{v}(t))=t \,\norm{v},\forall t\in[0,1]
\right\}.
\$
By the Hopf-Rinow theorem,  $\cM={\rm Exp}_{x}({\rm seg}(x))$. The interior of the segment domain is given by  
\$
{\rm seg}^0(x)=\left\{ sv\in T_{x}\cM\mid s\in[0,1),v\in {\rm seg}(x)\right\},
\$
which forms an open star-shaped domain in $T_{x}\cM$. On ${\rm seg}^0(x)$, the exponential map $\Exp_x$ is a diffeomorphism, and its inverse is called the logarithm map, denoted by $\Log_{x}$.

The set ${\rm seg}(x)- {\rm seg}^0(x)$ is known as the cut locus of $x$ in $T_{x}\cM$, while the cut locus of $x$ in $\cM$ is defined as
\${\rm Cut}(x)\coloneqq\cM-\Exp_x({\rm seg}^0(x)).
\$
Since $\cM$ is complete, ${\rm Cut}(x)$ is a closed set with measure zero in $\cM$\footnote{Please see Section \ref{sec:vol} for the definition of the volume measure (also called volume density) on a Riemannian manifold.}. This implies that $\Exp({\rm seg}^0(x))$ covers $\cM$ except for a null set.
The injectivity radius at $x$ is defined as
\${\rm inj}(x)\coloneqq d_g(x,{\rm Cut}(x))=\min_{y\in {\rm Cut}(x)}d_g(x,y).
\$
Finally, the injectivity radius of $\cM$ is given by
\$
{\rm inj}(\cM)=\inf_{x\in\cM}{\rm inj}(x).
\$





\subsubsection{Connection, curvature, and Jacobi fields}\label{sec:a23}

Let $\cM$ be a smooth manifold, $C^{\infty}(\cM)$ be the set of all smooth functions on $\cM$, and $\Gamma^{\infty}(T\cM)$ be the set of all smooth vector fields on $\cM$. An affine connection $\nabla$ on $\cM$ is a mapping
\$
\nabla:\Gamma^{\infty}(T\cM)\times \Gamma^{\infty}(T\cM)\to\Gamma^{\infty}(T\cM),\quad (X,Y)\to\nabla_{X}Y,
\$
such that the following conditions \\
\indent (1) $\nabla_{fX+gY}Z=f\nabla_XZ+g\nabla_YZ$,\\
\indent (2)$\nabla_{X}(Y+Z)=\nabla_XY+\nabla_XZ$,\\
\indent (3)$\nabla_X(fY)=f\nabla_XY+(Xf)Y$\\
hold for all $X,Y,Z\in\Gamma^{\infty}(T\cM)$ and $f,g\in C^{\infty}(\cM)$. Suppose $(\cM,g^{\cM})$ is a Riemannian manifold. There exists a unique affine connection $\nabla$ on $\cM$ such that the following conditions\\
\indent (1) $Xg^{\cM}(Y,Z)=g^{\cM}(\nabla_XY,Z)+g^{\cM}(Y,\nabla_XZ)$, \hfill (compatible)\\
\indent (2) $\nabla_XY-\nabla_YX=[X,Y]$ \hfill(torsion-free)\\
hold for all $X,Y,Z\in\Gamma^{\infty}(T\cM)$, where $[X,Y]=XY-YX$ is the Lie bracket. This connection $\nabla$ is called the Levi-Civita connection.

A vector field $X$ is said to be  parallel along a curve $\gamma$ if $\nabla_{\gamma'}X\equiv0$, where $\gamma'$ is the velocity of the curve $\gamma$ and $\nabla$ is an affine connection. By the theory of ordinary differential equations, for any $v\in T_{\gamma(0)}\cM$, there is a unique vector field $X$ along the curve $\gamma$ such that $X_{\gamma(0)}=v$. This allows us to define the parallel transport $\cP_{\gamma(0)\to\gamma(1),\gamma}:T_{\gamma(0)}\cM\to T_{\gamma(1)}\cM$ such that $\cP_{\gamma(0)\to \gamma(1),\gamma}(v)=X_{\gamma(1)}$, where $v\in T_{\gamma(0)}\cM$ and $X$ is the unique parallel vector field along $\gamma$ with $X_{\gamma(0)}=v$. One can show that $\cP_{\gamma(0)\to\gamma(1),\gamma}$ is an isometry from the tangent space $T_{\gamma(0)}\cM$ to the tangent space $T_{\gamma(1)}\cM$.

Let $\cM$ be a smooth manifold, $\nabla$ be an affine connection, $\Gamma^{\infty}(T\cM)$ be the set of smooth vector fields on $\cM$, and $[\cdot,\cdot]$ be the Lie bracket. The  curvature tensor $R$ of $\nabla$ is given by
\$
R(X,Y)Z=\nabla_{Y}\nabla_{X}Z-\nabla_{X}\nabla_{Y}Z+\nabla_{[X,Y]}Z,\quad\forall X,Y,Z\in\Gamma^{\infty}(T\cM).
\$
Suppose $(\cM,g^{\cM})$ is a Riemannian manifold and $\nabla$ is the Levi-Civita connection. Then the Riemannian curvature tensor $Rm$ of $(\cM,g^{\cM})$ is given by
\$
Rm(X,Y,Z,W)=g^{\cM}(R(X,Y)Z,W),\quad \forall X,Y,Z,W\in\Gamma^{\infty}(T\cM),
\$
where $R$ is the curvature tensor of $\nabla$. The {\it sectional curvature} $K(\Pi_x)$ of $(\cM,g^{\cM})$ at $x$ with respect to the plane $\Pi_x={\rm span}(X_x,Y_x)\subseteq T_{x}\cM$ is given by
\$
K(\Pi_x)=K(X_x,Y_x)\coloneqq\frac{Rm(X_x,Y_x,X_x,Y_x)}{g^{\cM}(X_x,X_x)g^{\cM}(Y_x,Y_x)-g^{\cM}(X_x,Y_x)^2}.
\$
This is independent of the choice of the basis $\{X_x,Y_x\}$ of $\Pi_x$. Proposition \ref{prop:a2} shows that sectional curvatures retain all the information of a Riemannian curvature tensor.

\begin{proposition}[Lemma 3.3, Chapter 4 in \citet{do1992riemannian}]\label{prop:a2}

    The values of $Rm(X_x,Y_x,X_x,Y_x)$ for all $X_x,Y_x\in T_{x}\cM$ determines the tensor $Rm$ at $x$.

\end{proposition}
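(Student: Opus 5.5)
The plan is to reduce the claim to a purely algebraic statement about $4$-linear forms on the single vector space $T_x\cM$. First I would record the standard symmetries of $Rm$ at $x$, which follow from the properties of the Levi-Civita connection. Antisymmetry in the first pair, $Rm(X,Y,Z,W)=-Rm(Y,X,Z,W)$, is immediate from the definition of $R$. Antisymmetry in the last pair, $Rm(X,Y,Z,W)=-Rm(X,Y,W,Z)$, comes from metric compatibility applied to $g^{\cM}(Z,Z)$. The first Bianchi identity $R(X,Y)Z+R(Y,Z)X+R(Z,X)Y=0$ follows from torsion-freeness combined with the Jacobi identity for the Lie bracket. Pair symmetry $Rm(X,Y,Z,W)=Rm(Z,W,X,Y)$ is then obtained by writing the Bianchi identity four times with cyclically shifted arguments and adding. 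Each of these is a routine tensorial computation, and since $Rm$ is a tensor, everything can be done pointwise on $T_x\cM$.

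Next, suppose $Rm$ and $Rm'$ are two $4$-linear forms on $T_x\cM$ with these symmetries, and that they agree on all quadruples $(X,Y,X,Y)$. Set $T=Rm-Rm'$. Then $T$ inherits all four symmetries, and $T(X,Y,X,Y)=0$ for all $X,Y$; it suffices to show $T\equiv0$.

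The argument is a double polarization.
\begin{itemize}
\item Expand $T(X+Z,Y,X+Z,Y)=0$ and use pair symmetry. This gives $T(X,Y,Z,Y)=0$ for all $X,Y,Z$.
\item Polarize again in the second variable: expanding $T(X,Y+W,Z,Y+W)=0$ gives $T(X,Y,Z,W)+T(X,W,Z,Y)=0$.
\item By antisymmetry in the last pair, this says $T(X,Y,Z,W)=T(X,W,Y,Z)$. In other words, with $X$ fixed, $T$ is invariant under cyclic permutation of its last three arguments.
\end{itemize}

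Finally, I would rewrite the Bianchi identity in the equivalent form with the first slot fixed,
\[
T(X,Y,Z,W)+T(X,Z,W,Y)+T(X,W,Y,Z)=0,
\]
using the pair and antisymmetries. By the cyclic invariance just proved, all three terms are equal, so $3T(X,Y,Z,W)=0$ and hence $T=0$. This shows that $Rm$ at $x$ is determined by the values $Rm(X,Y,X,Y)$. Since these values are determined by the sectional curvatures together with the metric, the sectional curvatures determine $Rm$.

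The main obstacle is bookkeeping in this last step. One must check that the paper's sign convention $R(X,Y)Z=\nabla_Y\nabla_XZ-\nabla_X\nabla_YZ+\nabla_{[X,Y]}Z$ still yields the Bianchi identity in exactly this cyclic form, and that moving the fixed argument into the first slot via pair symmetry introduces no stray signs. I would verify this by direct expansion before writing the final chain of equalities.
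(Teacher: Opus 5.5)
Your proposal is correct and is essentially the standard argument behind the cited Lemma 3.3 in do Carmo (the paper itself gives no proof and only cites it): symmetries of $Rm$, double polarization of the difference $T$, cyclic invariance, then the Bianchi identity to get $3T=0$. The only cosmetic difference is that you fix the first slot and cycle the last three, whereas do Carmo fixes the last slot and cycles the first three, which is equivalent by pair symmetry; your sign-convention worry is also unnecessary, since the Bianchi identity and all the symmetries you use are linear in $R$ and so are unaffected by replacing $R$ with $-R$.
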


\begin{definition}
 A Riemannian manifold is said to have {\rm constant (sectional) curvature} if its sectional curvature $K(\Pi_x)$ is a constant, which is independent of $x\in\cM$ and $\Pi_x\subseteq T_{x}\cM$.
\end{definition}

The Riemannian manifolds of constant curvature are the simplest among all Riemannian manifolds.
Theorem \ref{thm:a4} shows that any complete and simply connected manifold of constant curvature is essentially one of the three cases: the Euclidean space $\RR^{m}$, the sphere $\SSS^{m}$, or the hyperbolic space $\mathbb{H}^{m}$, where $m$ is the dimension of the manifold.

\begin{theorem}[Theorem 4.1, Chapter 8 in \citet{do1992riemannian}]\label{thm:a4}

    Let $\cM$ be an $m$-dimensional complete Riemannian manifold of constant curvature $\kappa$. Then the universal covering $\tilde\cM$ of $\cM$, with the covering metric, is isometric to:\\
    \indent (a) hyperbolic space $\mathbb{H}^{m}$, if $\kappa=-1$;\\
    \indent (b) Euclidean space $\RR^{m}$, if $\kappa=0$;\\
    \indent (c) sphere $\SSS^{m}$, if $\kappa=1$.

\end{theorem}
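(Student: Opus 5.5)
The proof passes to the universal cover and compares it with the model space through exponential maps, exploiting the fact that constant curvature makes Jacobi fields completely explicit. Let $\pi:\tilde\cM\to\cM$ be the universal covering with the pulled-back metric. Since $\pi$ is a local isometry, $\tilde\cM$ is complete, simply connected, and has the same constant curvature $\kappa\in\{-1,0,1\}$. Let $\bar\cM$ denote the corresponding model space $\mathbb{H}^m$, $\RR^m$, or $\SSS^m$, which is also complete, simply connected, and of curvature $\kappa$. It therefore suffices to show that two complete, simply connected $m$-manifolds of the same constant curvature are isometric.

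\textbf{Step 1 (Jacobi fields).} Fix $p\in\bar\cM$ and $\tilde p\in\tilde\cM$, and a linear isometry $\iota:T_p\bar\cM\to T_{\tilde p}\tilde\cM$. When the sectional curvature is constant, Proposition~\ref{prop:a2} gives $R(X,Y)Z=\kappa\,(g(X,Z)Y-g(Y,Z)X)$ up to the sign convention. Hence along any unit-speed geodesic $\gamma$, a Jacobi field with $J(0)=0$ and $J'(0)=w\perp\gamma'(0)$ is $J(t)=\sn_\kappa(t)W(t)$, where $W$ is the parallel transport of $w$. Consequently $|d(\Exp_{\tilde p})_{tv}(tw)|=|d(\Exp_p)_{tv}(t\,\iota^{-1}w)|$ for corresponding vectors, with the same formula in both spaces.

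\textbf{Step 2 (local isometry).} Define $F=\Exp_{\tilde p}\circ\iota\circ\Exp_p^{-1}$ wherever $\Exp_p$ is invertible. For $\kappa\le0$ this is all of $\bar\cM$, since there are no conjugate points. By Step 1, $dF$ preserves norms, so $F$ is a local isometry. A local isometry from a complete manifold is a covering map. Because $\tilde\cM$ is simply connected, $F$ is a bijective isometry $\bar\cM\to\tilde\cM$.

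\textbf{Step 3 ($\kappa=1$, the main obstacle).} Here $\Exp_p$ is a diffeomorphism only on the open ball of radius $\pi$, and the whole sphere of radius $\pi$ collapses to the antipode $-p$. I would first define $F$ on $\SSS^m\setminus\{-p\}$ as in Step 2. Next I would pick a point $q\ne\pm p$ and define a second map $G=\Exp_{F(q)}\circ dF_q\circ\Exp_q^{-1}$ on $\SSS^m\setminus\{-q\}$. The key point is that $F$ and $G$ agree on the connected overlap $\SSS^m\setminus\{p,-p,-q\}$. Both are local isometries there, they agree at $q$ together with their differentials, and a local isometry is determined by its value and differential at one point on a connected set, since it maps geodesics to geodesics. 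Gluing $F$ and $G$ gives a local isometry $\SSS^m\to\tilde\cM$. As $\SSS^m$ is compact, its image is open and closed, so the glued map is onto. It is a covering map, because it is a local isometry from a complete space. Simple connectivity of $\tilde\cM$ then makes it an isometry.

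Step 1 is a routine consequence of the Jacobi equation; the care in the argument lies in the gluing consistency of Step 3.
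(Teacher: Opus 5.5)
Your proposal is correct and follows essentially the standard proof in the cited source; the paper gives no argument of its own, only the reference to do Carmo. That proof compares the spaces via $F=\Exp_{\tilde p}\circ\iota\circ\Exp_p^{-1}$, a local isometry by Cartan's theorem, and for $\kappa=1$ glues it with a second such map based at $q\neq\pm p$, using that two local isometries agreeing to first order at a point coincide on the connected set $\SSS^m\setminus\{-p,-q\}$. This connectedness and the simple connectivity of $\SSS^m$ need $m\ge 2$, which the statement tacitly assumes.

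The differences from do Carmo are cosmetic. You verify the Cartan comparison directly from the explicit Jacobi fields $\sn_\kappa(t)W(t)$, using the Gauss lemma for the radial part. For $\kappa\le 0$ you apply the covering-map lemma on the model side instead of Hadamard's theorem on $\tilde\cM$.
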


Finally, let us introduce Jacobi fields as the variation field of an one-parameter family of geodesics. Let $\Gamma(t,s):\cI_1\times \cI_2\to \cM$ be a smooth map such that $\cI_1,\cI_2\subseteq \RR$ are intervals and for fixed $s$, $\Gamma(t,s)$ is a geodesic.  Let $T=d\Gamma(\frac{\partial}{\partial t})$ and $S=d\Gamma(\frac{\partial}{\partial s})$. Then it can be shown that $\nabla_TS=\nabla_S T$, and $S$ satisfies the Jacobi equation:
\$
\nabla_{T}\nabla_{T}S=R(T,S)T.
\$
A vector field $S$, along a geodesic $\gamma$ with tangent vector $T$ satisfying the Jacobi equation is called a Jacobi field along $\gamma$. It can be shown that every Jacobi field along $\gamma$ is given by the geodesic variation field $S=d\Gamma(\frac{\partial}{\partial s})$ of some $\Gamma$.

\subsubsection{Integration on manifolds}\label{sec:vol}

Let $(\cM,g^{\cM})$ be a Riemannian manifold, and $A$ be a compact set contained in a local chart $(U,\varphi)$ with the coordinate $\varphi(y)=(\xk_{1}(y),\ldots,\xk_{m}(y))$. The volume of $A$ is defined to be
\$
{\rm vol}(A)=\int_{\varphi(A)}\sqrt{G}\circ\varphi^{-1}d\xk^1\cdots d\xk^m,
\$
where $G={\rm det}(g_{ij})$, $g_{ij}=g^{\cM}(\frac{\partial}{\partial \xk_i},\frac{\partial}{\partial \xk_j})$, and $d\xk^1\cdots d\xk^m$ is the Lebesgue measure on $\RR^{m}$. This definition is independent of the choice of the coordinate chart. To define the volume of a set $A$ that needs not to be in one coordinate chart, we use the partition of unity argument. More precisely, we pick a locally finite family of coordinate charts $(U_{\alpha},\varphi_{\alpha},\xk_{\alpha,1},\ldots,\xk_{\alpha,m})$ with $\bigcup_{\alpha} U_{\alpha}=\cM$ and a partition of unity $\{\rho_{\alpha}\}$ subordinate to this family of  charts. We set
\$
{\rm vol}(A)=\sum_{\alpha}\int_{\varphi_{\alpha}(A\cap U_{\alpha})}\rho_{\alpha}\sqrt{G_{\alpha}}\circ\varphi_{\alpha}^{-1}d\xk^1_{\alpha}\cdots d\xk^m_{\alpha},
\$
as long as each integral in the sum exists. This leads to the following definition.

\begin{definition}
    {
    The {\rm Riemannian volume density} on $(\cM,g^{\cM})$ is
    \$
    \dvol=\sum_{\alpha}\rho_{\alpha}\sqrt{G_{\alpha}}\circ\varphi_{\alpha}^{-1}d\xk^1_{\alpha}\cdots d\xk^m_{\alpha}.
    \$
    }
\end{definition}

\begin{remark}
    {
    (1) This definition is independent of the choice of the family of local charts and the choice of the partition of unity.\\
    (2) This definition does not assume the manifold $\cM$ is oriented or compact. If $\cM$ is oriented and the coordinate system is taken to be orientation-preserving, then the Riemannian volume density reduces to the Riemannian volume form.
    }
\end{remark}

With the Riemannian volume density $\dvol$, we can integrate functions on $\cM$. Let $C_{c}^0(\cM)$ be the set of compactly supported continuous functions on $\cM$. For any $f\in C_{c}^{0}(\cM)$, we can define
\$
\int_{\cM}f\dvol=\sum_{\alpha}\int_{U_{\alpha}}\rho_{\alpha}f\sqrt{G_{\alpha}}\circ\varphi_{\alpha}^{-1}d\xk^1_{\alpha}\cdots d\xk^m_{\alpha}.
\$
This integral is well-defined. Let $C_{c}^{\infty}(\cM)$ be the set of compactly supported differentiable functions on $\cM$. For any $1\leq p< \infty$, one can define the $L^{p}$ norm on $C_c^{\infty}(\cM)$ via
\$
\norm{f}_{L^{p}}=\left(\int_{\cM}|f|^p\dvol\right)^{1/p}.
\$
The completion of $C_{c}^{\infty}(\cM)$ under the $L^p$ norm defines the $L^p$ space, $L^{p}(\cM)$. The $L^{\infty}$ norm of $f$ is given by $\norm{f}_{\infty}=\sup_{x\in\cM}|f(x)|$ and the completion of $C_c^{\infty}(\cM)$ under the $L^\infty$ norm defines the $L^{\infty}$ space.

\subsubsection{Volume comparison theorem}\label{sec:vol-cmp-thm}

Evaluating the integral $\int_\cM f\dvol$ is a fundamental topic in Riemannian geometry. One of the most important tools is the volume comparison theorem, introduced in this section.
Specifically, we will express the integral using normal coordinate charts
$(\Exp_x({\rm seg}^0(x)),\Log_x)$, where $x\in\cM$, ${\rm seg}^0(x)$ is the interior of the segment domain of the exponential map $\Exp_x$ and $\Log_x$ denotes the logarithm map.
Using polar coordinates $(r,\Theta)$ on this chart, we can write the volume density $\dvol$ as follows
\#\label{equ:lambda}
\dvol=\lambda(r,\Theta)drd\Theta.
\#
where $dr$ is the radial measure, $d\Theta$ is the usual surface measure on the unit sphere $\SSS^{m-1}$, $m$ is the dimension of $\cM$, and $\lambda(r,\Theta)$ is a function
defined over ${\rm seg}^0(x)$. For convenience, we set $\lambda(r,\Theta)=0$ outside ${\rm seg}^0(x)$. Then the integral of $f$ is given by
\#
\int_{\cM}f\dvol&=\int_{\Exp_x({\rm seg}^0(x))}f\dvol=\int_{{\rm seg^0(x)}}f^{\flat}\lambda(r,\Theta)drd\Theta\notag\\
&=\int_{T_{x}\cM}f^{\flat}\lambda(r,\Theta)drd\Theta,\label{equ:integral}
\#
where $f^{\flat}=f\circ\Exp_x$. The first equality uses the fact that the cut locus ${\rm Cut}(x)\coloneqq\cM-\Exp_x(\seg^0(x))$ has measure zero.
To evaluate  \eqref{equ:integral}, we use Theorem \ref{thm:a7} to approximate $\lambda(r,\Theta)$  with simpler functions.


\begin{theorem}[Volume comparison theorem, Theorem 27, Chapter 6 in  {\citet{petersen2006riemannian}}]\label{thm:a7}
    Let $(\cM,g^{\cM})$ be an $m$-dimensional complete Riemannian manifold whose sectional curvatures lie within $[\kappa_{\min},\kappa_{\max}]$. Let $\lambda(r,\Theta)$ be given by equation \eqref{equ:lambda}. Then, for all $(r,\Theta)\in{\rm seg}^0(x)$, we have
    \$
    \sn_{\kappa_{\max}}^{m-1}(r)\leq\lambda(r,\Theta)\leq \sn_{\kappa_{\min}}^{m-1}(r),
    \$
where $\sn_{\kappa}(r)$ is defined as
    \begin{gather}\label{equ:sn}
    \sn_{\kappa}(r)
    = \left\{
    \begin{array}{lc}
        \frac{\sin(\sqrt{\kappa }r)}{\sqrt{\kappa}}\mathbbm{1}_{r\leq\frac{\pi}{\sqrt{\kappa}}}, & \textnormal{if }\kappa>0, \\
        r,& \textnormal{if }\kappa=0,   \\
        \frac{\sinh(\sqrt{-\kappa }r)}{\sqrt{-\kappa}}, &\textnormal{if }\kappa<0.
    \end{array}
    \right.
    \end{gather}
Since $\lambda(r,\Theta)=0$ outside ${\rm seg}^0(x)$, the upper bound $\lambda(r,\Theta)\leq \sn_{\kappa_{\min}}^{m-1}(r)$ holds for all $(r,\Theta)$.
\end{theorem}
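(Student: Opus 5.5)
The plan is the standard Jacobi-field/Riccati argument carried out along each radial geodesic. Fix $x\in\cM$ and a unit direction $\Theta\in\SSS^{m-1}\subseteq T_x\cM$, and let $\gamma(r)=\Exp_x(r\Theta)$ for $r$ with $r\Theta\in\seg^0(x)$. Choose an orthonormal basis $e_1,\ldots,e_{m-1}$ of $\Theta^\perp$. Let $J_i$ be the Jacobi fields along $\gamma$ with $J_i(0)=0$ and $J_i'(0)=e_i$; these are exactly $J_i(r)=d(\Exp_x)_{r\Theta}(re_i)$. Then the volume density in \eqref{equ:lambda} is
\[
\lambda(r,\Theta)=\det\big(J_1(r),\ldots,J_{m-1}(r)\big),
\]
computed in a parallel orthonormal frame. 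This determinant is positive on $\seg^0(x)$ because there are no conjugate points before the cut point. Writing $\mathbf J(r)$ for the $(m-1)\times(m-1)$ matrix of these fields in a parallel frame, the Jacobi equation becomes
\[
\mathbf J''+\mathbf R(r)\mathbf J=0,\qquad \mathbf R(r)=R(\cdot,\gamma')\gamma' \text{ restricted to } \gamma'^\perp .
\]

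I would then pass to the shape operator $S=\mathbf J'\mathbf J^{-1}$ of the distance spheres. It is symmetric, by the Wronskian identity for Jacobi fields vanishing at $0$, and it satisfies the matrix Riccati equation
\[
S'+S^2+\mathbf R=0,\qquad S(r)=r^{-1}I+O(r)\ \text{as } r\to0 .
\]
We also have $(\log\lambda)'=\operatorname{tr}S$. The model comparison function is $s_\kappa=\sn_\kappa'/\sn_\kappa$, which solves $s'+s^2+\kappa=0$ with the same $1/r$ singularity. Moreover, $(\log \sn_\kappa^{m-1})'=(m-1)s_\kappa$.

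\textbf{Upper bound.} Since $\mathbf R\ge \kappa_{\min}I$, taking traces gives
\[
(\operatorname{tr}S)'\le -\operatorname{tr}(S^2)-(m-1)\kappa_{\min}\le -\frac{(\operatorname{tr}S)^2}{m-1}-(m-1)\kappa_{\min},
\]
using Cauchy--Schwarz. Thus $\phi=\operatorname{tr}S/(m-1)$ satisfies $\phi'+\phi^2+\kappa_{\min}\le0$ with $\phi\sim 1/r$. A scalar Riccati comparison (look at $\sn_{\kappa_{\min}}^2(\phi-s_{\kappa_{\min}})$, which is nonincreasing and tends to $0$ at $r=0$) gives $\phi\le s_{\kappa_{\min}}$. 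Integrating $(\log\lambda)'\le (\log \sn^{m-1}_{\kappa_{\min}})'$ from $0$, and using $\lambda(r,\Theta)\sim r^{m-1}\sim\sn_{\kappa_{\min}}^{m-1}(r)$ at $0$, yields $\lambda\le \sn_{\kappa_{\min}}^{m-1}$ on $\seg^0(x)$. Outside $\seg^0(x)$ the bound is trivial because $\lambda=0$ there by convention.

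\textbf{Lower bound.} This is the main obstacle, because the trace trick goes the wrong way: $-\operatorname{tr}(S^2)$ cannot be bounded below by a function of $\operatorname{tr}S$. Here I would use the full matrix Riccati comparison, which is essentially Rauch's theorem. Since $\mathbf R\le\kappa_{\max}I$, the matrix $S$ satisfies $S'+S^2+\kappa_{\max}I\ge0$, while $s_{\kappa_{\max}}I$ solves the equality case with the same initial singularity. Set $U=S-s_{\kappa_{\max}}I$. Then
\[
U'\ge -\big(S+s_{\kappa_{\max}}I\big)U \ \text{ in the symmetric-product sense},
\]
and $U=O(r)$ at $0$. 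A standard argument then shows $U$ stays positive semidefinite: conjugate by the fundamental solution of $Y'=-\tfrac12(S+s_{\kappa_{\max}}I)Y$, or argue via the minimal eigenvalue as a Lipschitz function. The delicate points are the singular start at $r=0$ and the requirement that both $S$ and $s_{\kappa_{\max}}$ remain finite. The first is handled by the common $r^{-1}I$ expansion. The second holds for $r<\pi/\sqrt{\kappa_{\max}}$ when $\kappa_{\max}>0$ and inside $\seg^0(x)$. Taking the trace and integrating then gives $\lambda\ge \sn_{\kappa_{\max}}^{m-1}$ on that range. For $\kappa_{\max}>0$ and $r\ge\pi/\sqrt{\kappa_{\max}}$, the indicator in \eqref{equ:sn} makes the lower bound $0$, so it holds trivially.
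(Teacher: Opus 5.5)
The paper does not prove this itself but imports it from Petersen, and your argument is correct and is the standard proof behind that citation: the Riccati equation $S'+S^2+\mathbf R=0$ for the shape operator of distance spheres, with the trace/Cauchy--Schwarz comparison for the upper bound and a matrix (smallest-eigenvalue) comparison for the lower bound. The one thing to add is the case $\kappa_{\min}>0$: your scalar comparison gives $\lambda\le\sn_{\kappa_{\min}}^{m-1}$ only for $r<\pi/\sqrt{\kappa_{\min}}$, so you should note that $\lambda$ is continuous and positive on ${\rm seg}^0(x)$ while $\sn_{\kappa_{\min}}^{m-1}(r)\to 0$ as $r\uparrow\pi/\sqrt{\kappa_{\min}}$, hence no segment ray reaches that radius, and this is what makes the indicator convention harmless for the upper bound.
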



Observe that $\lambda(r,\Theta)=\sn_{\kappa}^{m-1}(r)$ when $\cM$ is an $m$-dimensional complete Riemannian manifold of constant curvature $\kappa$. Thus, Theorem~\ref{thm:a7} provides a volume comparison between general Riemannian manifolds and manifolds of constant curvature. This  
gives a powerful tool for the analysis on Riemannian manifolds.

\subsection{Riemannian homogeneous spaces}\label{sec:apd-rhs}

\subsubsection{Geometric property}\label{sec:geom-prop}

This section presents geometric properties of Riemannian homogeneous spaces.

\begin{proposition}\label{prop:2.1}
A Riemannian homogeneous space $(\cM, g^{\cM})$ satisfies the following properties:
\begin{itemize}[leftmargin=*]
    \item $\cM$ is a complete Riemannian manifold.
    \item The sectional curvatures of $\cM$ lie within some bounded interval $[\kappa_{\min}, \kappa_{\max}]$.
    \item The injectivity radius $\text{inj}(\cM)$ is positive.
    \item There exists a constant $c_0 > 0$ such that for any $\alpha_1, \alpha_2 \in \cM$ with $d_g(\alpha_1, \alpha_2) < c_0$, $\alpha_1$ and $\alpha_2$ are connected by a unique minimizing geodesic.
    \item Let ${\dvol}$ be the volume density on $\cM$ and $f: \cM \to \mathbb{R}$ be an integrable function. Then,
    \$
    \int_{\cM} f {\dvol} = \int_{\cM} f \circ F \text{dvol},
    \$
    where $F \in \Iso(\cM)$ is an isometry.
\end{itemize}
Symmetric spaces are homogeneous and therefore satisfy all the above properties. Additionally, if $\cM$ is a symmetric space, then for any $\alpha \neq \tilde{\alpha} \in \cM$, there exists an isometry $F: \cM \to \cM$ such that $F(\alpha) = \tilde{\alpha}$ and $F \circ F$ is the identity. For further properties of symmetric spaces, see \cite{helgason1979differential}.
\end{proposition}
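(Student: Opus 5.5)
The common idea is to prove each property at a single point $x_0\in\cM$. The \emph{transitivity} of $\Iso(\cM)$ then carries it to every point, because any quantity built from $g^{\cM}$ is preserved by isometries. I would take the items in the order listed.

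\textbf{Completeness.} Fix $x_0$ and choose $\varepsilon>0$ so that every geodesic starting at $x_0$ with unit speed is defined on $(-\varepsilon,\varepsilon)$; this holds by ODE theory. Given any $y$, pick $F\in\Iso(\cM)$ with $F(x_0)=y$. Since $F$ maps geodesics to geodesics and $dF_{x_0}$ is a linear isometry onto $T_y\cM$, every unit-speed geodesic from $y$ is also defined on $(-\varepsilon,\varepsilon)$. The bound $\varepsilon$ is therefore uniform over $\cM$. Any geodesic defined on $[0,T)$ can be restarted at time $T-\varepsilon/2$, so it extends past $T$. Hence all geodesics extend indefinitely, and the Hopf--Rinow theorem gives completeness. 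I expect this uniform-$\varepsilon$ step to be the main point that needs care.

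\textbf{Curvature bounds.} The sectional curvature is a continuous function on the Grassmann bundle of $2$-planes. At $x_0$ the $2$-planes in $T_{x_0}\cM$ form a compact Grassmannian, so $K$ is bounded there by some $[\kappa_{\min},\kappa_{\max}]$. Isometries satisfy $K(dF_{x_0}\Pi)=K(\Pi)$, and $dF_{x_0}$ maps the planes at $x_0$ onto those at $y=F(x_0)$. The same bounds therefore hold at every point.

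\textbf{Injectivity radius.} The injectivity radius is isometry invariant: an isometry maps $\mathrm{Cut}(x)$ onto $\mathrm{Cut}(F(x))$, so $\mathrm{inj}(x)=\mathrm{inj}(F(x))$. Transitivity makes $\mathrm{inj}(x)$ constant on $\cM$. Its value $\mathrm{inj}(x_0)$ is positive, because $\Exp_{x_0}$ is a diffeomorphism on a small ball and $\mathrm{Cut}(x_0)$ is closed and does not contain $x_0$.

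\textbf{Unique minimizing geodesics.} Set $c_0=\mathrm{inj}(\cM)$. If $d_g(\alpha_1,\alpha_2)<c_0$, then $\alpha_2\in\Exp_{\alpha_1}(\seg^0(\alpha_1))$. Standard cut-locus theory then says that $\Log_{\alpha_1}(\alpha_2)$ determines the unique minimizing geodesic between the two points.

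\textbf{Invariance of integrals.} An isometry $F$ pulls back $g^{\cM}$ to itself. In local charts, $\sqrt{G}$ therefore transforms exactly like the Jacobian, so $F_\#\dvol=\dvol$. The change-of-variables formula, applied through a partition of unity, then gives $\int_\cM f\,\dvol=\int_\cM f\circ F\,\dvol$.

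\textbf{Symmetric spaces.} For symmetric spaces the extra statements come from the geodesic symmetries. Completeness follows as above, since $s_x$ reverses geodesics through $x$ and thereby lets geodesics be extended. Given $\alpha\neq\tilde\alpha$, take a minimizing geodesic $\gamma$ from $\alpha$ to $\tilde\alpha$, assuming $\cM$ is connected, and let $p$ be its midpoint. The symmetry $s_p$ reverses $\gamma$, so $s_p(\alpha)=\tilde\alpha$, and $s_p\circ s_p=\id$ by definition. This produces the required involutive isometry and, in particular, shows transitivity. Symmetric spaces are therefore homogeneous, and the preceding properties apply to them.
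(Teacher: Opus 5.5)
The paper gives no proof of this proposition. It records the five properties and the symmetric-space facts as standard and points to Helgason. Your proposal is correct and supplies the standard self-contained arguments, organized around one idea: each property is a pointwise fact at $x_0$, which transitivity then transports to all of $\cM$. The pointwise facts are local existence of geodesics, compactness of the Grassmannian of $2$-planes in $T_{x_0}\cM$, and $\mathrm{inj}(x_0)>0$. For symmetric spaces you add the midpoint trick $s_p(\alpha)=\tilde\alpha$.

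What this buys over the citation is that it shows exactly where homogeneity, rather than mere completeness, is needed. The uniform $\varepsilon$, the global curvature bounds and the positivity of $\inf_x\mathrm{inj}(x)$ can all fail on a general complete manifold. They are precisely what Lemma~\ref{lma:c1} and the volume-comparison and unique-geodesic steps in the proof of Theorem~\ref{thm:apd} rely on.

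A few points deserve a sentence each.
\begin{itemize}
\item Connectedness is needed for Hopf--Rinow in the homogeneous case as well, not only in the symmetric part.
\item In the symmetric case, ``as above'' cannot mean your uniform-$\varepsilon$ argument, since that uses transitivity, which you have not yet established there. What works is the reflection trick. If $\gamma$ is defined on $[0,T)$ and $T/2<t_0<T$, then $t\mapsto s_{\gamma(t_0)}(\gamma(2t_0-t))$ is a geodesic with the same position and velocity as $\gamma$ at $t_0$. It therefore extends $\gamma$ to $[0,2t_0]$ with $2t_0>T$. With this ordering (completeness first, then minimizing geodesics and midpoints) your symmetric-space argument is non-circular.
\item For uniqueness of minimizing geodesics, the fact to cite is this: if two distinct minimizing geodesics join $\alpha_1$ to $\alpha_2$, then $\alpha_2\in\mathrm{Cut}(\alpha_1)$. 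That contradicts $d_g(\alpha_1,\alpha_2)<\mathrm{inj}(\alpha_1)$.
\item If $\mathrm{inj}(\cM)=\infty$, any finite $c_0>0$ will do.
\end{itemize}
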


\subsubsection{Example}\label{sec:apd-example}

This section presents some examples of Riemannian homogeneous spaces as well as their applications. For more mathematical examples of homogeneous or symmetric spaces, one may refer to \cite{helgason1979differential}.

\begin{example}
The following Riemannian manifolds are Riemannian symmetric spaces (RSS):
\begin{itemize}[leftmargin=*]
    \item Simply connected complete Riemannian manifolds with constant curvature are RSS:
    \begin{enumerate}[leftmargin=*]
        \item Euclidean spaces are simply connected complete Riemannian manifolds with zero curvature. The space of symmetric positive definite (SPD) matrices, endowed with the log-Euclidean \citep{arsigny2007geometric}, forms a Euclidean space. This space is useful in medical imaging \citep{arsigny2006log}, brain connectivity \citep{you2021re}, and computer vision \citep{huang2015log}.
        \item Spheres are simply connected complete Riemannian manifolds with positive constant curvature. Spherical data can model $\ell_2$-normalized feature vectors and has been widely studied \citep{mardia2000directional, pewsey2021recent}.
        \item Hyperbolic spaces play a fundamental role in relativity \citep{jennings2012modern}. They are also used to model complex networks and word embeddings, as they can naturally represent exponentially growing structures \citep{nickel2017poincare, krioukov2010hyperbolic}.
    \end{enumerate}
    \item Real projective spaces $\mathbb{R}P^m$ and complex projective spaces $\mathbb{C}P^m$ are RSS. Real projective spaces are useful for axial data analysis \citep{bhattacharya2005large}, while complex projective spaces (shape spaces) are well-suited for modeling 2D contour shapes \citep{kendall1984shape}, with applications in archaeology, medical imaging, and biology \citep{dryden2016statistical}.
    \item Grassmann manifolds, which represent linear subspaces in $\mathbb{R}^m$, are RSS and are widely applied in Riemannian optimization \citep{edelman1998geometry} and computer vision \citep{turaga2008statistical}.
    \item The space of SPD matrices, endowed with the affine-invariant metric, is a symmetric space \citep{moakher2005differential, said2017riemannian, terras2012harmonic} and finds applications in medical imaging.
    \item Product spaces of RSS are also RSS. For example, the space of medial atoms, $\cM = \mathbb{R}^3 \times \mathbb{R}^+ \times \mathbb{S}^2 \times \mathbb{S}^2$, is useful in the study of 3D geometric objects \citep{shi2012intrinsic}. Another example is the torus \citep{klein2020torus}.
\end{itemize}
\end{example}

\begin{remark}
Some homogeneous spaces are not symmetric, represented by Stiefel manifolds. Stiefel manifolds model frames in $\mathbb{R}^m$ and are widely applied in Riemannian optimization and computer vision \citep{edelman1998geometry, absil2008optimization, turaga2008statistical}.
\end{remark}


\subsection{Empirical process theory}\label{sec:empirical-process-theory}


This section reviews two entropy concepts that are useful in empirical process theory, including metric entropy, applicable to all metric spaces, and bracketing entropy, relevant to functional spaces.

\begin{definition}[Metric entropy]
 Let $(\cX, d)$ be a metric space, and let $\cA \subseteq \cX$ be a compact set. An $\epsilon$-net of $\cA$ is a finite subset $\{x_1, \ldots, x_N\} \subseteq \cX$ such that for any $y \in \cA$, there exists an $i \in \{1, \ldots, N\}$ satisfying  $d(x_i, y) \leq \epsilon$.  The $\epsilon$-covering number $\cN(\epsilon, \cA, d)$ is the cardinality of the smallest such $\epsilon$-net, and the $\epsilon$-metric entropy is given by
\$
H(\epsilon, \cA, d) = \log \cN(\epsilon, \cA, d).
\$
\end{definition}

\begin{definition}[Bracketing entropy]
Let $\cF$ be a functional space consisting of functions $f: \cM \to \RR$, equipped with a distance metric $d(\cdot, \cdot)$. Given two functions $\ell, u \in \cF$ such that  $\ell \leq u$, the bracket $[\ell, u]$ is the set of functions $f \in \cF$ satisfying  $\ell \leq f \leq u$. An $\epsilon$-bracket is a bracket $[\ell, u]$ where  $d(\ell, u) \leq \epsilon$. Let $\cG \subseteq \cF$ be a functional class. The $\epsilon$-bracketing number $ \cN_B(\epsilon, \cG, d) $ is the minimal number of $\epsilon$-brackets required to cover $\cG$, and the bracketing entropy is
\$
H_{B}(\epsilon,\cG,d)=\log\cN_{B}(\epsilon,\cG,d).
\$
\end{definition}

In empirical process theory~\citep{van1996weak,geer2000empirical}, it is crucial to estimate the entropy of the studied statistical model in order to derive the convergence rate of $M$-estimators. In our paper, we first estimate entropies of the studied models and then employ empirical process theory to determine the convergence rate of the maximum likelihood estimate.

\section{Proofs for Section \ref{sec:3}}

\subsection{Proof of Proposition \ref{prop:3.1}}

\begin{proof}
    We first establish property (i) by discussing two cases.
    \begin{itemize}
        \item When $\cM$ is compact, any bounded and measurable function on $\cM$ is integrable. Therefore, the function $\tilde f$ is integrable on $\cM$.

        \item When $\cM$ is noncompact, we use Theorem \ref{thm:a7} to prove the result. As $\tilde f$ is nonnegative and measurable, to prove its integrability, it suffices to show that
        \$
        \int_{\cM}\tilde f(x;\alpha,\sigma)\dvol(x)<\infty.
        \$
        To prove this, we use the polar coordinate chart at $\alpha$, and rewrite the integral as
        \$
        \int_{\cM}\tilde f(x;\alpha,\sigma)\dvol(x)=\int_{T_\alpha\cM}\tilde f^{\flat}\lambda(r,\Theta)drd\Theta=\int_{T_\alpha\cM}e^{-r^2/2\sigma^2}\lambda(r,\Theta)drd\Theta,
        \$
        where $\tilde f^{\flat}=\tilde f\circ \Exp_\alpha$ and the second equality uses the definition of $\tilde f$. The sectional curvatures of $\cM$ are greater than $\kappa_{\min}$, and we assume $\kappa_{\min}<0$ without loss of generality. It then follows from Theorem \ref{thm:a7} that $\lambda(r,\Theta)\leq \sn_{\kappa_{\min}}^{m-1}(r)$. Then we have
        \$
        \int_{\cM}\tilde f(x;\alpha,\sigma)\dvol(x)&\leq \int_{T_\alpha\cM}e^{-r^2/2\sigma^2}\sn_{\kappa_{\min}}^{m-1}(r)drd\Theta\\
        &=\vol(\SSS^{m-1})\int_0^{\infty}e^{-r^2/2\sigma^2}\sn_{\kappa_{\min}}^{m-1}(r)dr<\infty.
        \$
        Here $\vol(\SSS^{m-1})$ is the volume of the unit $(m-1)$-sphere.
    \end{itemize}
    Now we prove property (ii). Fix any $\alpha\in\cM$. Then for any $\tilde \alpha\in\cM$, there exists an isometry $F\in\Iso(\cM)$ such that $F(\alpha)=\tilde\alpha$, due to the homogeneity of $\cM$. Consequently,
    \$
    Z(\tilde\alpha,\sigma)&=\int_{\cM}e^{-d_g^2(x,\tilde\alpha)/2\sigma^2}\dvol(x)\\
    &=\int_{\cM}e^{-d_g^2(x,F(\alpha))/2\sigma^2}\dvol(x)\\
    &=\int_{\cM}e^{-d_g^2(F^{-1}(x),\alpha)/2\sigma^2}\dvol(x)\\
    &=\int_{\cM}e^{-d_g^2(x,\alpha)/2\sigma^2}\dvol(x)=Z(\alpha,\sigma).
    \$
    where the third equality uses that $d_g(x,F(\alpha))=d_g(F^{-1}(x),\alpha)$ when $F$ is an isometry and the fourth inequality uses the property of an isometry on Riemannian homogeneous spaces (Proposition \ref{prop:2.1}). Therefore, the normalizing constant is independent of the location $\alpha$. Property (iii) is an immediate result of property (ii).

    To prove property (iv), it suffices to show that if
    \#\label{equ:b1-1}
    \int_{\cM} d_g^p(x,\tilde \alpha)f(x;\alpha,\sigma)\dvol(x)\leq \int_{\cM}d_g^p(x,b)f(x;\alpha,\sigma)\dvol(x),\quad \forall b\in\cM,
    \#
    then $\tilde\alpha=\alpha$. Suppose on the contrary that $\tilde\alpha$ satisfies \eqref{equ:b1-1} but $\tilde\alpha\neq\alpha$. Then we have
    \$
    \int_{\cM} d_g^p(x,\tilde \alpha)f(x;\alpha,\sigma)\dvol(x)\leq \int_{\cM}d_g^p(x,\alpha)f(x;\alpha,\sigma)\dvol(x).
    \$
    Ignoring the normalizing constant $Z(\phi)$, we have
    \#\label{equ:b2-1}
    I(\tilde\alpha,\alpha)\leq I(\alpha,\alpha)<\infty,
    \#
    where
    \$
    I(a,b)=\int_{\cM}d_g^p(x,a)e^{-d_g^2(x,b)/2\sigma^2}\dvol(x),\quad \forall a,b\in\cM.
    \$
    By Proposition \ref{prop:2.1}, we can show that for any isometry $F\in\Iso(\cM)$,
    \#\label{equ:b3}
    I(a,b)=I(F(a),F(b)),\quad\forall a,b\in\cM.
    \#
    Since $\cM$ is a RSS and $\tilde\alpha\neq \alpha$, we can find an isometry $F\in\Iso(\cM)$ such that $F(\alpha)=\tilde\alpha$ and $F\circ F$ is the identity, as a consequence of Proposition \ref{prop:2.1}. Using this $F$ in \eqref{equ:b2-1} and using \eqref{equ:b3}, we have
    \$
    I(\alpha,\tilde\alpha)=I(F(\tilde\alpha),F(\alpha))=I(\tilde\alpha,\alpha)\leq I(\alpha,\alpha)=I(\tilde\alpha,\alpha).
    \$
    Therefore,
    \$
    I(\alpha,\tilde\alpha)+I(\tilde\alpha,\alpha)\leq I(\alpha,\alpha)+I(\tilde\alpha,\tilde\alpha).
    \$
    Equivalently, we have
    \#\label{equ:b4}
    \int_{\cM}D(x;\alpha,\tilde\alpha)\dvol(x)\leq 0,
    \#
    where
    \$
    D(x;\alpha,\tilde\alpha)=D_1(d_g(x,\alpha),d_g(x,\tilde\alpha)),
    \$
    and
    \$
    D_1(u,v)=u^pe^{-v^2/2\sigma^2}+v^pe^{-u^2/2\sigma^2}-u^pe^{-u^2/2\sigma^2}-v^pe^{-v^2/2\sigma^2}.
    \$
    By calculation, we find that
    \#\label{equ:b5}
    D_1(u,v)=(u^p-v^p)\cdot (e^{-v^2/2\sigma^2}-e^{-u^2/2\sigma^2})\geq 0,
    \#
    where we use the fact that $\phi$ is an increasing function. Notice that the equality in \eqref{equ:b5} holds if and only if $u=v$. Combining this with \eqref{equ:b4}, we obtain
    \$
    d_g(x,\alpha)=d_g(x,\tilde\alpha),\quad\forall x\in\cM,
    \$
    where we use the continuity of the distance function $d_g(\cdot,\cdot)$. This implies $\tilde\alpha=\alpha$.

    Now we prove the fifth statement. Let $(\cN,g^\cN)$ be a Riemannian homogeneous space, and denote by $d_{\cM\times \cN}$, $d_{\cM}$, and $d_{\cN}$ the distance function on $\cM\times\cN$, $\cM$, and $\cN$, respectively. The Riemannian Gaussian distribution $RN((\alpha,\beta),\sigma)$ with $\alpha\in\cM,$ $\beta\in\cN$, and $\sigma>0$ has a density proportional to
    \#
    \tilde f((\alpha,\beta),\sigma)=e^{-\frac{d^2_{\cM\times\cN}((x,y),(\alpha,\beta))}{2\sigma^2}}\overset{(i)}{=}e^{-\frac{d^2_{\cM}(x,\alpha)+d^2_{\cN}(y,\beta)}{2\sigma^2}}=\tilde f(\alpha,\sigma)\cdot\tilde f(\beta,\alpha),\label{equ:characterization}
    \#
    where (i) uses that $d^2_{\cM\times\cN}((x,y),(\alpha,\beta))=d^2_{\cM}(x,\alpha)+d^2_{\cN}(y,\beta)$ on a product manifold. This immediately implies the fifth property.
\end{proof}


\subsection{Proof of Proposition \ref{prop:3.2-normalizing}}

\begin{proof}
    The joint distribution $f(x,z;\alpha,V,\sigma)$ is given by
    \$
    f(x,z;\alpha,V,\sigma)=p(z)\cdot \frac{1}{Z(\sigma)}\exp\left\{-\frac{d_g^2(x,\,\Exp_{\alpha}(Vz))}{2\sigma^2}\right\}.
    \$
    Integrating out the latent variable $z$, we obtain the density function \eqref{equ:3.5}.
\end{proof}



\section{Geometric lemma}\label{sec:c31}

This section presents a geometric lemma essential to our theoretical analysis. It characterizes the geodesic deviation over a Riemannian manifold with bounded sectional curvatures. It establishes that the deviation between distinct geodesics grows at most exponentially with time, with an order of $e^{ct}$ for some constant $c$. The proof of this lemma leverages the Jacobi field theory.

Specifically, we consider
a complete, connected Riemannian manifold $\cM$ with sectional curvature bounded in $[\kappa_{\min}, \kappa_{\max}]$. Given two points $\alpha_1$ and $\alpha_2 \in \mathcal{M}$ and two tangent vectors $v_1 \in T_{\alpha_1} \mathcal{M}$ and $v_2 \in T_{\alpha_2} \mathcal{M}$, we define the following measure of deviation between the tangent pairs $(\alpha_1, v_1)$ and $(\alpha_2, v_2)$:
\begin{equation} \label{equ:dTM}
d_{T\mathcal{M}}^2((\alpha_1, v_1), (\alpha_2, v_2)) = d_g^2(\alpha_1, \alpha_2) + \left\|v_1 - \mathcal{P}_{\alpha_2 \to \alpha_1} v_2\right\|^2,
\end{equation}
where $\mathcal{P}_{\alpha_2 \to \alpha_1}: T_{\alpha_2} \mathcal{M} \to T_{\alpha_1} \mathcal{M}$ denotes the \textit{parallel transport} along a minimizing geodesic connecting $\alpha_2$ to $\alpha_1$. This defines the initial deviation between the geodesics $\Exp_{\alpha_1}(tv_1)$ and $\Exp_{\alpha_2}(tv_2)$.

\begin{remark}
Definition~\eqref{equ:dTM} is well-defined when $\alpha_1$ and $\alpha_2$ are sufficiently close to ensure the existence of a unique minimizing geodesic between them. In cases where multiple minimizing geodesics exist, we choose the parallel transport $\mathcal{P}_{\alpha_2 \to \alpha_1}$ that minimizes $\|v_1 - \mathcal{P}_{\alpha_2 \to \alpha_1} v_2\|$. Although $d_{T\mathcal{M}}$ does \textbf{not} define a true metric on the tangent bundle $T\mathcal{M}$---as it may violate the triangle inequality---it remains useful for stating and analyzing the following lemma.
\end{remark}

Lemma~\ref{lma:geo}  shows that the  deviation between the geodesics $\Exp_{\alpha_1}(tv_1)$ and $\Exp_{\alpha_2}(tv_2)$,
\$
d_g(\Exp_{\alpha_1}(t v_1), \Exp_{\alpha_2}(t v_2)),
\$
is uniformly bounded for all $t \in [0, \infty)$ by 
$d_{T\mathcal{M}}((\alpha_1, v_1), (\alpha_2, v_2))$ multiplying an exponential term $e^{ct}$, where the constant $c > 0$ depends on the curvature bounds. This exponential growth reflects the curved geometry of the manifold and stands in contrast to the {\it linear growth} observed in Euclidean spaces.

\begin{lemma}\label{lma:geo}

Suppose $\cM$ is a complete connected Riemannian manifold with sectional curvatures constrained within $[-\kappa,\kappa]$ for some $\kappa>0$. Consider $(\alpha_1,v_1)$ and $(\alpha_2,v_2)\in T\cM$, where $\alpha_1$ and $\alpha_2$ are connected by a unique minimizing geodesic. We further assume that $\max\{\norm{v_1},\norm{v_2}\}\leq A$ for some constant $A>0$. Then, we have
\$
d_g(\Exp_{\alpha_1}(tv_1),\Exp_{\alpha_2}(tv_2))\leq d_{T\cM}((\alpha_1,v_1),(\alpha_2,v_2))\cdot e^{(1+\kappa A^2)t/2}, \quad \forall t\geq 0.
\$

\end{lemma}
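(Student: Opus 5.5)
We will build a one-parameter family of geodesics that joins the two geodesics in the statement. The deviation will then be controlled by the length of the variation field, which is a Jacobi field. Let $\gamma:[0,1]\to\cM$ be the unique minimizing geodesic from $\alpha_1$ to $\alpha_2$, so $\|\gamma'\|=d_g(\alpha_1,\alpha_2)$. Let $w(s)\in T_{\gamma(s)}\cM$ be the linear interpolation
\[
w(s)=\cP_{\alpha_1\to\gamma(s)}\big((1-s)v_1+s\,\cP_{\alpha_2\to\alpha_1}v_2\big),
\]
with parallel transport taken along $\gamma$. Then $w(0)=v_1$ and $w(1)=v_2$, and $\nabla_s w$ is parallel with constant norm $\|v_1-\cP_{\alpha_2\to\alpha_1}v_2\|$. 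Since parallel transport is an isometry, $\|w(s)\|\le A$ for all $s$. Define $\Gamma(t,s)=\Exp_{\gamma(s)}(t\,w(s))$. For fixed $s$ this is a geodesic in $t$, so $J_s(t)=\partial_s\Gamma(t,s)$ is a Jacobi field along $t\mapsto\Gamma(t,s)$, with $J_s(0)=\gamma'(s)$ and $\nabla_tJ_s(0)=\nabla_s w(s)$. The last identity uses $\nabla_tS=\nabla_sT$ at $t=0$. The curve $s\mapsto\Gamma(t,s)$ joins the two endpoints, so
\[
d_g(\Exp_{\alpha_1}(tv_1),\Exp_{\alpha_2}(tv_2))\le\int_0^1\|J_s(t)\|\,ds.
\]

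Next we bound the Jacobi field by an energy estimate. Fix $s$, write $J=J_s$ and $T=\partial_t\Gamma$, so $\|T\|=\|w(s)\|\le A$. Set
\[
E(t)=\|J\|^2+\|\nabla_tJ\|^2.
\]
Using the Jacobi equation $\nabla_t\nabla_tJ=R(T,J)T$, we get
\[
E'(t)=2\langle J,\nabla_tJ\rangle+2\langle R(T,J)T,\nabla_tJ\rangle.
\]
The first term is at most $\|J\|^2+\|\nabla_tJ\|^2$. For the second, the bound $|\sec|\le\kappa$ gives an operator bound $\|R(T,\cdot)T\|\le C\kappa\|T\|^2$. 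For the Jacobi operator $X\mapsto R(T,X)T$ this holds with $C=1$: the operator is symmetric, and its quadratic form on $T^\perp$ is $\sec\cdot\|T\|^2\|X\|^2$, whose values lie in $[-\kappa,\kappa]\|T\|^2\|X\|^2$. We therefore take $\|R(T,J)T\|\le\kappa A^2\|J\|$. This gives $2\langle R(T,J)T,\nabla_tJ\rangle\le\kappa A^2(\|J\|^2+\|\nabla_tJ\|^2)$. Hence $E'\le(1+\kappa A^2)E$, and Grönwall's inequality yields
\[
\|J_s(t)\|\le\sqrt{E(t)}\le\sqrt{E(0)}\,e^{(1+\kappa A^2)t/2}.
\]
Here $E(0)=\|\gamma'(s)\|^2+\|\nabla_sw(s)\|^2=d_g^2(\alpha_1,\alpha_2)+\|v_1-\cP_{\alpha_2\to\alpha_1}v_2\|^2=d^2_{T\cM}$, independently of $s$. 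Integrating over $s\in[0,1]$ gives the claim.

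The main obstacle is the curvature operator bound. We need the normal component of $J$ to pick up the factor $\kappa\|T\|^2$, and the component of $J$ tangent to $T$ contributes nothing because $R(T,T)T=0$. Symmetry of $X\mapsto R(T,X)T$ on $T^\perp$ makes its norm equal to its largest eigenvalue in absolute value, which is at most $\kappa\|T\|^2$; I will verify this carefully.

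A minor point is justifying $\nabla_tJ(0)=\nabla_sw$. This follows from torsion-freeness, because $\partial_t\Gamma(0,s)=w(s)$.

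Completeness of $\cM$ ensures that $\Gamma$ is defined for all $t$. Uniqueness of the minimizing geodesic makes $\cP_{\alpha_2\to\alpha_1}$, and hence $d_{T\cM}$, unambiguous.
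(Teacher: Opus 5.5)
Your proposal is correct and follows essentially the same route as the paper. Both use the variation $\Gamma(t,s)=\Exp_{\gamma(s)}(t\,w(s))$ with linearly interpolated parallel-transported velocities, apply Gr\"onwall to the energy $\|J\|^2+\|\nabla_tJ\|^2$, and then integrate $\|J_s(t)\|$ over $s\in[0,1]$. Your argument via the symmetric Jacobi operator, giving $\|R(T,J)T\|\le\kappa\|T\|^2\|J\|$, fills in a step that the paper only asserts.
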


The proof of Lemma~\ref{lma:geo} relies on the careful construction of a Jacobi field connecting the geodesics $\Exp_{\alpha_1}(t v_1)$ and $\Exp_{\alpha_2}(t v_2)$. By applying the Jacobi field equation, we characterize the dynamics of this field in terms of the manifold’s curvature. This characterization enables us to derive a bound on the geodesic deviation rate. The complete argument is presented in Appendix \ref{sec:geo-dev}.

\section{MGFA with Riemannian radial distributions}

This section provides the proofs corresponding to Section \ref{sec:4}. Instead of limiting our analysis solely  to Riemannian Gaussian distributions, we consider MGFA within the broader setting of Riemannian radial distributions. This general framework includes the Riemannian Gaussian, Riemannian Laplacian, and von Mises-Fisher distributions as special cases, thereby unifying the analysis across a wide class of distributions. Specifically, we establish Theorem \ref{thm:4.1} and Theorem \ref{corollary:1} within this generalized context. Moreover, we provide several   examples to highlight the versatility and effectiveness of our approach.

This section proceeds as follows. In Section \ref{sec:app-model}, we review Riemannian radial distributions along with the corresponding MGFA model. In Section \ref{sec:app-theory}, we present the main theorem and discuss several examples, including the proofs of Theorem \ref{thm:4.1} and Theorem \ref{corollary:1} from the main text.
In Section \ref{sec:app-proof}, we provide the complete proof of the main theorem stated in Section \ref{sec:app-theory}.

\subsection{Model}\label{sec:app-model}

Riemannian radial distributions unify various distributions, including Riemannian Gaussian, von Mises-Fisher, and Riemannian Laplacian distributions, into a single coherent framework \citep{chen2024riemannian}. These distributions possess favorable properties, making them suitable for modeling isotropic noise on manifolds. In this section, we first review this class of distributions and then integrate them into the MGFA model. Additionally, we discuss the advantages provided by this modeling approach.

\subsubsection{Riemannian radial distributions}

Let $(\cM,g^{\cM})$ be a Riemannian homogeneous space. We shall study the following parametric family of distributions on $\cM$:
\#\label{equ:f-phi}
\chi_{\phi}(x;\alpha,\beta)=\frac{1}{Z(\beta,\phi)}e^{-\beta\phi(d_g(x,\alpha))},\quad \forall x\in\cM,
\#
where $\phi:[0,\infty)\to[0,\infty)$ is an predefined increasing function, $\alpha\in\cM$ is the base, $\beta$ is the temperature parameter, and $Z(\beta,\phi)=\int_{\cM}e^{-\beta\phi(d_g(x,\alpha))}\dvol(x)$ is the normalizing constant. We refer to it as a Riemannian radial distribution as its density only relies on the distance from the base. The function $\phi$ controls the dispersion pattern of the distribution and is referred to as the dispersion function. For example, we can regard $\phi(r)=r^2$ and $\beta=1/2\sigma^2$ in the Riemannian Gaussian distribution $RN(\alpha,\sigma)$. We can  regard $\phi(r)=1-\cos(r)$ and $\beta=\kappa$ in the von Mises-Fisher distribution $f_{\rm vMF}(x\mid \alpha,\kappa)$ on spheres \citep{banerjee2005clustering}. In addition, Riemannian Laplacian distributions correspond to the case $\phi(r)=r$. Different dispersion patterns allow us to model different tail distributions.

Riemannian radial distributions possess favorable properties. First, the homogeneity of $\cM$ implies that the normalizing constant $Z(\beta,\phi)$ is independent of the location $\alpha$, and  omitting this parameter in $Z$ is harmless. As a consequence, the MLE of the base $\alpha$ is an M-estimator defined as
\$
\hat\alpha=\argmin_{\alpha}\sum_{i=1}^n\phi(d_g(x_i,\alpha)),
\$
where $\{x_i\}$ are independent samples drawn from $\chi_{\phi}(x;\alpha,\beta)$.
Moreover, when $\cM$ is a Riemannian symmetric space and $\phi$ is strictly increasing, any $L^p$ center of mass ($0<p<\infty$) with respect to $\chi_{\phi}(x;\alpha,\beta)$ is uniquely given by the location parameter $\alpha$ \citep{chen2024riemannian}. Particularly, the \Frechet mean is $L^2$ center of mass and thus equals to $\alpha$. Consequently, $\alpha$ models the center of the isotropic distribution $\chi_{\phi}(x;\alpha,\beta)$.

\subsubsection{MGFA with Riemannian radial distributions}

We now integrate  Riemannian radial distributions into the MGFA model.
First, we recall  the geodesic factor model:
\$
x=\Exp(\Exp(\alpha,Vz),\epsilon),\quad\alpha\in\cM,V=(v_1,\ldots,v_q),v_i\in T_{\alpha}\cM,
\$
where $\alpha$ represents the base point, $V$ is the  loading matrix, $z$ denotes the latent factor, and $\epsilon$ accounts for the noise term. In Section \ref{sec:3}, we modeled the noise distribution $\Exp(\cdot,\epsilon)$ using Riemannian Gaussian distributions $RN(\cdot,\sigma)$. Here, we extend this framework by employing Riemannian radial distributions $\chi_{\phi}(x;\cdot,\beta)$, where $\phi$ is an increasing function known {\it a priori}. By integrating out the latent factor $z$, we obtain the density function of this geodesic factor regression model:
\#\label{equ:f-f-phi}
f_{\phi}(x;\alpha,V,\beta)=\frac{1}{Z(\beta,\phi)}\int_{z\in\RR^q}e^{-\beta\phi(d_g(x,\Exp_{\alpha}(Vz)))}p(z)dz,
\#
where $p(z)=\frac{1}{(2\pi)^{q/2}}e^{-\frac{\norm{z}^2}{2}}$ is the density of $N(0,I_q)$ and $Z(\beta,\phi)$ is the normalizing constant in \eqref{equ:f-phi}.  We can then formulate MGFA as a mixture of these models with Riemannian radial distributions:
\#\label{equ:f-phi-mixture}
f_{\phi}(x;\Upsilon)=\sum_{k=1}^K\omega_kf_{\phi}(x;\alpha_k,V_k,\beta_k),
\#
where $\omega=(\omega_1,\ldots,\omega_k)$ is the weight vector with $\omega_k\geq0$ and $\sum_k\omega_k=1$, $f_{\phi}(x;\alpha_k,V_k,\beta_k)$ is the  density function in \eqref{equ:f-f-phi}, and $\Upsilon=\{\omega_k,\alpha_k,V_k,\beta_k\}_{k=1}^K$ is the set of parameters. This formulation enables us to handle heterogeneous datasets while accommodating diverse noise distributions. For instance, MGFA with von Mises-Fisher distributions is better-suited for spherical data clustering when the noise behaves more like von Mises-Fisher distributions than Riemannian Gaussian distributions.

\subsection{Theory}\label{sec:app-theory}


We now present the main theoretical results for MGFA with Riemannian radial distributions, followed by a discussion of specific examples, including Riemannian Gaussian, von Mises-Fisher, and Riemannian Laplacian distributions. The primary result is an entropy estimation theorem for the function class
\#\label{equ:F-phi-Xi}
\cF_{\Xi}^\phi=\{f_{\phi}(x;\Upsilon)\mid\Upsilon\in\Xi\},
\#
where  the feasible parameter set $\Xi$ is defined as
\#
\Xi=\{\Upsilon=\{\alpha_k,V_k,\beta_k,\omega_k\}_{k=1}^K\mid\ &\omega_k\geq0,\sum_k\omega_k=1, \alpha_k\in\cB_{\cM}(\alpha^*,D),\beta_k\in[\beta_{\min},\beta_{\max}],\notag\\
&V_{k}=(v_{k1},\ldots,v_{kq}),\norm{V_k}_{\rm F}\leq A\},\label{equ:xi-apd}
\#
and $\alpha^*,D,\beta_{\min}>0,\beta_{\max},A$ are constants.
The parameter space $\Xi$ incorporates several constraints: the weights $\omega_k$ are non-negative and sum to unity, the location parameters $\alpha_k$ lie within a bounded geodesic ball $\cB_{\cM}(\alpha^*,D)$ in the manifold $\cM$, the scale parameters $\beta_k$ are bounded between positive constants $\beta_{\min}$ and $\beta_{\max}$, and the loading matrices $V_k$ have bounded Frobenius norms. Under mild conditions on $\phi$, we establish upper bounds on the bracketing entropies of $\cF_{\Xi}^{\phi}$ in Theorem \ref{thm:apd}.


\begin{condition}
\label{cond:phi}
    Let $\cM$ be an $m$-dimensional Riemannian homogeneous space  with sectional curvatures bounded within $[\kappa_{\min},\kappa_{\max}]$. Let $r_{\cM}=\sup_{x,y}d_g(x,y)$ be the maximum radius of $\cM$. We assume $\phi,\beta_{\min}$ and $\beta_{\max}$ satisfy the following conditions.
    \begin{enumerate}[leftmargin=*]
        \item $\phi$ is an increasing, nonnegative, and continuous function on $[0,r_{\cM}]$, where $[0,r_{\cM}]$ stands for $[0,\infty)$ when $r_{\cM}=\infty$. Also, assume $\phi(r)$ is finite for all $r<\infty$.


        \item  The function $\rho(r)=e^{-\beta\phi(r)}$ has a Lipschitz constant $L$ over the interval $[0,r_{\cM}]$, where $L$ is independent of $\beta$ when $\beta\in[\beta_{\min},\beta_{\max}]$.

        \item When $\cM$ is noncompact, we assume $\kappa_{\min}<0$ and impose the following conditions on $\phi$.
        \begin{itemize}[leftmargin=*]
            \item The following functions
            \$
            h(r)=e^{-\beta_{\min}\phi(r)}\sn_{\kappa_{\min}}^{m-1}(r),\quad h_1(r)=e^{-\beta_{\min}\phi(r)}\phi(r)\sn_{\kappa_{\min}}^{m-1}(r)
            \$
            are integrable over $[0,\infty)$, where ${\rm sn}_{\kappa_{\min}}(\cdot)$ is given by \eqref{equ:sn}.


        \item  
        For $W=1/(8(m-1)\sqrt{-\kappa_{\min}})$, it holds that
        \$
        \epsilon^{-1}\int_{W\log(1/\epsilon)}^{\infty}e^{-\beta_{\min}\phi(r/4)}\sn_{\kappa_{\min}}^{m-1}(r)dr\to 0
        \$
        as $\epsilon$ tends to zero.

        \end{itemize}

    \end{enumerate}
\end{condition}


\begin{theorem}\label{thm:apd}
    Consider $\cM,\phi,\beta_{\min},\beta_{\max}$ satisfying Condition \ref{cond:phi}.
    Let $\cF_{\Xi}^{\phi}$ be the function class defined in \eqref{equ:F-phi-Xi} with $\Xi$ defined in \eqref{equ:xi-apd}. Then for all sufficiently small $\epsilon$, we have
    \$
    \log \cN_{B}(\epsilon,\cF_{\Xi}^{\phi},d_1)&\leq C_*Km\max\{q,1\}\log(1/\epsilon),\\
    \log\cN_B(\epsilon,\cF^\phi_\Xi,d_h)&\leq C_*Km\max\{q,1\}\log(1/\epsilon),
    \$
        where $d_1$ and $d_h$ represent the $L^1$ and Hellinger distances, and $C_*$ is a constant independent of $m,K,q,\epsilon$.
\end{theorem}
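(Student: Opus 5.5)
We follow the three-step scheme of the proof sketch of Theorem~\ref{thm:4.1}, now for a general dispersion function $\phi$. \textbf{Step 1 (parameter net).} We cover each factor of $\Xi$ separately. The simplex of weights is covered in $\ell_1$ at scale $\delta$ by $(C/\delta)^{K}$ points. Each $\beta_k\in[\beta_{\min},\beta_{\max}]$ is covered by $C/\delta$ points. For the locations $\alpha_k\in\cB_\cM(\alpha^*,D)$ we take a maximal $\delta$-separated set. The balls of radius $\delta/2$ around its points are disjoint and lie in $\cB_\cM(\alpha^*,D+1)$, so Theorem~\ref{thm:a7} (upper bound by $\sn_{\kappa_{\min}}^{m-1}$, lower bound by $\sn_{\kappa_{\max}}^{m-1}$ for $\delta<{\rm inj}(\cM)$, using Proposition~\ref{prop:2.1}) gives at most $(C/\delta)^{m}$ points, with $C^m$ absorbed into $\log(1/\delta)$ for small $\delta$. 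For each net point $\alpha$ we fix an orthonormal frame of $T_\alpha\cM$ and cover the Frobenius ball of radius $A$ in $\RR^{m\times q}$ by $(3A/\delta)^{mq}$ points. Given a true $(\alpha_k,V_k)$, we pick $\alpha'$ with $d_g(\alpha_k,\alpha')\le\delta<c_0$, so the minimizing geodesic is unique. We then parallel transport $V_k$ to $T_{\alpha'}\cM$ (an isometry, so the norm bound is preserved) and approximate it there. This yields $d_{T\cM}((\alpha_k,V_k),(\alpha',V'))\le 2\delta$. The total log-cardinality is at most $CK(m+mq+2)\log(1/\delta)\le C Km\max\{q,1\}\log(1/\delta)$.

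\textbf{Step 2 (sup-norm approximation).} For parameters paired by the net, we bound $|f_\phi(x;\alpha,V,\beta)-f_\phi(x;\alpha',V',\beta')|$ uniformly in $x$. The $\alpha,V$ part uses Condition~\ref{cond:phi}(2): $|\rho(d_g(x,y_1))-\rho(d_g(x,y_2))|\le L\,d_g(y_1,y_2)$ by the triangle inequality. Lemma~\ref{lma:geo} with $t=\|z\|$ then gives $d_g(y_1,y_2)\le 2\delta e^{c\|z\|}$, and $\int e^{c\|z\|}p(z)dz\le 2e^{c^2/2}$-type bounds are finite with only mild $q$-dependence, which is absorbed after taking logs. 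The $\beta$ part uses $|e^{-\beta\phi}-e^{-\beta'\phi}|\le|\beta-\beta'|\phi e^{-\beta_{\min}\phi}$. Integrating this in polar coordinates \eqref{equ:integral} with Theorem~\ref{thm:a7} and the integrability of $h_1$ (or compactness) shows $|Z(\beta)-Z(\beta')|\le C|\beta-\beta'|$. Since $Z(\beta)\ge Z(\beta_{\max})>0$, $1/Z$ is Lipschitz too. Summing over components with the weight perturbation gives $d_\infty\le C\delta$.

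\textbf{Step 3 (brackets), the main obstacle.} For the net point $\Upsilon'$ we set $l=\max(f(\cdot;\Upsilon')-C\delta,0)$ and $u=\min(f(\cdot;\Upsilon')+C\delta,\,H)$, where $H$ is an envelope. On compact $\cM$ the $L^1$ width is at most $2C\delta\,\vol(\cM)$ and we are done. On noncompact $\cM$ sup-norm closeness does not control $L^1$, so we build the envelope explicitly. For $d_g(x,\alpha^*)=r$ and $\|z\|\le r/(4A)$, we have $d_g(x,\Exp_\alpha(Vz))\ge r-D-r/4\ge r/4$ once $r\ge 2D$, giving $H(x)\lesssim e^{-\beta_{\min}\phi(r/4)}+P(\|z\|>r/(4A))$. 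We split the integral at $R=W\log(1/\epsilon)$. Inside $\cB(\alpha^*,R)$ the width is $C\delta\,\vol(\cS^{m-1})\int_0^R\sn_{\kappa_{\min}}^{m-1}\lesssim\delta\,\epsilon^{-(m-1)\sqrt{-\kappa_{\min}}W}=\delta\epsilon^{-1/8}$. Outside, the width is at most $2\int H$, which is $o(\epsilon)$ by the last item of Condition~\ref{cond:phi}. The Gaussian tail of $z$ against $\sinh^{m-1}$ growth is handled similarly, since $e^{-r^2/32A^2}$ dominates. Choosing $\delta=\epsilon^{2}$ makes the bracket width at most $\epsilon$ while keeping $\log(1/\delta)\asymp\log(1/\epsilon)$, which proves the $d_1$ bound. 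For $d_h$ we use $(\sqrt u-\sqrt l)^2\le u-l$, so $d_h(u,l)\le d_1(u,l)^{1/2}$; applying the $d_1$ result at $\epsilon^2$ costs only a constant factor in $C_*$.
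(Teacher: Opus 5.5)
Your proposal is correct and follows essentially the same three-step route as the paper: volume-comparison nets for the locations, with loadings approximated after parallel transport; sup-norm Lipschitz bounds from Lemma~\ref{lma:geo} and the Lipschitz estimate for $Z(\beta,\phi)$; and envelope-truncated brackets whose $L^1$ width is split at radius $W\log(1/\epsilon)$, with the Hellinger case obtained from $\mathcal{N}_B(\epsilon,\cdot,d_h)\le\mathcal{N}_B(\epsilon^2,\cdot,d_1)$. The differences are cosmetic (you take net scale $\delta=\epsilon^2$ and bound the inner ball volume directly by $\epsilon^{-1/8}$, whereas the paper uses scale $\epsilon$, the bound $\eta(\epsilon)\,\vol(\mathcal{B})\le\epsilon\,\vol(\mathcal{B})^2\le\sqrt{\epsilon}$, and a final rescaling), though your ``$2e^{c^2/2}$'' is inaccurate: $\int e^{c\|z\|}p(z)\,dz$ grows with $q$ (the paper gets $e^{O(q)}$), which is harmless only because the smallness threshold for $\epsilon$ may depend on $q$.
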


An immediate consequence of Theorem \ref{thm:apd} is the convergence rate of the MLE, leveraging tools from empirical process theory. Let $\Upsilon^*\in\Xi$ be the true parameter and $\{x_i\}_{i=1}^n$ be $n$ independent samples drawn from $f_{\phi}(x;\Upsilon^*)$. The MLE $\hat\Upsilon$ of $\Upsilon$, defined as follows,
\#\label{equ:MLE-phi}
\hat \Upsilon=\argmax_{\Upsilon\in\Xi}\sum_{i=1}^n\log f_{\phi}(x_i;\Upsilon),
\#
achieves a convergence rate of order $(Km\max\{q,1\})^{1/2}n^{-1/2}$, up to logarithmic terms. This convergence rate is measured in the Hellinger distance, as stated in the following theorem.


\begin{theorem}\label{corollary:apd-main}
    Suppose conditions in Theorem \ref{thm:apd} hold. Let $\Upsilon^*\in\Xi$ be the true parameter and $\{x_i\}_{i=1}^n$ be independent samples drawn from $f_\phi(x;\Upsilon^*).$ Then for sufficiently large $n$, the MLE $\hat\Upsilon$ in \eqref{equ:MLE-phi}  satisfies that
    \$
    d_h(f_\phi(x;\Upsilon^*),f_\phi(x;\hat\Upsilon))\leq \frac{C(Km\max\{q,1\})^{1/2}\log n}{n^{1/2}},
    \$
    holds with probability at least $1-ce^{-c\log^2n}$, where $c,C$ are universal constants independent of $K,q,m,n$.
\end{theorem}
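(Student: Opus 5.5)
The plan is to derive Theorem~\ref{corollary:apd-main} directly from the bracketing entropy bound of Theorem~\ref{thm:apd}. I would combine it with the standard rate theorem for sieve MLEs in Hellinger distance (Theorem~1 of Wong and Shen, 1995; equivalently Theorem~7.4 of \citet{geer2000empirical}). That result says the following. Suppose $\epsilon_n$ satisfies the entropy integral condition
\[
\int_{\epsilon_n^2/2^8}^{\sqrt2\,\epsilon_n} H_B^{1/2}\bigl(u/c_3,\cF,d_h\bigr)\,du\le c_4\sqrt n\,\epsilon_n^2 .
\]
Then any (approximate) MLE over $\cF$ satisfies $d_h(\hat f,f^*)\le\epsilon_n$ with probability at least $1-c_1\exp(-c_2 n\epsilon_n^2)$. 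Since $f^*=f_\phi(\cdot;\Upsilon^*)$ with $\Upsilon^*\in\Xi$, the model is well specified and there is no approximation error term. The MLE \eqref{equ:MLE-phi} is an exact maximizer over $\cF_\Xi^\phi$, so the theorem applies with $\cF=\cF_\Xi^\phi$.

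The key step is plugging in Theorem~\ref{thm:apd}. Write $d=Km\max\{q,1\}$. For $u$ small, $H_B(u,\cF_\Xi^\phi,d_h)\le C_* d\log(1/u)$. The integral is then bounded by
\[
\sqrt{2}\,\epsilon_n\sqrt{C_* d\log(2^8 c_3/\epsilon_n^2)}\lesssim \epsilon_n\sqrt{d\log(1/\epsilon_n)}.
\]
The condition therefore reduces to $\sqrt{d\log(1/\epsilon_n)}\lesssim\sqrt n\,\epsilon_n$. Take $\epsilon_n=C d^{1/2}n^{-1/2}\log n$. Then $\log(1/\epsilon_n)\le\log n$ for large $n$, so the left side is at most $\sqrt{d\log n}$ while the right side is $C\sqrt{d}\log n$. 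The inequality holds with room to spare for a suitable absolute $C$. The probability bound becomes $1-c_1\exp(-c_2C^2 d\log^2 n)\ge1-ce^{-c\log^2 n}$ because $d\ge1$.

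There are two bookkeeping points. First, Theorem~\ref{thm:apd} holds only for sufficiently small $\epsilon$, so I require $n$ large enough that $\sqrt2\,\epsilon_n$ lies in that range. This explains the ``sufficiently large $n$'' qualifier. Second, I must check that the constants $c,C$ do not depend on $K,m,q$. This is the main obstacle. Dependence on $d$ enters only through the factor $d$ in the entropy bound, since $C_*$ is independent of $m,K,q$ by Theorem~\ref{thm:apd}. The generic constants $c_1,\dots,c_4$ in the Wong--Shen theorem are universal. There is one subtlety: the threshold ``sufficiently small $\epsilon$'' in Theorem~\ref{thm:apd} may depend on the model dimensions. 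I would absorb this into the ``for sufficiently large $n$'' clause rather than into $C$, which keeps $C$ dimension-free.

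If one prefers to avoid Wong--Shen, an alternative route is the peeling argument for the log-likelihood ratio with the functions $\log\bigl((f+f^*)/(2f^*)\bigr)$. It uses the Hellinger bracketing entropy of $\{(f+f^*)/2\}$. That entropy is dominated by that of $\cF_\Xi^\phi$, because a bracket $[l,u]$ for $f$ gives the bracket $[(l+f^*)/2,(u+f^*)/2]$ with no larger Hellinger width. The argument then yields the same rate.
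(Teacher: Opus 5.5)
Your proposal is correct and follows the paper's route. The paper likewise plugs the Hellinger bracketing-entropy bound of Theorem~\ref{thm:apd} into the Wong--Shen (1995) rate theorem for the MLE: it bounds $J_B(\delta)\lesssim (C_*Km\max\{q,1\})^{1/2}\delta\log(1/\delta)$, cites their Theorem~2, takes $\epsilon_n\asymp (Km\max\{q,1\})^{1/2}n^{-1/2}\log n$, and obtains probability at least $1-ce^{-cKm\max\{q,1\}\log^2 n}$. Your explicit check of the entropy-integral condition and your point about absorbing the dimension-dependent small-$\epsilon$ threshold into the ``sufficiently large $n$'' clause are more careful than the paper's one-line argument.
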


Both Theorem \ref{thm:apd} and Theorem \ref{corollary:apd-main} incorporate a broad range of applications. Some representative examples are listed as follows.
\begin{itemize}[leftmargin=*]
    \item {\bf Riemannian Gaussian Distributions} In this case, $\phi(r)=r^2$. The function $\rho(r)=e^{-\beta r^2}$ is Lipschitz continuous with Lipschitz constant $L=\sqrt{2\beta_{\max}/e}$ for all $\beta\in[\beta_{\min},\beta_{\max}]$. When $\cM$ is noncompact and $\kappa_{\min}<0$, the following functions
    \$
    h(r)=e^{-\beta_{\min}r^2}\sn_{\kappa_{\min}}^{m-1}(r),\quad e^{-\beta_{\min}r^2}r^2\sn_{\kappa_{\min}}^{m-1}(r)
    \$
    are both integrable on $[0,\infty)$. Moreover, for $W=1/(8(m-1)\sqrt{-\kappa_{\min}})$ and sufficiently small $\epsilon$, 
    \$
    \epsilon^{-1}\int_{W\log(1/\epsilon)}^{\infty}e^{-\beta_{\min}r^2/16}\sn_{\kappa_{\min}}^{m-1}(r)dr\leq \epsilon^{-1}C_1e^{-C_2(\log(1/\epsilon))^2},
    \$
    where $C_1,C_2>0$ are constants independent of $\epsilon$. Thus, when $\epsilon$ tends to zero,
    \$
    \epsilon^{-1}\int_{W\log(1/\epsilon)}^{\infty}e^{-\beta_{\min}r^2/16}\sn_{\kappa_{\min}}^{m-1}(r)dr\to0.
    \$
    This verifies Condition \ref{cond:phi} for $\phi(r)=r^2$, and thus proves Theorem \ref{thm:4.1} and Theorem \ref{corollary:1} as a special scenario of Theorem \ref{thm:apd} and Theorem \ref{corollary:apd-main}.

    \item {\bf Power Tail} Similar to Riemannian Gaussian distributions, we can verify Condition \ref{cond:phi} for all $\phi(r)=r^{p}$ with $p>1$. Therefore, Theorem \ref{thm:apd} and Theorem \ref{corollary:apd-main} hold for these classes of distributions.

    \item {\bf Riemannian Laplacian Distribution} In this case, we have $\phi(r)=r$. The function $e^{-\beta r}$ is Lipschitz continuous with Lipschitz constant $L=\beta_{\max}$ for all $\beta\in[\beta_{\min},\beta_{\max}]$. This demonstrates that Theorem \ref{thm:apd} and \ref{corollary:apd-main} hold for Riemannian Laplacian distributions on all compact Riemannian homogeneous spaces. However, when $\cM$ is noncompact, things change. First, the functions
    \$
    h(r)=e^{-\beta_{\min}r}\sn_{\kappa_{\min}}^{m-1}(r),\quad e^{-\beta_{\min}r}r\sn_{\kappa_{\min}}^{m-1}(r)
    \$
    are only integrable on $[0,\infty)$ for sufficiently large $\beta_{\min}$. Otherwise, the normalizing constant $Z(\beta_{\min},\phi)$ may be infinity and the Riemannian Laplacian distribution may not be well-defined. However, when $\beta_{\min}$ is sufficiently large, we can still verify Condition \ref{cond:phi} for $\phi(r)=r$. Thus, when $\cM$ is noncompact, Theorem \ref{thm:apd} and \ref{corollary:apd-main} hold for Riemannian Laplacian distributions with a sufficiently large $\beta_{\min}$.

    \item {\bf von Mises-Fisher Distributions} In this case, $\cM=\SSS^{m}$ is a sphere and $\phi(r)=1-\cos r$. Condition \ref{cond:phi} holds for such distributions. Thus, Theorem \ref{thm:apd} and \ref{corollary:apd-main} apply to von Mises-Fisher distributions.
\end{itemize}
Besides, we remark that when $q=0$, MGFA reduces to mixtures of Riemannian radial distributions. Therefore, our theory applies to mixtures of Riemannian radial distributions as a special example. This provides theoretical guarantees for mixtures of von Mises-Fisher distributions \citep{banerjee2005clustering} and Riemannian Gaussian distributions \citep{said2017riemannian}, thereby addressing existing theoretical gaps in the literature.


\subsection{Proof}\label{sec:app-proof}

This section provides the proofs of Theorem \ref{thm:apd} and Theorem \ref{corollary:apd-main}. We begin with the proof  of
Lemma \ref{lma:geo} in Section \ref{sec:geo-dev}. Then we establish entropy estimates of bounded sets in $\cM$ in Section \ref{sec:app-entropy}. In Section \ref{sec:app-lipschitz}, we derive Lipschitz properties of $f_{\phi}(x;\alpha,V,\beta)$ with respect to its parameters. Finally, we prove Theorem \ref{thm:apd} in Section \ref{sec:app-theorem-proof} and Theorem \ref{corollary:apd-main} in Section \ref{sec:corrolary-proof}.

\subsubsection{Geodesic deviation}\label{sec:geo-dev}

This section proves the geodesic deviation bounds in Lemma \ref{lma:geo}.

\begin{proof}[Proof of Lemma \ref{lma:geo}]
    Let $\gamma(s)$ denote the unique minimizing geodesic connecting $\alpha_1$ and $\alpha_2$, with $\gamma(0) = \alpha_1$ and $\gamma(1) = \alpha_2$. Define vector fields $v_1(s)$ and $v_2(s)$ along $\gamma(s)$ by parallel transport such that $v_1(0) = v_1$, $v_2(1) = v_2$, and $v_i(s) \in T_{\gamma(s)} \mathcal{M}$ for all $s \in [0, 1]$.
We then define the two-parameter map:
\[
\Gamma(t, s) = \Exp_{\gamma(s)}\left(t \left[v_1(s) + s (v_2(s) - v_1(s))\right]\right).
\]
For each fixed $s$, the curve $\Gamma_s(t) := \Gamma(t, s)$ is a geodesic with initial point $\gamma(s)$ and velocity vector $v_1(s) + s (v_2(s) - v_1(s))$. Therefore, the vector field
$
S = d\Gamma\left(\frac{\partial}{\partial s}\right)
$
is a \textit{Jacobi field} along each geodesic $\Gamma_s$. We also define
$
T = d\Gamma\left(\frac{\partial}{\partial t}\right),
$
the tangent vector to $\Gamma_s$ with respect to $t$. By construction, the norm of $T$ satisfies
$
\|T\| = \|(1 - s)v_1(s) + s v_2(s)\| \leq A,
$
since we assume $\max\{\|v_1\|, \|v_2\| \}\leq A$.

    The Jacobi field $S$ satisfies the Jacobi equation:
    \$
    \nabla_T\nabla_TS=R(T,S)T,
    \$
  where $R(T, S)T$ is  the curvature tensor.
    Define $h(t,s)=\norm{S}^2+\norm{\nabla_TS}^2$. Then
    \$
    \frac{\partial}{\partial t}h(t,s)&=2\inner{S}{\nabla_TS}+2\inner{\nabla_TS}{\nabla_T\nabla_TS}\\
    &=2\inner{S}{\nabla_TS}+2\inner{\nabla_TS}{R(T,S)T}\\
    &\overset{(i)}{\leq} 2\norm{S}\cdot\norm{\nabla_TS}+2\kappa\norm{T}^2\cdot\norm{\nabla_TS}\cdot\norm{S}\\
    &\overset{(ii)}{\leq} 2(1+\kappa A^2)\norm{\nabla_TS}\cdot\norm{S}\\
    &\leq (1+\kappa A^2)\cdot h(t,s),
    \$
    where $(i)$ uses the fact that sectional curvatures of $\cM$ are bounded within $[-\kappa,\kappa]$ and $(ii)$ uses the inequality $\norm{T}\leq A$. Furthermore,  one can further show that
    \$
    h(t,s)\leq h(0,s)e^{(1+\kappa A^2)t},
    \$
   leading to
    $
    \norm{S}^2\leq h(t,s)\leq h(0,s)e^{(1+\kappa A^2)t}.
    $

    Observe that $S(0,s)=\gamma'(s)$ and hence $\norm{S(0,s)}=\norm{\gamma'(s)}=d_g(\alpha_1,\alpha_2)$. By the symmetric lemma $\nabla_TS=\nabla_ST$, we have
    \$
    \nabla_TS(0,s)=\nabla_ST(0,s)=\nabla_S(v_1(s)+s\{v_2(s)-v_1(s)\})=v_2(s)-v_1(s),
    \$
    where we use the fact that $v_1(s)$ and $v_2(s)$ are parallel along $\gamma$. Combining, we have
    \$
    h(0,s)&=\norm{S(0,s)}^2+\norm{\nabla_TS(0,s)}^2\\
    &=d_g^2(\alpha_1,\alpha_2)+\norm{v_2(s)-v_1(s)}^2\\
    &=d^2_{T\cM}((\alpha_1,v_1),(\alpha_2,v_2)).
    \$
    Therefore, $\norm{S}^2\leq d^2_{T\cM}((\alpha_1,v_1),(\alpha_2,v_2))e^{(1+\kappa A^2)t}$.

    Finally, since $\Gamma_t: s\to \Gamma(t,s)$ is a smooth curve connecting  $\Exp_{\alpha_1}(tv_1)$ and $\Exp_{\alpha_2}(tv_2),$  we have
    \$
    d_g(\Exp_{\alpha_1}(tv_1),\Exp_{\alpha_2}(tv_2))\leq {\rm len}(\Gamma_t)\coloneqq\int_{0}^1\norm{S}ds\leq d_{T\cM}((\alpha_1,v_1),(\alpha_2,v_2))\cdot e^{(1+\kappa A^2)t/2},
    \$
    where the last inequality uses the upper bound on $\norm{S}$. This  concludes the proof.
\end{proof}

\subsubsection{Basic entropy estimates}\label{sec:app-entropy}

Lemma \ref{lma:c1} establishes metric entropy estimates of any bounded set in the metric space $(\cM,d_{g})$.

\begin{lemma}\label{lma:c1}
    {
    Let $(\cM,g^{\cM})$ be an $m$-dimensional complete Riemannian manifold with sectional curvatures constrained within some bounded interval $[\kappa_{\min},\kappa_{\max}]$. Here we assume without loss of generality that $\kappa_{\min}<0$ and $\kappa_{\max}>0$. Further we assume the injectivity radius ${\rm inj}(\cM)$ of $\cM$ is positive. Let  $\cX$ be a fixed bounded set in $\cM$. Then there is a constant $\epsilon^*>0$ independent of $m$ such that for all $\epsilon<\epsilon^*$, we have
    \$
    \cN(\epsilon,\cX,d_{g})\leq e^{Cm}\epsilon^{-m},
    \$
    where $d_g$ is the geodesic distance and $C$ is a constant independent of $\epsilon$ and $m$.
    }
\end{lemma}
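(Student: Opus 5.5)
The plan is the standard packing argument, with volumes of small and large geodesic balls controlled by the volume comparison theorem, Theorem~\ref{thm:a7}. Since $\cX$ is bounded, fix a point $x_0\in\cM$ and a radius $D_0$ with $\cX\subseteq\cB_\cM(x_0,D_0)$. Take a maximal $\epsilon$-separated subset $\{x_1,\ldots,x_N\}\subseteq\cX$. By maximality it is an $\epsilon$-net, so $\cN(\epsilon,\cX,d_g)\leq N$. The open balls $\cB_\cM(x_i,\epsilon/2)$ are pairwise disjoint and all lie in $\cB_\cM(x_0,D_0+\epsilon)$. Hence
\[
N\cdot\min_i\vol(\cB_\cM(x_i,\epsilon/2))\leq \vol(\cB_\cM(x_0,D_0+1))
\]
for $\epsilon<1$. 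It then remains to bound the small-ball volumes from below and the big-ball volume from above.

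\textbf{Lower bound on small balls.} Take $\epsilon^*\leq\min\{1,{\rm inj}(\cM),\pi/(2\sqrt{\kappa_{\max}})\}$. This choice does not depend on $m$ if we regard the curvature bounds and injectivity radius as fixed. For $\epsilon/2<\epsilon^*$, the ball $\cB_\cM(x_i,\epsilon/2)$ lies inside $\Exp_{x_i}(\seg^0(x_i))$, so polar coordinates \eqref{equ:integral} together with the lower bound in Theorem~\ref{thm:a7} give
\[
\vol(\cB_\cM(x_i,\epsilon/2))\geq \vol(\SSS^{m-1})\int_0^{\epsilon/2}\sn_{\kappa_{\max}}^{m-1}(r)\,dr.
\]
On $[0,\pi/(2\sqrt{\kappa_{\max}})]$ we have $\sin(\sqrt{\kappa_{\max}}r)/\sqrt{\kappa_{\max}}\geq 2r/\pi$. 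This yields the lower bound $\vol(\SSS^{m-1})(2/\pi)^{m-1}(\epsilon/2)^m/m$.

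\textbf{Upper bound on the big ball.} The upper bound in Theorem~\ref{thm:a7} holds for all $(r,\Theta)$, so
\[
\vol(\cB_\cM(x_0,R))\leq \vol(\SSS^{m-1})\int_0^R\sn_{\kappa_{\min}}^{m-1}(r)\,dr\leq \vol(\SSS^{m-1})\,R\,\big(\sinh(\sqrt{-\kappa_{\min}}R)/\sqrt{-\kappa_{\min}}\big)^{m-1},
\]
with $R=D_0+1$.

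\textbf{Combining.} Taking the ratio, the factors $\vol(\SSS^{m-1})$ cancel, and we obtain
\[
N\leq m R\,(\pi/2)^{m-1}2^m\Big(\frac{\sinh(\sqrt{-\kappa_{\min}}R)}{\sqrt{-\kappa_{\min}}}\Big)^{m-1}\epsilon^{-m}\leq e^{Cm}\epsilon^{-m},
\]
using $m\leq e^m$ and $R\leq e^{R}$ to absorb the prefactors into $e^{Cm}$.

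The main obstacle is the lower bound on small balls, specifically making sure the radius threshold and constants are independent of $m$ and of the center $x_i$. This is where positivity of ${\rm inj}(\cM)$ is used: it guarantees that the whole ball lies in the normal chart, so that the lower bound $\lambda\geq\sn_{\kappa_{\max}}^{m-1}$ of Theorem~\ref{thm:a7} applies on all of it. It is also why the cancellation of $\vol(\SSS^{m-1})$ matters; otherwise its $m$-dependence (which is superexponential in ratios) would have to be tracked separately.
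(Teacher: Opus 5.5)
Your proposal is correct and follows essentially the same route as the paper's proof. Both take a maximal (greedy) $\epsilon$-separated subset of $\cX$, use disjointness of the $\epsilon/2$-balls inside $\cB_\cM(x_0,D_0+1)$, and apply Theorem~\ref{thm:a7} twice, within the injectivity radius for the lower bound on small balls and globally for the upper bound on the large ball, so that $\vol(\SSS^{m-1})$ cancels. The differences are only cosmetic: you use Jordan's inequality $\sin y\geq 2y/\pi$ where the paper uses monotonicity of $\sn_{\kappa_{\max}}(r)/r$ to get $\sn_{\kappa_{\max}}(r)\geq r/2$, and you bound $\int_0^R\sn_{\kappa_{\min}}^{m-1}$ by $R\,\sn_{\kappa_{\min}}^{m-1}(R)$ rather than by an explicit exponential integral.
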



\begin{proof}
    We prove this lemma by constructing an $\epsilon$-net $\cS$. Suppose $\cS=\{x_1,x_2,\ldots\}$ is constructed by the following greedy procedure. Let $x_1\in\cX$ be chosen arbitrarily. Suppose $x_1,\ldots,x_s$ have been chosen. If the set $\left\{x\in\cX\mid \min_{1\leq i\leq s}d_g(x,x_i)\geq \epsilon\right\}$ is non-empty, let $x_{s+1}$ be an arbitrary element of this set. Else end the construction of $\cS$. It is easy to check that $\cS$ is indeed an $\epsilon$-net of $\cX$.

    The distance between any two distinct points $x_i,x_j\in \cS$ is at least $\epsilon$. Hence,  two geodesic balls $\cB_{\cM}(x_i,\epsilon/2)$ and $\cB_{\cM}(x_j,\epsilon/2)$ do not intersect, where $\cB_{\cM}(\alpha,r)=\{x\in\cM \mid d_{g}(x,\alpha)<r\}$ is the geodesic ball in $\cM$ with center $\alpha$ and radius $r$. Since $\cX$ is a bounded set, we know $\cX\subseteq\cB_{\cM}(x^*,D)$ for some $x^*\in\cX$ and $D>0$. Then we have $\bigcup_{x_i\in \cS}\cB_{\cM}(x_i,\epsilon/2)\subseteq\cB_{\cM}(x^*,D+\epsilon/2)$ and the volume of $\bigcup_{x_i\in \cS}\cB_{\cM}(x_i,\epsilon/2)$ should be smaller than that of $\cB_{\cM}(x^*,D+\epsilon/2)$.

    Assume $\epsilon\leq\epsilon^*$, where $\epsilon^*>0$ is a constant such that
    \#
    \epsilon^*\leq 2\min\{1,{\rm inj}(\cM)\},\text{ and } \sn_{\kappa_{\max}}(\epsilon^*/2)\geq \epsilon^*/4,\label{equ:epsilon-star}
    \#
    where $\sn_{\kappa}(\cdot)$ is given by \eqref{equ:sn}.
    Then we can give an upper bound on $\vol(\cB_{\cM}(x ,D+\epsilon/2))$ and a lower bound on $\vol(\cB_{\cM}(x,\epsilon/2))$ for all $x\in\cM$ using the volume comparison theorem, Theorem~\ref{thm:a7}. Indeed, the volume of a $(D+\epsilon/2)$-ball is upper bounded by the volume of a $(D+1)$-ball:
    \$
    {\rm vol}(\cB_{\cM}(x ,D+\epsilon/2))&\leq {\rm vol}(\cB_{\cM}(x ,D+1))\overset{(i)}{=}\int_{T_{x }\cM}{\mathbbm 1}_{r\leq D+1}\cdot\lambda(r,\Theta)drd\Theta,
    \$
    where (i) uses the polar coordinate representation of $\dvol$.
    Since the sectional curvatures of $\cM$ are larger than $\kappa_{\min}$, we can use Theorem \ref{thm:a7} to obtain the following bound:
    \$
    {\rm vol}(\cB_{\cM}(x ,D+1)) \leq \int_{T_{x}\cM}{\mathbbm 1}_{r\leq D+1}\cdot \sn_{\kappa_{\min}}^{m-1}(r)drd\Theta\notag =\vol(\SSS^{m-1})\cdot\int_{0}^{D+1}sn_{\kappa_{\min}}^{m-1}(r)dr,
    \$
    where ${\rm vol}(\SSS^{m-1})$ is the volume of the unit $(m-1)$-sphere in $\RR^m$. Since
    \$
    \sn_{\kappa_{\min}}(r)\leq e^{\sqrt{-\kappa_{\min}}r}/(2\sqrt{-\kappa_{\min}}),
    \$
    by calculation, we can obtain the following upper bound
    \#
    {\rm vol}(\cB_{\cM}(x ,D+\epsilon/2))\leq {\rm vol}(\cB_{\cM}(x ,D+1))\leq \vol(\SSS^{m-1})\cdot\frac{2e^{\sqrt{-\kappa_{\min}}(m-1)(D+1)}}{(m-1)(2\sqrt{-\kappa_{\min}})^{m}}.\label{equ:c1}
    \#
    Now we turn to the volume of $\epsilon/2$-balls. Since $\epsilon/2\leq {\rm inj}(\cM)$ and the sectional curvatures of $\cM$ are smaller than $\kappa_{\max}$, the volume of an $\epsilon/2$-ball can be lower bounded as follows:
    \$
    {\rm vol}(\cB_{\cM}(x,\epsilon/2))&=\int_{T_{x}\cM}{\mathbbm 1}_{r\leq\epsilon/2}\cdot\lambda(r,\Theta)drd\Theta\\
    &\geq \int_{T_{x}\cM}{\mathbbm 1}_{r\leq\epsilon/2}\cdot sn^{m-1}_{\kappa_{\max}}(r)drd\Theta\\
    &=\vol(\SSS^{m-1})\int_{0}^{\epsilon/2}\sn_{\kappa_{\max}}^{m-1}(r)dr.
    \$
    where the inequality follows from Theorem \ref{thm:a7}.
    Since  $\epsilon^*$ satisfies \eqref{equ:epsilon-star} and $\sn_{\kappa_{\max}}(r)/r$ is a decreasing function, we have $\sn_{\kappa_{\max}}(r)\geq \frac{r}{2}$ for all $r\leq\epsilon^*/2$. In particular, we have
    \#
    \vol(\cB_{\cM}(x,\epsilon/2))\geq\vol(\SSS^{m-1})\int_{0}^{\epsilon/2}\frac{r^{m-1}dr}{2^{m-1}}=\frac{\vol(\SSS^{m-1})}{m2^{2m-1}}\cdot \epsilon^m.\label{equ:c2}
    \#
    Combining \eqref{equ:c1}, \eqref{equ:c2}, and the fact that $\bigcup_{x_i\in\cS}\cB_{\cM}(x_i,\epsilon/2)\subseteq\cB_{\cM}(x^*,D+1)$, we obtain
    \$
    |\cS|\cdot\frac{\vol(\SSS^{m-1})}{m2^{2m-1}}\cdot\epsilon^m\leq \vol\left(\bigcup_{x_i\in\cS}\cB_{\cM}(x_i,\epsilon/2)\right)\leq \vol(\cB_{\cM}(x^*,D+1)),
    \$
    where $|\cS|$ is the cardinality of the set $\cS$. This implies that
    \$
    |\cS|\leq e^{Cm}\epsilon^{-m},
    \$
    where $C$ is a constant independent of $\epsilon$ and $m$. This proves the lemma.
\end{proof}

Lemma \ref{lma:pi} provides a metric entropy estimate of the unit $\ell_1$-simplex in $\RR^K$.

\begin{lemma}[Lemma A.4 in \citet{ghosal2001entropies}]\label{lma:pi}
    {
    Denote the unit $\ell_1$-simplex in $\RR^{K}$ by $\Delta_{K}=\{(\omega_1,\ldots,\omega_{K}):\omega_{i}\geq0,\sum \omega_i=1\}$. Then for $\epsilon\leq 1$, we have
    \$
    \cN(\epsilon,\Delta_{k},\norm{\cdot}_1)\leq\left(\frac{5}{\epsilon}\right)^{K-1}.
    \$
    }
\end{lemma}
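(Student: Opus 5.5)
The bound will come from a grid construction on the simplex, in the spirit of \citet{ghosal2001entropies}. Since $\|\cdot\|_1$ is a norm, it suffices to exhibit an explicit $\epsilon$-net of $\Delta_K$ with at most $(5/\epsilon)^{K-1}$ points, and these points may lie in $\Delta_K$ itself. I will parameterize the simplex by its first $K-1$ coordinates, so that $\omega_K = 1-\sum_{i<K}\omega_i$ is determined. Then for $\omega,\omega'\in\Delta_K$,
\[
\|\omega-\omega'\|_1\le 2\sum_{i<K}|\omega_i-\omega_i'|.
\]
Hence it suffices to cover the projected set
\[
P=\Big\{(\omega_1,\ldots,\omega_{K-1}):\omega_i\ge0,\ \sum_{i<K}\omega_i\le1\Big\}
\]
at $\ell_1$-radius $\epsilon/2$ by centers lying in $P$. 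The case $K=1$ is trivial, since then $\Delta_1$ is a single point.

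To cover $P$, I will use the standard volumetric argument in $\RR^{K-1}$ with the $\ell_1$ norm. First choose a maximal $\epsilon/2$-separated subset $\{p_j\}\subseteq P$; maximality makes it an $\epsilon/2$-net of $P$. Next, the $\ell_1$-balls of radius $\epsilon/4$ centred at the $p_j$ are pairwise disjoint. Each such ball is contained in the Minkowski sum $P+B_1(\epsilon/4)$, which in turn lies in the $\ell_1$-ball of radius $1+\epsilon/4$ about the origin, because $\|p\|_1\le1$ for every $p\in P$. Comparing volumes gives
\[
N\le\Big(\frac{1+\epsilon/4}{\epsilon/4}\Big)^{K-1}=\Big(\frac{4+\epsilon}{\epsilon}\Big)^{K-1}\le\Big(\frac{5}{\epsilon}\Big)^{K-1}
\quad\text{for }\epsilon\le1.
\]
Lifting each $p_j$ to $\Delta_K$ by appending $\omega_K=1-\sum_{i<K}p_{j,i}$ produces points of $\Delta_K$. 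By the factor-$2$ inequality above, these lifted points form an $\epsilon$-net of $\Delta_K$ in $\|\cdot\|_1$.

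The only delicate point is the constant bookkeeping. The factor $2$ from reconstructing $\omega_K$ forces the cover of $P$ to be at scale $\epsilon/2$, which yields exactly the numerator $4+\epsilon\le5$. I expect this to be the main place where care is needed.
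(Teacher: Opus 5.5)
Your proof is correct and is essentially the standard argument behind the cited Lemma A.4 of \citet{ghosal2001entropies}; the paper gives no proof of its own and simply cites that result. The argument drops the last coordinate at the cost of a factor $2$, then bounds an $\epsilon/2$-packing of the corner simplex in $\mathbb{R}^{K-1}$ by an $\ell_1$-volume comparison, giving $(1+4/\epsilon)^{K-1}\le(5/\epsilon)^{K-1}$ for $\epsilon\le 1$. The one tacit step is that a maximal $\epsilon/2$-separated set exists, and your own volume bound supplies it, since it caps the size of every separated subset of $P$.
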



\subsubsection{Lipschitz properties}\label{sec:app-lipschitz}

This section establishes Lipschitz properties of the function $f_{\phi}(x;\alpha,V,\beta)$ with respect to its parameters, where $f_{\phi}(x;\alpha,V,\beta)$ is defined in \eqref{equ:f-f-phi}. Lemma \ref{lma:e3} establishes the Lipschitz continuity of $f_{\phi}(x;\alpha,V,\beta)$ with respect to $(\alpha,V)$. The deviation between $(\alpha_1,V_1)$ and $(\alpha_2,V_2)$ is measured as follows:
\#
d_{T\cM}^2((\alpha_1,V_1),(\alpha_2,V_2))=d_{g}^2(\alpha_1,\alpha_2)+\sum_{j=1}^q\norm{v_{1j}-\cP_{\alpha_2\to\alpha_1}v_{2j}}^2,\label{equ:dTM2}
\#
where $\alpha_i\in\cM$, $V_i=(v_{i1},\ldots,v_{iq})$ with $v_{ij}\in T_{\alpha_i}\cM$, and $\cP_{\alpha_2\to\alpha_1}:T_{\alpha_2}\cM\to T_{\alpha_1}\cM$ is the parallel transport along the minimizing geodesic between $\alpha_1$ and $\alpha_2$. This notion is well-defined when $\alpha_1$ and $\alpha_2$ are sufficiently close, since there exists a unique minimizing geodesic between them. When $\alpha_1$ and $\alpha_2$ are connected by multiple minimizing geodesics, we choose $\cP_{\alpha_2\to\alpha_1}$ such that $\sum_{j=1}^q\norm{v_{1j}-\cP_{\alpha_2\to\alpha_1}v_{2j}}^2$ is minimized.

\begin{lemma}\label{lma:e3}
    Consider $\cM,\phi,\beta_{\min},\beta_{\max}$ satisfying Condition \ref{cond:phi}.
    Assume that $\alpha_1,\alpha_2$ are connected by a unique minimizing geodesic,  $\norm{V_i}_{\rm F}\leq A$, and $\beta\in[\beta_{\min},\beta_{\max}]$. Then it holds that
    \$
    d_{\infty}(f_{\phi}(x;\alpha_1,V_1,\beta),f_{\phi}(x;\alpha_2,V_2,\beta))\leq \frac{2Le^{2(1+\kappa^2A^4+q)}}{Z(\beta_{\max},\phi)}\cdot d_{T\cM}((\alpha_1,V_1),(\alpha_2,V_2)),
    \$
    where $d_{\infty}$ is the $L^\infty$ distance, $L$ is the Lipschitz constant in Condition \ref{cond:phi}, and $\kappa=\max\{|\kappa_{\min}|,|\kappa_{\max}|\}$.
\end{lemma}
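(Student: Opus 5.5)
The plan is to compare the two densities pointwise in $x$, pushing the difference inside the $z$-integral and then applying Lemma~\ref{lma:geo} to each latent direction separately. Since both densities share the same $\beta$, they share the normalizing constant $Z(\beta,\phi)$. For every $x\in\cM$,
\[
|f_\phi(x;\alpha_1,V_1,\beta)-f_\phi(x;\alpha_2,V_2,\beta)|\le \frac{1}{Z(\beta,\phi)}\int_{\RR^q}\bigl|\rho(d_g(x,\Exp_{\alpha_1}(V_1z)))-\rho(d_g(x,\Exp_{\alpha_2}(V_2z)))\bigr|p(z)\,dz ,
\]
with $\rho(r)=e^{-\beta\phi(r)}$. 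Because $\phi$ is increasing, $Z(\beta,\phi)$ is decreasing in $\beta$, so $Z(\beta,\phi)\ge Z(\beta_{\max},\phi)$. This accounts for the denominator in the claimed bound.

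Next I bound the integrand. By Condition~\ref{cond:phi}(2), $\rho$ is $L$-Lipschitz uniformly in $\beta\in[\beta_{\min},\beta_{\max}]$. By the triangle inequality for $d_g$,
\[
|d_g(x,y_1)-d_g(x,y_2)|\le d_g(y_1,y_2).
\]
So the integrand is at most $L\,d_g(\Exp_{\alpha_1}(V_1z),\Exp_{\alpha_2}(V_2z))$, uniformly in $x$.

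For $z\ne0$ I write $z=t\xi_z$ with $t=\|z\|$ and $\xi_z$ a unit vector. I then apply Lemma~\ref{lma:geo} to the tangent vectors $u_i=V_i\xi_z$. These satisfy $\|u_i\|\le\|V_i\|_{\rm op}\le\|V_i\|_{\rm F}\le A$. The curvature hypothesis holds with $\kappa=\max\{|\kappa_{\min}|,|\kappa_{\max}|\}$, using Proposition~\ref{prop:2.1}. The lemma gives
\[
d_g(\Exp_{\alpha_1}(V_1z),\Exp_{\alpha_2}(V_2z))\le d_{T\cM}((\alpha_1,V_1\xi_z),(\alpha_2,V_2\xi_z))\,e^{c\|z\|},\qquad c=(1+\kappa A^2)/2 .
\]
Since parallel transport is linear,
\[
V_1\xi_z-\cP_{\alpha_2\to\alpha_1}V_2\xi_z=\sum_j\xi_{z,j}(v_{1j}-\cP v_{2j}).
\]
Cauchy--Schwarz with $\|\xi_z\|=1$ bounds its norm by $\bigl(\sum_j\|v_{1j}-\cP v_{2j}\|^2\bigr)^{1/2}$. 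Hence $d_{T\cM}((\alpha_1,V_1\xi_z),(\alpha_2,V_2\xi_z))\le d_{T\cM}((\alpha_1,V_1),(\alpha_2,V_2))$. One subtlety: the same minimizing geodesic must be used in both quantities. Under the uniqueness assumption in the lemma's hypothesis this is automatic.

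It remains to bound the Gaussian moment $\int e^{c\|z\|}p(z)\,dz$ by $2e^{2(1+\kappa^2A^4+q)}$, which I expect to be the only mildly delicate step. First, $c\|z\|\le c^2+\|z\|^2/4$. Integrating the Gaussian gives
\[
\int_{\RR^q} e^{\|z\|^2/4}p(z)\,dz=(1-1/2)^{-q/2}=2^{q/2}\le e^{q}.
\]
Also $c^2=(1+\kappa A^2)^2/4\le(1+\kappa^2A^4)/2$. Together these give $\int e^{c\|z\|}p\,dz\le e^{(1+\kappa^2A^4)/2+q}$, which is comfortably below $2e^{2(1+\kappa^2A^4+q)}$. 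The constants are loose, so the factor $2$ in the statement is only slack, possibly coming from splitting off $z$ near $0$, where $\xi_z$ is undefined (the integrand vanishes there anyway). Combining the three displays and taking the supremum over $x$ yields the claim.
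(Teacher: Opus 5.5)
Your proposal is correct and is essentially the paper's proof: the same pointwise bound inside the $z$-integral, Lipschitz continuity of $\rho$ plus the triangle inequality, $Z(\beta,\phi)\ge Z(\beta_{\max},\phi)$, and Lemma~\ref{lma:geo} applied along $\xi_z=z/\|z\|$. The only variation is in the Gaussian moment, where you use $c\|z\|\le c^2+\|z\|^2/4$ to get $e^{c^2}2^{q/2}$ in one line, while the paper splits at $\|z\|=2(1+\kappa A^2)$ and sums $e^{(1+\kappa A^2)^2}+2^{q/2}$, which is where its factor $2$ comes from. One slip is harmless: the integrand does not vanish at $z=0$ (it is at most $L\,d_g(\alpha_1,\alpha_2)\le L\,d_{T\cM}((\alpha_1,V_1),(\alpha_2,V_2))$); this is irrelevant for $q\ge 1$, and it is exactly the bound that settles the case $q=0$, which the paper treats separately.
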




\begin{proof}
    If $q=0$, then using the Lipschitz property of $\rho(r)=e^{-\beta\phi(r)}$ and triangle inequality, we have
    \$
    d_{\infty}(f_{\phi}(x;\alpha_1,V_1,\beta),f_{\phi}(x;\alpha_2,V_2,\beta))\leq \frac{L}{Z(\beta_{\max},\phi)}d_{T\cM}((\alpha_1,V_1),(\alpha_2,V_2)).
    \$
    Now we consider $q>0$. By definition of $f_{\phi}$ and the Lipschitz property of the function $\rho(r)=e^{-\beta\phi(r)}$, we have
    \$
    |f_{\phi}(x;\alpha_1,V_1,\beta)-f_{\phi}(x;\alpha_2,V_2,\beta)|&\leq \frac{1}{Z(\beta,\phi)}\int_{\RR^q}\left|e^{-\beta\phi(u_1(z))}-e^{-\beta\phi(u_2(z))}\right|p(z)dz\\
    &\leq \frac{1}{Z(\beta,\phi)}\int_{\RR^q}L|u_1(z)-u_2(z)|p(z)dz\\
    &\leq \frac{L}{Z(\beta,\phi)}\int_{\RR^q}d_g(\Exp(\alpha_1,V_1z),\Exp(\alpha_2,V_2z))p(z)dz,
    \$
    where $u_1(z)=d_g(x,\Exp(\alpha_1,V_1z))$, $u_2(z)=d_g(x,\Exp(\alpha_2,V_2z))$, $p(z)$ is the density  of a $q$-dimensional standard normal distribution, $L$ is the Lipschitz constant of $\rho(r)$ in Condition \ref{cond:phi}, and the third inequality uses the triangle inequality. Since the upper bound is independent of $x$, we have
    \#\label{equ:e9-dinf}
    d_{\infty}(f_{\phi}(x;\alpha_1,V_1,\beta),f_{\phi}(x;\alpha_2,V_2,\beta))\leq \frac{L}{Z(\beta_{\max},\phi)}\int_{\RR^q}d_g(\Exp(\alpha_1,V_1z),\Exp(\alpha_2,V_2z))p(z)dz,
    \#
    where we use $Z(\beta,\phi)\geq Z(\beta_{\max},\phi)$ when $\beta<\beta_{\max}$. Let $\xi_z=z/\norm{z}$ and $t=\norm{z}$ for any non-zero $z\in\RR^{q}$. Applying Lemma \ref{lma:geo}, we have
    \$
    d_g(\Exp(\alpha_1,V_1z),\Exp(\alpha_2,V_2z))&\leq d_{T\cM}((\alpha_1,V_1\xi_z),(\alpha_2,V_2\xi_z))\cdot e^{(1+\kappa A^2)\norm{z}/2}\\
    &\leq d_{T\cM}((\alpha_1,V_1),(\alpha_2,V_2))\cdot e^{(1+\kappa A^2)\norm{z}/2},
    \$
    where $d_{T\cM}$ is defined in \eqref{equ:dTM2} and $\kappa=\max\{|\kappa_{\min}|,|\kappa_{\max}|\}$.
    Substituting this into \eqref{equ:e9-dinf}, we have
    \$
    d_{\infty}(f_{\phi}(x;\alpha_1,V_1,\beta),f_{\phi}(x;\alpha_2,V_2,\beta))&\leq\frac{L^*}{Z(\beta_{\max},\phi)}\cdot  d_{T\cM}((\alpha_1,V_1),(\alpha_2,V_2)).
    \$
    where
     \$
     L^*=L\int_{\RR^q}e^{(1+\kappa A^2)\norm{z}/2}p(z)dz.
     \$
     It is easy to show that
     \$
     \int_{\norm{z}\leq 2(1+\kappa A^2)}e^{(1+\kappa A^2)\norm{z}/2}p(z)dz\leq e^{(1+\kappa A^2)^2}\leq e^{2(1+\kappa^2A^4)}.
     \$
     Additionally,
     \$
     \int_{\norm{z}\geq 2(1+\kappa A^2)}e^{(1+\kappa A^2)\norm{z}/2}p(z)dz&=\vol(\SSS^{q-1})\int_{2(1+\kappa A^2)}^{\infty}e^{(1+\kappa A^2)r/2}e^{-r^2/2}r^{q-1}dr\\
     &\leq \vol(\SSS^{q-1})\int_{2(1+\kappa A^2)}^{\infty}e^{-r^2/4}r^{q-1}dr\\
     &\leq 2^{q/2}.
     \$
     Therefore, $L^*\leq 2Le^{2(1+\kappa^2A^4+q)}$, concluding the proof.
\end{proof}

Lemma \ref{lma:e4} establishes the Lipschitz continuity of $f_{\phi}(x;\alpha,V,\beta)$ with respect to $\beta$.

\begin{lemma}\label{lma:e4}
    Suppose $\cM,\phi,\beta_{\min},\beta_{\max}$ satisfy Condition \ref{cond:phi} with $\kappa_{\min}<0$. Then it holds that
    \$
    d_{\infty}(f_{\phi}(x;\alpha,V,\beta_1),f_{\phi}(x;\alpha,V,\beta_2))\leq \frac{|\beta_1-\beta_2|}{\beta_{\min}Z(\beta_{\max},\phi)}\left(1+\frac{I(\beta_{\min},\phi)}{Z(\beta_{\max},\phi)}\right),
    \quad\forall \beta_1,\beta_2\in[\beta_{\min},\beta_{\max}],
    \$
    where $I(\beta_{\min},\phi)$ is defined as
    \#\label{equ:I-beta-min}
    I(\beta_{\min},\phi)=\int_{\cM}e^{-\beta_{\min}\phi(d_g(x,\alpha))}\phi(d_g(x,\alpha))\dvol(x).
    \#
\end{lemma}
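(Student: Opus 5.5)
We must bound the sup-norm difference in $\beta$ for Lemma~\ref{lma:e4}. Write $f_\phi(x;\alpha,V,\beta)=N(x,\beta)/Z(\beta,\phi)$, where
\[
N(x,\beta)=\int_{\RR^q}e^{-\beta\phi(d_g(x,\Exp_\alpha(Vz)))}p(z)\,dz .
\]
We split the difference as
\[
|f_1-f_2|\le \frac{|N(x,\beta_1)-N(x,\beta_2)|}{Z(\beta_1,\phi)}+N(x,\beta_2)\left|\frac{1}{Z(\beta_1,\phi)}-\frac{1}{Z(\beta_2,\phi)}\right|.
\]
Since $\phi\ge0$, the map $\beta\mapsto Z(\beta,\phi)$ is decreasing. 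Hence $Z(\beta_i,\phi)\ge Z(\beta_{\max},\phi)$ and
\[
\left|\frac{1}{Z(\beta_1,\phi)}-\frac{1}{Z(\beta_2,\phi)}\right|\le \frac{|Z(\beta_1,\phi)-Z(\beta_2,\phi)|}{Z(\beta_{\max},\phi)^2}.
\]
We also have $N(x,\beta_2)\le 1$.

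For the numerator term, apply the mean value theorem in $\beta$ pointwise to $e^{-\beta u}$ with $u=\phi(\cdot)\ge0$. This gives
\[
|e^{-\beta_1u}-e^{-\beta_2u}|\le|\beta_1-\beta_2|\,u\,e^{-\beta_{\min}u}.
\]
The function $u e^{-\beta_{\min}u}$ is bounded by $1/(e\beta_{\min})\le 1/\beta_{\min}$. Integrating against $p(z)$ then yields
\[
|N(x,\beta_1)-N(x,\beta_2)|\le \frac{|\beta_1-\beta_2|}{\beta_{\min}},
\]
uniformly in $x$. Dividing by $Z(\beta_{\max},\phi)$ produces the first term of the claimed bound.

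For the normalizing constant, the same pointwise mean value bound gives
\[
|Z(\beta_1,\phi)-Z(\beta_2,\phi)|\le|\beta_1-\beta_2|\int_{\cM}\phi(d_g(x,\alpha))\,e^{-\beta_{\min}\phi(d_g(x,\alpha))}\dvol(x)=|\beta_1-\beta_2|\,I(\beta_{\min},\phi).
\]
Two points need checking here. First, $I(\beta_{\min},\phi)$ is finite: in polar coordinates at $\alpha$ via \eqref{equ:integral}, Theorem~\ref{thm:a7} bounds it by $\vol(\SSS^{m-1})\int_0^\infty h_1(r)\,dr$, which is finite by the integrability of $h_1$ in Condition~\ref{cond:phi}. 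In the compact case, $I$ is trivially finite. Second, $I$ is independent of $\alpha$ by homogeneity, as in Proposition~\ref{prop:3.1}(ii).

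Combining the two pieces gives
\[
\frac{|\beta_1-\beta_2|}{Z(\beta_{\max},\phi)}\left(\frac{1}{\beta_{\min}}+\frac{I(\beta_{\min},\phi)}{Z(\beta_{\max},\phi)}\right).
\]
This matches the stated bound provided $\beta_{\min}\le 1$. Otherwise, I would absorb the factor differently. One option is to bound the second term by noting that $N(x,\beta_2)$ times the derivative of $1/Z$ is controlled. If needed, I would also retain the extra $1/\beta_{\min}$ factor by the cruder bound $u e^{-\beta_{\min}u}\le \frac{1}{\beta_{\min}}$ in both places, or simply accept a constant.

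The only real obstacle is justifying the finiteness of $I$ on noncompact spaces, which is exactly what Condition~\ref{cond:phi} and volume comparison supply. The rest is elementary.
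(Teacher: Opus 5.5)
Your argument is essentially the paper's. The paper also splits $e^{-\beta_1u}/Z(\beta_1,\phi)-e^{-\beta_2u}/Z(\beta_2,\phi)$ into two pieces. The kernel difference is bounded by $|\beta_1-\beta_2|/(\beta_{\min}Z(\beta_{\max},\phi))$. The normalizing-constant difference is bounded via $|Z(\beta_1,\phi)-Z(\beta_2,\phi)|\le|\beta_1-\beta_2|\,I(\beta_{\min},\phi)$. Your finiteness check of $I$ through $h_1$ and volume comparison is a useful addition the paper omits.

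The mismatch you noticed is real, and the paper's proof makes the same slip without comment. Its final substitution yields exactly your bound
\[
\frac{|\beta_1-\beta_2|}{Z(\beta_{\max},\phi)}\left(\frac{1}{\beta_{\min}}+\frac{I(\beta_{\min},\phi)}{Z(\beta_{\max},\phi)}\right),
\]
which implies the displayed statement only when $\beta_{\min}\le 1$. For $\beta_{\min}\le 1$ your proof is complete.

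The gap is your fallback for $\beta_{\min}>1$. None of the repairs you suggest can work, because the stated inequality is false in that regime. Take $\cM=\mathbb{R}^4$; its curvature $0$ lies in $[-1,1]$, so Condition~\ref{cond:phi} holds for $\phi(r)=r^2$ with $\kappa_{\min}=-1$. Set $V=0$, $\beta_1=\beta_{\min}=100$ and $\beta_2=\beta_{\max}=101$. Then $Z(\beta,\phi)=(\pi/\beta)^2$ and $I(\beta,\phi)=2Z(\beta,\phi)/\beta$. Evaluating at $x=\alpha$, the left-hand side is at least $(101^2-100^2)/\pi^2\approx 20.4$. The right-hand side equals $\frac{101^2}{100\pi^2}\bigl(1+0.02\cdot 1.01^2\bigr)\approx 10.5$. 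Letting $\beta_{\min},\beta_{\max}\to\beta$, the same computation on $\mathbb{R}^m$ shows failure whenever $m\ge 3$ and $\beta>m/(m-2)$.

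Your specific fallbacks also fail on their own terms:
\begin{itemize}
\item Using $ue^{-\beta_{\min}u}\le 1/\beta_{\min}$ in the $Z$-term does not produce $I(\beta_{\min},\phi)/\beta_{\min}$. It produces $\vol(\cM)/\beta_{\min}$, which is infinite on noncompact $\cM$.
\item Bounding ``$N(x,\beta_2)$ times the derivative of $1/Z$'' more cleverly cannot help either. At $x=\alpha$ that term is the whole difference, and the counterexample already lives there.
\end{itemize}

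The right move is your last option: prove the lemma with the bound displayed above. A cruder alternative is the stated form with $\beta_{\min}$ replaced by $\min\{\beta_{\min},1\}$ in the prefactor $1/(\beta_{\min}Z(\beta_{\max},\phi))$. This corrected version is all Theorem~\ref{thm:apd} needs. The lemma enters only through $\eta(\epsilon)$, and what matters there is that $\eta(\epsilon)/\epsilon$ is a constant independent of $\epsilon$. Only the explicit formula for $\eta(\epsilon)$ changes.
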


\begin{proof}
    We first introduce the following quantity:
    \$
    {\rm diff}(\beta_1,\beta_2,u)=\frac{e^{-\beta_1 u}}{Z(\beta_1,\phi)}-\frac{e^{-\beta_2u}}{Z(\beta_2,\phi)},
    \$
    where $Z(\beta,\phi)$ is the normalizing constant. Then we have
    \#
    d_{\infty}(f_{\phi}(x;\alpha,V,\beta_1),f_{\phi}(x;\alpha,V,\beta_2))\leq &\int_{\RR^q}\sup_{x\in\cM}|{\rm diff}(\beta_1,\beta_2,\phi(d_g(x,\Exp(\alpha,Vz))))|p(z)dz\notag\\
    \leq &\int_{\RR^q}\sup_{u\geq 0}|{\rm diff}(\beta_1,\beta_2,u)|p(z)dz\notag\\
    =&\sup_{u\geq 0}|{\rm diff}(\beta_1,\beta_2,u)|,\label{equ:d-upper-bound-one}
    \#
    where the last equality uses the fact that $p(z)$ is the density function of a $q$-dimensional standard normal distribution. It remains to upper bound the quantity in \eqref{equ:d-upper-bound-one}.
    By the triangle inequality, we have
    \#
    |{\rm diff}(\beta_1,\beta_2,u)|&\leq \left|\frac{1}{Z(\beta_1,\phi)}-\frac{1}{Z(\beta_2,\phi)}\right| e^{-\beta_1u}+\frac{|e^{-\beta_1u}-e^{-\beta_2u}|}{Z(\beta_2,\phi)} \notag\\
    &\leq \frac{|Z(\beta_1,\phi)-Z(\beta_2,\phi)|}{Z(\beta_{\max},\phi)^2}+\frac{|\beta_1-\beta_2|}{\beta_{\min}Z(\beta_{\max},\phi)},\label{equ:diff-absolute-bound}
    \#
    where we use that $Z(\beta,\phi)\geq Z(\beta_{\max},\phi)$ for $\beta\leq\beta_{\max}$ and $|e^{-\beta_1u}-e^{-\beta_2u}|\leq |\beta_1-\beta_2|/\beta_{\min}$ for $\beta_1,\beta_2\in[\beta_{\min},\beta_{\max}]$. We now upper bound $|Z(\beta_1,\phi)-Z(\beta_2,\phi)|$. By the definition of $Z(\beta,\phi)$, we have
    \$
    |Z(\beta_1,\phi)-Z(\beta_2,\phi)|
    &\leq \int_{\cM}\left|e^{-\beta_1\phi(d_g(x,\alpha))}-e^{-\beta_2\phi(d_g(x,\alpha))}\right|\dvol(x)\\
    &\leq |\beta_1-\beta_2| \int_{\cM}e^{-\beta_{\min}\phi(d_g(x,\alpha))}\phi(d_g(x,\alpha))\dvol(x),
    \$
    where the second inequality uses the Lipschitz continuity. Substituting this into inequality \eqref{equ:diff-absolute-bound}, we conclude the proof.
\end{proof}




\subsubsection{Proof of Theorem \ref{thm:apd}}\label{sec:app-theorem-proof}

We now present the proof of Theorem \ref{thm:apd}.

\begin{proof}[Proof of Theorem \ref{thm:apd}]
    We prove this theorem in three steps. First, we construct $\epsilon$-nets for parameters $\alpha,V,\beta$ and $\omega$. Then we use them to construct a net $\cS_{\Upsilon}$ for the parameter set $\Xi$, and establish an upper bound on
    \#\label{equ:maxdist-fphi}
    \max_{\Upsilon\in\Xi}\min_{\tilde\Upsilon\in\cS_{\Upsilon}}d_{\infty}(f_{\phi}(x;\Upsilon),f_{\phi}(x;\tilde\Upsilon)),
    \#
    where $d_{\infty}$ is the $L^{\infty}$ distance and $f_{\phi}(x;\Upsilon)$ is the density function defined in \eqref{equ:f-phi-mixture}. Using this and extra Riemannian geometry, we finally construct brackets for the function class $\cF_{\Xi}^\phi$ and establish the bracketing entropy estimates.


   \paragraph{Step 1.}
   We first find an $\epsilon$-net $\cS_{\beta}$ for $[\beta_{\min},\beta_{\max}]$, an $\epsilon$-net $\cS_{\alpha}$ for $\cB_{\cM}(\alpha^*,D)$ in $(\cM,d_g)$, an $\epsilon$-net $\cS_{\omega}$ for $\Delta_K$ in $(\RR^K,\norm{\cdot}_1)$. For each $\alpha\in\cS_\alpha$, we choose an $\epsilon$-net   $\cS_{V;\alpha}$ for the set
    \$
    \{V=(v_{1},\ldots,v_{q})\mid v_{j}\in T_{\alpha}\cM,\ \norm{V}_{\rF}\leq A\},
    \$
    where $\norm{V}_{\rF}$ is the  Frobenius norm and the $\epsilon$-net $\cS_{V;\alpha}$ is constructed under the  norm $\norm{\cdot}_{\rF}$. By Lemma \ref{lma:c1}, Lemma \ref{lma:pi}, and classical results, we know for sufficiently small $\epsilon$, the above $\epsilon$-nets can be chosen such that
    \$
    |\cS_\beta|\leq C\epsilon^{-1},\quad
    |\cS_\alpha|\leq e^{Cm}\epsilon^{-m},\quad |\cS_{\omega}|\leq C\epsilon^{-(K-1)},\quad |\cS_{V;\alpha}|\leq C\epsilon^{-mq},
    \$
    where $C$ is a constant independent of $\epsilon,m,K,q$. Note that since $\cM$ is homogeneous, there is a constant $\epsilon_0$ such that for any $\alpha_1,\alpha_2\in\cM$ with $d_g(\alpha_1,\alpha_2)\leq \epsilon_0$, there is a unique length-minimizing geodesic connecting them. Thus, we can consider sufficiently small $\epsilon\leq\epsilon_0$ in the remainder of this proof.

    \paragraph{Step 2.}  We construct a net $\cS_{\Upsilon}$ for the parameter set $\Xi$ using the above $\epsilon$-nets. Specifically, we set
    \#\label{equ:cS-upsilon}
    \cS_{\Upsilon}=\{\Upsilon=\{\omega_k,\alpha_k,V_k,\beta_k\}_{k=1}^K\mid \omega=(\omega_1,\ldots,\omega_K)\in\cS_{\omega},\alpha_k\in\cS_\alpha,V_k\in\cS_{V;\alpha_{k}},\beta_k\in\cS_\beta\}.
    \#
    For sufficiently small $\epsilon$, the cardinality of $\cS_{\Upsilon}$ satisfies
    \$
    |\cS_{\Upsilon}|\leq \epsilon^{-CKm\max\{q,1\}},
    \$
    where $C$ is a constant independent of $\epsilon,m,K,q$. Additionally, for any $\Upsilon=\{\omega_{k},\alpha_k,V_k,\beta_k\}_{k=1}^K\in\Xi$, there exists an $\tilde \Upsilon=\{\tilde \omega_k,\tilde\alpha_k,\tilde V_k,\tilde \beta_k\}_{k=1}^K\in\cS_{\Upsilon}$ such that
    \#
    \norm{\omega-\tilde \omega}_1&\leq\epsilon,\label{equ:dpi-apd}\\
    d_g(\alpha_k,\tilde\alpha_k)&\leq \epsilon,\label{equ:dalpha-apd}\\
    d_{T\cM}((\alpha_k,V_k),(\tilde \alpha_k,\tilde V_k))&\leq \sqrt{2}\epsilon,\label{equ:dV-apd}\\
    |\beta_k-\tilde\beta_k|&\leq\epsilon,\quad \forall k\leq K,\label{equ:dbeta-apd}
    \#
    where $d_{T\cM}(\cdot,\cdot)$ is defined as follows:
    \$
    d_{T\cM}^2((\alpha_k,V_k),(\tilde \alpha_k,\tilde V_k))=d_g^2(\alpha_k,\tilde\alpha_k)+\sum_{j}\norm{v_{kj}-\cP_{\tilde \alpha_k\to\alpha_k}{\tilde v_{kj}}}^2,
    \$
    and $\cP_{\tilde\alpha_k\to\alpha_k}:T_{\tilde\alpha_k}\cM\to T_{\alpha_k}\cM$ is the parallel transport along the minimizing geodesic connecting $\tilde\alpha_k$ and $\alpha_k$. Here  the minimizing geodesic connecting $\tilde\alpha_k$ and $\alpha_k$ is unique because $d_g(\alpha_k,\tilde\alpha_k)\leq\epsilon\leq\epsilon_0$.

    We now upper bound the distance $d_{\infty}(f_\phi(x;\Upsilon),f_\phi(x;\tilde\Upsilon))$, where $\Upsilon,\tilde\Upsilon$ satisfy \eqref{equ:dpi-apd}, \eqref{equ:dalpha-apd}, \eqref{equ:dV-apd}, and \eqref{equ:dbeta-apd}. Specifically, we observe that for any $\Upsilon\in\Xi$,
    \#\label{equ:range-f-0}
    0\leq f_{\phi}(x;\Upsilon)\leq \frac{1}{Z(\beta_{\max},\phi)}.
    \#
    where $Z(\beta,\phi)$ is the normalizing constant that is decreasing in $\beta$. 
    By the triangle inequality and \eqref{equ:dpi-apd}, we have
    \#
    d_{\infty}(f_{\phi}(x;\Upsilon), f_{\phi}(x;\tilde\Upsilon))\leq \frac{\epsilon}{Z(\beta_{\max},\phi)}+\sup_kd_{\infty}(f_{\phi}(x;\alpha_k,V_k,\beta_k),f_{\phi}(x;\tilde\alpha_k,\tilde V_k,\tilde\beta_k),\label{equ:d-inf-mixture}
    \#
     Using the triangle inequality and Lemma \ref{lma:e3} and  \ref{lma:e4}, we have
    \#
    &\sup_kd_{\infty}(f_{\phi}(x;\alpha_k,V_k,\beta_k),f_{\phi}(x;\tilde\alpha_k,\tilde V_k,\tilde\beta_k))\notag\\
    \leq
    &\sup_kd_{\infty}(f_{\phi}(x;\alpha_k,V_k,\beta_k),f_{\phi}(x;\tilde\alpha_k,\tilde V_k,\beta_k)) +\sup_kd_{\infty}(f_{\phi}(x;\alpha_k,V_k,\beta_k),f_{\phi}(\alpha_k,V_k,\tilde\beta_k))\notag\\
    \leq & \frac{2Le^{2(1+\kappa^2A^4+q)}}{Z(\beta_{\max},\phi)}\sup_kd_{T\cM}((\alpha_k,V_k),(\tilde\alpha_k,\tilde V_k))+\sup_k\frac{|\beta_k-\tilde\beta_k|}{\beta_{\min}Z(\beta_{\max},\phi)}\left(1+\frac{I(\beta_{\min},\phi)}{Z(\beta_{\max},\phi)}\right),
\label{equ:e10}
    \#
    where $L$ is the Lipschitz constant defined in Condition \ref{cond:phi}, $\kappa=\max\{|\kappa_{\min}|,|\kappa_{\max}|\}$, and $I(\beta_{\min},\phi)$ is defined in \eqref{equ:I-beta-min}.
    Then by combining
    \eqref{equ:e10} with \eqref{equ:d-inf-mixture},
    \eqref{equ:dV-apd}, and \eqref{equ:dbeta-apd}, we have that
    \#\label{equ:d-infty-upper-bound}
    d_{\infty}(f_{\phi}(x;\Upsilon), f_{\phi}(x;\tilde\Upsilon))\leq \frac{\epsilon}{\beta_{\min}Z(\beta_{\max},\phi)}\left(2\beta_{\min}Le^{2(1+\kappa^2A^4+q)}+1+\frac{I(\beta_{\min},\phi)}{Z(\beta_{\max},\phi)}\right)\eqqcolon \eta(\epsilon).
    \#

    \paragraph{Step 3.} We now establish bracketing entropy estimates of $\cF_{\Xi}^\phi$ utilizing $\cS_{\Upsilon}$ and \eqref{equ:d-infty-upper-bound}. We begin by determining an envelope of the function class $\cF_\Xi^\phi$.     First, we recall that the inequality \eqref{equ:range-f-0} holds for all $f(x;\Upsilon)$ with $\Upsilon\in\Xi$.
    Then we consider $d_g(x,\alpha^*)\geq 2D$, $\alpha\in\cB_{\cM}(\alpha^*,D)$, $\norm{V}_{\rm F}\leq A$, and $\beta\in[\beta_{\min},\beta_{\max}]$. We have
    \$
    f_\phi(x;\alpha,V,\beta)&\leq \frac{1}{Z(\beta_{\max},\phi)}\int_{\RR^q}e^{-\beta\phi(d_g(x,\Exp(\alpha,Vz)))}p(z)dz\\
    &\leq \frac{1}{Z(\beta_{\max},\phi)}\int_{\RR^q}e^{-\beta\phi(|d_g(x,\alpha)-\norm{Vz}|)}p(z)dz,
    \$
    where the second inequality uses the triangle inequality. By the triangle inequality, we have
    \$
    d_g(x,\alpha)\geq d_g(x,\alpha^*)-d_g(\alpha,\alpha^*)\geq d_g(x,\alpha^*)/2.
    \$
    When $\norm{z}\leq d_g(x,\alpha^*)/(4A)$, we have \$
    d_g(x,\alpha)-\norm{Vz}\geq d_g(x,\alpha^*)/2-d_g(x,\alpha^*)/4\geq d_g(x,\alpha^*)/4.
    \$
    When $\norm{z}\geq d_g(x,\alpha^*)/(4A)\geq D/(2A)$, we have
    \$
    e^{-\beta\phi(|d_g(x,\alpha)-\norm{Vz}|)}p(z)\leq p(z).
    \$
    Also, there exists a constant $D_0\geq2D$ such that for all $d_g(x,\alpha^*)\geq D_0$, the following inequality holds
    \$
    \int_{\norm{z}\geq d_g(x,\alpha^*)/(4A)}p(z)dz&=\vol(\SSS^{q-1})\int_{d_g(x,\alpha^*)/(4A)}^{\infty}e^{-r^2/2}r^{q-1}dr\\
    &\leq e^{-d_g^2(x,\alpha^*)/(128A^2)}.
    \$
    Therefore, we can show that when $d_g(x,\alpha^*)\geq D_0$, the following holds
    \#\label{equ:H_0_envelope}
    f_\phi(x;\alpha,V,\beta)\leq \frac{1}{Z(\beta_{\max},\phi)}\left(e^{-\beta_{\min}\phi(d_g(x,\alpha^*)/4)}+e^{-d_g^2(x,\alpha^*)/(128A^2)}\right) \eqqcolon H_0(x).
    \#
    This implies the following envelope function of $f_{\phi}(x;\alpha,V,\beta)$ in $\cF_{\Xi}^\phi$:
    \$
    f_{\phi}(x;\alpha,V,\beta)\leq H(x)\coloneqq\left\{
    \begin{array}{ll}
        H_0(x), &    \textnormal{if }d_g(x,\alpha^*)\geq D_0, \\
        \frac{1}{Z(\beta_{\max},\phi)}, & \textnormal{otherwise}.
    \end{array}\right.
    \$
    Denote $\cS_{\cF}=\{f(x;\Upsilon)\mid \Upsilon\in\cS_{\Upsilon}\}$, where $\cS_{\Upsilon}$ is given by \eqref{equ:cS-upsilon}. Then for any $f_i\in\cS_{\cF}$, we construct the bracket $\{l_i,u_i\}$ as follows:
    \$
    l_i=\max\{f_i-\eta(\epsilon),0\},\quad u_i=\min\{f_i+\eta(\epsilon),H(x)\},
    \$
    where $\eta(\epsilon)$ is defined in \eqref{equ:d-infty-upper-bound}. Applying Step 2, particularly the inequality \eqref{equ:d-infty-upper-bound}, we can immediately show that
    $\cF_{\Xi}^\phi\subseteq\cup_i[l_i,u_i]$ and $u_i-l_i\leq \min\{2\eta(\epsilon),H(x)\}$. These brackets cover the function class $\cF_{\Xi}^\phi$ and it remains to upper bound the $L^1$ distance between $u_i,l_i$. To that end, we compute for any $B>D_0$,
    \#
    d_1(u_i,l_i)\coloneqq\int_{\cM}(u_i-l_i)\dvol&\leq \int_{d_g(x,\alpha^*)\leq B}2\eta(\epsilon)\dvol(x)+\int_{d_g(x,\alpha^*)> B}H(x)\dvol(x)\notag\\
    &=
    2\eta(\epsilon)\vol(\cB_{\cM}(\alpha^*,B))
    +\int_{d_g(x,\alpha^*)>B}H_0(x)\dvol(x). \label{equ:d1_u_l}
    \#
    Then we proceed in two cases.
    \begin{itemize}[leftmargin=*]
        \item {\bf Compact $\cM$} In this case, we choose $B>r_{\cM}$ where $r_{\cM}<\infty$ is the radius of $\cM$. Then \eqref{equ:d1_u_l} reduces to the following bound
        \$
        d_1(u_i,l_i)\leq 2\eta(\epsilon)\vol(\cM).
        \$
        Additionally, on a compact manifold, we have
        \$
        Z(\beta_{\max},\phi)\geq e^{-\beta_{\max}\phi(r_{\cM})}\vol(\cM)
        \$
        and
        \$
        I(\beta_{\min},\phi)\leq e^{-\beta_{\min}\phi(0)}\phi(r_{\cM})\vol(\cM).
        \$
        Therefore, using the definition of $\eta(\epsilon)$, we obtain
        \$
        d_1(u_i,l_i)\leq C'\epsilon e^{C'q},
        \$
        where $C'>0$ is a constant independent of $q,K,m,u_i,l_i,\epsilon$. Since the cardinality of $\{[l_i,u_i]\}$ is less than $\epsilon^{-CKm\max\{q,1\}}$, we can rescale $\epsilon$ to show that for sufficiently small $\epsilon$, the following holds
        \$
        \log \cN_{B}(\epsilon,\cF_{\Xi}^\phi,d_1)\leq C_*Km\max\{q,1\}\log(1/\epsilon),
        \$
        where $C_*$ is a constant independent of $m,K,q,\epsilon$. Using $\cN_{B}(\epsilon,\cF^\phi_\Xi,d_h)\leq\cN_{B}(\epsilon^2,\cF^\phi_\Xi,d_1)$, we conclude the proof of the theorem in the case of compact $\cM$.

        \item {\bf Noncompact $\cM$} In this case, we analyze the two terms in \eqref{equ:d1_u_l} separately.

        \paragraph{First Term} Since $\cM$ is noncompact, $\vol(\cB_{\cM}(\alpha^*,B))$ diverges   as $B$ tends to infinity. Since $\eta(\epsilon)/\epsilon$ is a constant independent of $\epsilon$, there exists a sufficiently large $B_0$ independent of $\epsilon$, such that
        \$
        \eta(\epsilon)/\epsilon\leq \vol(\cB_{\cM}(\alpha^*,B_0)).
        \$
        Therefore, for all $B\geq B_0$,
        \#\label{equ:eta-vol-b-bound}
        2\eta(\epsilon)\vol(\cB_{\cM}(\alpha^*,B))\leq 2\epsilon (\vol(\cB_{\cM}(\alpha^*,B)))^2.
        \#
        Since the sectional curvatures of $\cM$ is greater than $\kappa_{\min}<0$, we can use Theorem \ref{thm:a7} to show that
        \$
        \vol(\cB_{\cM}(\alpha^*,B))&\leq \vol(\SSS^{m-1})\int_0^{B}\sn_{\kappa_{\min}}^{m-1}(r)dr \leq G_0e^{(m-1)\sqrt{-\kappa_{\min}}B}
        \$
        where $G_0$ is independent of $\epsilon$ and $B$. Thus, there is a constant $B_0'$ independent of $\epsilon$ such that
        \#\label{equ:vol-B-bound}
        \vol(\cB_{\cM}(\alpha^*,B))&\leq e^{2(m-1)\sqrt{-\kappa_{\min}}B}
        \#
        holds for all $B\geq B_0'$. Combining \eqref{equ:vol-B-bound} with \eqref{equ:eta-vol-b-bound}, we have that for all $B\geq \max\{B_0,B_0'\}$,
        \$
        2\eta(\epsilon)\vol(\cB_{\cM}(\alpha^*,B))\leq2\epsilon (\vol(\cB_{\cM}(\alpha^*,B)))^2\leq2\epsilon e^{4(m-1)\sqrt{-\kappa_{min}}B}.
        \$
        Let $B=W\log(1/\epsilon)$, where $W=1/(8(m-1)\sqrt{-\kappa_{min}})$. Then for all sufficiently small $\epsilon$,
        \#\label{equ:term-one}
        2\eta(\epsilon)\vol(\cB_{\cM}(\alpha^*,W\log(1/\epsilon)))\leq2\sqrt{\epsilon}.
        \#

        \paragraph{Second Term} For the second term in \eqref{equ:d1_u_l}, we employ the integral representation \eqref{equ:integral} and Theorem \ref{thm:a7} to show that
        \#\label{equ:basic-start-H-0}
        \int_{d_g(x,\alpha^*)\geq B}H_0(x)\dvol(x)\leq\frac{\vol(\SSS^{m-1})}{Z(\beta_{\max},\phi)}\int_B^{\infty} \left(e^{-\beta_{\min}\phi(r/4)}+e^{-r^2/(128A^2)}\right)\sn_{\kappa_{\min}}^{m-1}(r)dr,
        \#
        where $\sn_{\kappa_{\min}}(\cdot)$ is defined in \eqref{equ:sn}. For sufficiently large $B$, we have
        \$
        \int_B^{\infty}e^{-r^2/(128A^2)}\sn_{\kappa_{\min}}^{m-1}(r)dr\leq \int_{B}^{\infty}e^{-r^2/(256A^2)}dr\leq C_1e^{-C_2B^2},
        \$
        where $C_1,C_2>0$ are constants independent of $q,K,m,B,\epsilon$. Let $B=W\log(1/\epsilon)$ where $W=1/(8(m-1)\sqrt{-\kappa_{\min}})$. Then for sufficiently small $\epsilon$, we have
        \#
        \frac{\vol(\SSS^{m-1})}{Z(\beta_{\max},\phi)}\int_{W\log(1/\epsilon)}^{\infty}e^{-r^2/(128A^2)}\sn_{\kappa_{\min}}^{m-1}(r)dr&\leq \frac{C_1\vol(\SSS^{m-1})}{Z(\beta_{\max},\phi)}e^{-C_2W^2(\log(1/\epsilon))^2}\notag\\
        &=\frac{C_1\vol(\SSS^{m-1})}{Z(\beta_{\max},\phi)}\epsilon^{C_2W^2\log(1/\epsilon)}\leq \epsilon.\label{equ:gaussian-tail}
        \#
        Additionally, Condition \ref{cond:phi} states that for $W=1/(8(m-1)\sqrt{-\kappa_{\min}})$,
        \$
        \epsilon^{-1}\int_{W\log(1/\epsilon)}^{\infty}e^{-\beta_{\min}\phi(r/4)}\sn_{\kappa_{\min}}^{m-1}(r)dr\to 0
        \$
        as $\epsilon$ tends to zero. Therefore, for sufficiently small $\epsilon$, we have
        \$
        \frac{\vol(\SSS^{m-1})}{Z(\beta_{\max},\phi)}\int_{W\log(1/\epsilon)}^\infty e^{-\beta_{\min}\phi(r/4)}\sn_{\kappa_{\min}}^{m-1}(r)dr\leq \epsilon.
        \$
        Combining this with \eqref{equ:gaussian-tail}, we obtain that for sufficiently small $\epsilon$,
        \#\label{equ:upperbound-secondterm}
        \int_{d_g(x,\alpha^*)\geq W\log(1/\epsilon)}H_0(x)\dvol(x)\leq 2\epsilon.
        \#

        \paragraph{Combination} Combining \eqref{equ:upperbound-secondterm} with \eqref{equ:term-one} and \eqref{equ:d1_u_l}, we have that for sufficiently small $\epsilon$,
        \$
        d_1(u_i,l_i)\leq 2\sqrt{\epsilon}+2\epsilon\leq 4\sqrt{\epsilon}.
        \$
        Since the cardinality of $\{[l_i,u_i]\}$ is less than $\epsilon^{-CKm\max\{q,1\}}$, we can rescale $\epsilon$ to demonstrate that
        \$
        \log\cN_B(\epsilon,\cF_{\Xi}^\phi,d_1)\leq C_*Km\max\{q,1\}\log(1/\epsilon),
        \$
        where $C_*$ is a constant independent of $K,m,q,\epsilon$. Using $\cN_{B}(\epsilon,\cF^\phi_\Xi,d_h)\leq\cN_{B}(\epsilon^2,\cF^\phi_\Xi,d_1)$, we conclude the proof of the theorem in the case of noncompact $\cM$.
    \end{itemize}
    The proof is complete by combining the above two cases.
\end{proof}



\subsubsection{Proof of Theorem \ref{corollary:apd-main}}\label{sec:corrolary-proof}

\begin{proof}
    Using the bracketing entropy estimates in Theorem \ref{thm:apd}, we have
    \$
    J_B(\delta,\cF_{\Xi}^\phi,d_h)&\coloneqq\int_0^\delta \sqrt{\log\cN_{B}(u,\cF_{\Xi}^\phi,d_h)}du \\
    &\leq (C_*Km\max\{q,1\})^{1/2} \int_0^{\delta} \log^{1/2}(1/u)du \\
    &\leq (C_*Km\max\{q,1\})^{1/2}  \delta\log(1/\delta),
    \$
    for sufficiently small $\delta$. Then applying Theorem 2 in \cite{wong1995probability}, we obtain that for sufficiently large $n$, the following inequality holds with probability at least $1-ce^{-cKm\max\{q,1\}\log^2n}$:
    \$
    d_h(f(x;\Upsilon^*),f(x;\hat\Upsilon))\leq \frac{C(Km\max\{q,1\})^{1/2}\log n}{n^{1/2}},
    \$
    where $c,C$ are universal constants independent of $K,q,m,n$.
\end{proof}

\section{Implementation}\label{sec:app-implementation}

This section first presents initialization and model selection methods for MGFA. Then we provide implementation details for MGFA on complex projective spaces, spheres, and hyperbolic spaces, assuming Riemannian Gaussian distributions in the MGFA model.

\subsection{Initialization}



Similar to $k$-means clustering, the performance of our iterative algorithm is sensitive to the choice of initial cluster assignments. To improve robustness, we propose utilizing multiple initialization strategies and selecting the best candidate based on minimizing the total squared residual error given by
$
\text{res} = \sum_{k=1}^{K} |\mathcal{I}_k|\cdot \text{res}_k.
$
Several effective initialization approaches can be employed. One strategy leverages hierarchical clustering methods based on pairwise distances, such as agglomerative clustering with Ward's linkage \citep{ward1963hierarchical} or average linkage. Another common and straightforward approach is random initialization. Specifically, we implement a random initialization procedure that iteratively selects new centers from the dataset by maximizing the minimum squared distance from the previously selected centers. In practice, performing multiple random initializations and subsequently choosing the best solution according to the lowest residual error helps ensure robustness and consistency of results.


\subsection{Model selection}

Implementing MGFA requires specifying the number of clusters \( K \) and the latent dimension \( q \). For selecting these parameters, we recommend utilizing the elbow method \citep{thorndike1953belongs}. Specifically, we fit MGFA to the dataset \(\{x_i\}_{i=1}^n\) across a range of values for the parameter grid \(\{(K,q)\}\). For each combination, we compute the total squared residual error:
$
\text{res} = \sum_{k=1}^{K} |\mathcal{I}_k|\cdot\text{res}_k.
$
Next, we plot the residual error \(\text{res}\) against different values of \((K,q)\). The optimal values of \( K \) and \( q \) are chosen at the ``elbow'' point, identified as the parameter combination beyond which increasing \( K \) or \( q \) yields only marginal reductions in residual error.

\subsection{Implementation - Complex projective spaces $\CC\PP^m$}\label{sec:implement-cp}


In this section, we specifically focus on the complex projective space $\mathcal{M} = \mathbb{C}\mathbb{P}^m$, also known as Kendall's shape space \citep{kendall1984shape}. We begin by reviewing the geometric structure of this manifold in Section \ref{sec:6.2.1}. Subsequently, we discuss data simulation procedures, data assignment strategies, and model estimation techniques tailored for $\mathbb{C}\mathbb{P}^m$. Implementation details for spheres and hyperbolic spaces, which follow similar logic, are provided in the appendix.





\subsubsection{Basic geometry of $\CC\PP^m$}\label{sec:6.2.1}

The complex projective space $\mathbb{C}\mathbb{P}^m$ is the quotient manifold $\mathbb{C}\mathbb{S}^m/\mathbb{S}^1$, where $\mathbb{C}\mathbb{S}^{m} = \{ x \in \mathbb{C}^{m+1} \mid \|x\| = 1 \}$ is the unit sphere in $\mathbb{C}^{m+1}$ and $\mathbb{S}^1 = \{ a \in \mathbb{C} \mid |a|=1 \}$ is the unit circle in $\mathbb{C}$. The quotient structure arises from the group action of complex multiplication by elements of $\mathbb{S}^1$. Following quotient manifold theory~\citep{absil2008optimization}, $\mathbb{C}\mathbb{P}^m$ inherits a natural Riemannian metric, known as the Fubini–Study metric, making it a Riemannian symmetric space.

For completeness, we outline the geometry of $\mathbb{C}\mathbb{P}^m$ briefly. Each point in $\mathbb{C}\mathbb{P}^m$ is an equivalence class $[x]=\{c \cdot x \mid c\in\mathbb{S}^1\}$ with representative $x \in \mathbb{C}\mathbb{S}^m$. The tangent space $T_{[\alpha]}\mathbb{C}\mathbb{P}^m$ at $[\alpha]$ is bijectively identified with the horizontal space $\mathcal{H}_{\alpha} = \{ v \in \mathbb{C}^{m+1} \mid \langle v, \alpha \rangle_H = 0 \}$, where $\langle v, \alpha \rangle_H = \alpha^H v$ denotes the Hermitian inner product. For $v, w \in \mathcal{H}_{\alpha}$, the Riemannian metric $g^{\mathbb{C}\mathbb{P}}$ satisfies
$
g^{\mathbb{C}\mathbb{P}}_{[\alpha]}(\bar v, \bar w) = \langle v, w \rangle,
$
where $\langle v,w \rangle = w^\top v$ denotes the real inner product, treating $\mathbb{C}^{m+1}$ as $\mathbb{R}^{2(m+1)}$. The geodesic distance between two points $[x]$ and $ [\alpha]$ is given by
$
d_g([x],[\alpha]) = \arccos(|\langle x,\alpha\rangle_H|).
$
Given $\alpha \in \mathbb{C}\mathbb{S}^m$ and $v \in \mathcal{H}_\alpha$, the exponential map $\Exp_{[\alpha]}(\bar v)$ is computed by
\[
x = \cos(\theta)\alpha + \frac{\sin(\theta)}{\theta} v \quad \text{with}\quad \theta = \|v\|.
\]
The logarithmic map $\Log_{[\alpha]}([x])$ for $x,\alpha \in \mathbb{C}\mathbb{S}^m$ with $\langle x,\alpha\rangle_H \ne 0$ is given by
\[
v = \frac{\theta(x - \langle x,\alpha\rangle_H \alpha)}{\|x - \langle x,\alpha\rangle_H \alpha\|}\quad \mbox{with}~~~ \theta = d_g([x],[\alpha]).
\]
The segment domain of the exponential map   is
\[
{\rm seg}^0([\alpha])=\{\bar v\in T_{[\alpha]}\mathbb{C}\mathbb{P}^m\mid v\in \mathcal{H}_{\alpha}, \|v\|<\pi/2\}.
\]

The volume density $\dvol$ on $\mathbb{C}\mathbb{P}^m$, expressed using the normal coordinate chart, is given by
\[
\dvol = 2\pi \cos(r) \sin^{2m -1}(r) \, dr \, d\Theta,
\]
where $(r,\Theta) \in [0,\pi/2) \times \mathbb{C}\mathbb{P}^{m-1}$. Using this expression, the
integrals of radially symmetric functions $f([x])=h(d_g([x],[\alpha]))$ simplify as
\begin{align}
\int_{\mathbb{C}\mathbb{P}^m}f([x])\,d\mathrm{vol}([x]) &= 2\pi\int_{\mathbb{C}\mathbb{P}^{m-1}}d\Theta\int_0^{\pi/2}h(r)\cos(r)\sin^{2m-1}(r)dr \notag\\[6pt]
&= 2\pi\mathrm{vol}(\mathbb{C}\mathbb{P}^{m-1})\int_0^{\pi/2}h(r)\cos(r)\sin^{2m-1}(r)dr.\label{int-cp}
\end{align}
This simplification facilitates computation of the normalizing constants for Riemannian Gaussian distributions and their derivatives.

\subsubsection{Data simulation}

We now present
an efficient sampling procedure for the MGFA model \eqref{equ:3.6}, establishing a simulation framework for algorithmic evaluation.
Our initial step focuses on generating samples from the Riemannian Gaussian distribution $RN([\alpha], \sigma)$ defined on $\CC\PP^m$, where $[\alpha]\in\CC\PP^m$ is the location parameter, and $\sigma$ represents the dispersion parameter.   Sampling from $RN([\alpha], \sigma)$ is equivalent to generating suitable tangent vectors in the tangent space $T_{[\alpha]}\CC\PP^m$ and subsequently applying the exponential map. Leveraging the normal coordinate chart and integral representation from (\ref{int-cp}), we aim to sample pairs $(r, \Theta)$ from the following joint density:
 \$
p(r,\Theta)\propto e^{-\frac{r^2}{2\sigma^2}}\cos(r)\sin^{2m-1}(r), \quad r\in[0,\pi/2],\Theta\in\CC\PP^{m-1}.
\$

Since $r$ and $\Theta$ are independent, we implement a two-step sampling process:
\begin{itemize}
    \item Sample $\{\Theta_i\}_{i=1}^n\subseteq T_{[\alpha]}\CC\PP^{m}$ uniformly from $\CC\PP^{m-1}$ in $T_{[\alpha]}\CC\PP^{m}$.
    \item Generate $\{r_i\}_{i=1}^n\subseteq[0,\pi/2]$ independently from the following distribution
    \$
    p(r)\propto e^{-\frac{r^2}{2\sigma^2}}\cos(r)\sin^{2m-1}(r), \quad r\in[0,\pi/2].
    \$
    This can be efficiently achieved by a rejection sampling method~\citep{liu2001monte}. Specifically, we can sample an $r$ uniformly over $[0,\pi/2]$ and then accept such sample with probability
    \$
    p_0=\frac{e^{-\frac{r^2}{2\sigma^2}}\cos(r)\sin^{2m-1}(r)}{e^{-\frac{r_*^2}{2\sigma^2}}\cos(r_*)\sin^{2m-1}(r_*)},\quad\textnormal{where }r_*=\argmax_{r\in[0,\pi/2]}e^{-\frac{r^2}{2\sigma^2}\cos(r)\sin^{2m-1}(r)}.
    \$
\end{itemize}
By setting $v_i=r_i\Theta_i$ and $x_i=\Exp_{[\alpha]}(v_i)$, we obtain samples from $RN([\alpha],\sigma)$ on $\CC\PP^m$. Building on this sampling procedure,
we can easily generate data from the geodesic factor model \eqref{equ:3.4} and the MGFA model \eqref{equ:3.6}.

\subsubsection{Data assignment}

This section elaborates the data assignment step in Algorithm \ref{alg:mgfa}, assuming the estimated parameter set $\hat{\Upsilon}$ is known.
Following the approach outlined in Section~\ref{sec:alg-3.2}, each data point is assigned to the cluster with the highest likelihood:
$
{\rm prob}_{ik} = \hat{w}_k \cdot f^{\rm approx}(x_i;\hat{\alpha}_k,\hat{V}_k,\hat{\sigma}_k).
$
The primary challenge lies in evaluating the normalizing constant \( Z(\sigma) \) on the manifold \( \CC\PP^m \). To efficiently compute \( Z(\sigma) \), we rewrite it using the integral formula \eqref{int-cp} as follows:
\[
Z(\sigma) = 2\pi \cdot \mathrm{vol}(\CC\PP^{m-1}) \int_0^{\pi/2} e^{-\frac{r^2}{2\sigma^2}} \cos(r)\sin^{2m-1}(r)\,dr.
\]
With this expression, the computation of \( Z(\sigma) \) simplifies to evaluating a definite integral over the bounded interval \([0,\pi/2]\). Such an integral can be efficiently approximated using numerical integration methods available in standard computational packages, such as the \texttt{quad} function in Python.

\subsubsection{Model estimation}

This section addresses model estimation in MGFA for $\CC\PP^{m}$. The only challenge is to solve the following optimization problem:
\#
\hat\sigma=\argmin_{\sigma}\log(Z(\sigma))+\frac{\rm res}{2\sigma^2},\label{equ:6.4}
\#
where $Z(\sigma)$ is the normalizing constant on $\CC\PP^m$ and $\rm res>0$ is a known constant. By taking  derivative of the objective function in \eqref{equ:6.4}, we know that
\$
\ell(\sigma)=\frac{Z'(\sigma)}{Z(\sigma)}-\frac{\rm res}{\sigma^3}
\$
contain $\hat\sigma$ as a root. By \eqref{int-cp}, we know
\$
\ell_1(\sigma)=\int_0^{\pi/2}e^{-\frac{r^2}{2\sigma^2}}\cos(r)\sin^{2m-1}(r)r^2dr-{\rm res}\cdot\int_{0}^{\pi/2}e^{-\frac{r^2}{2\sigma^2}}\cos(r)\sin^{2m-1}(r)dr
\$
also contains $\hat\sigma$ as a root. Then we can use the bisection method to solve this root. When $\sigma$ or $\rm res$ is small, we can also use a variational method. In this case,
\$
\ell_1(\sigma)&\approx \int_0^\infty e^{-\frac{r^2}{2\sigma^2}}r^{2m+1}dr-{\rm res}\cdot\int_0^\infty e^{-\frac{r^2}{2\sigma^2}}r^{2m-1}dr,
\$
which implies that $\hat\sigma\approx\sqrt{{\rm res}/{(2m)}}.$ This gives an accurate approximation when $\hat\sigma\leq 0.05$.

\subsection{Spheres $\SSS^m$}

This section considers the $m$-sphere $\cM=\SSS^{m}$. We  first recap the geometry of $\SSS^m$ in Section \ref{sec:6.1.1}. Then we delve into the implementation details of MGFA in later sections.

\subsubsection{Basic geometry of $\SSS^m$}\label{sec:6.1.1}

The $m$-sphere $\SSS^m=\{x\in\RR^{m+1}\mid\norm{x}=1\}$ models the set of $\ell_2$ normalized vectors in $\RR^{m+1}$. The tangent space at $\alpha\in\SSS^m$ is given by $T_{\alpha}\SSS^{m}=\{y\in\RR^{m+1}\mid \langle \alpha,y\rangle=0\}$, where $\langle \alpha,y\rangle=\alpha^\top y$ is the inner product in $\RR^{m+1}$. The Riemannian metric $g^{\SSS}$ on $\SSS^m$ is given by $g^{\SSS}_{\alpha}(v,w)=\langle v,w\rangle$, where  $\alpha\in\SSS^m$ and $v,w\in T_{\alpha}\SSS^m$. For any $\alpha\in\SSS^m$ and $v\in T_{\alpha}\SSS^m$, the exponential map $\Exp_{\alpha}(v)$ is given by
\$
\Exp_{\alpha}(v)=\cos(\theta)\cdot \alpha+\frac{\sin(\theta)}{\theta}\cdot v,\quad \theta=\norm{v}.
\$
For any $\alpha,x\in T_{\alpha}\SSS^m$ such that $\alpha+x\neq0$, the logarithmic map $\Log_{\alpha}(x)$ is given by
\$
\Log_{\alpha}(x)=\frac{\theta}{\sin(\theta)}\cdot (x-\alpha\cos(\theta)),\quad \theta=\arccos(\langle \alpha,x\rangle).
\$
The distance between $\alpha$ and $x$ is  $d_g(\alpha,x)=\theta=\arccos(\langle \alpha,x\rangle)$. The interior of the segment domain of $\Exp_\alpha$ is
\$
{\rm seg}^0(\alpha)=\{v\in T_{\alpha}\SSS^m\mid \norm{v}<\pi\},
\$
which is an open ball in $T_{\alpha}\SSS^m$ with center 0 and radius $\pi$. Let $\dvol$ be the volume density on $\SSS^m$. By using the normal coordinate chart $(\Exp_{\alpha}({\rm seg}^0(\alpha)),\Log_{\alpha}),$ we can rewrite $\dvol$ as follows
\#
\dvol=\sin^{m-1}(r)drd\Theta,\label{equ:dvol-sphere}
\#
where $(r,\Theta)\in[0,\pi)\times \SSS^{m-1}$ is the polar coordinate in ${\rm seg}^0(\alpha)$. Therefore, for any function $f(x)=h(d_g(x,\alpha))$ defined on $\SSS^m$, its integral is given by
\#
\int_{\SSS^m}f(x)\dvol(x)=\int_{\SSS^{m-1}}d\Theta\int_0^{\pi}h(r)\sin^{m-1}(r)dr=\vol(\SSS^{m-1})\int_{0}^{\pi}h(r)\sin^{m-1}(r)dr,\label{equ:int-sphere}
\#
where we use the coordinate representation \eqref{equ:dvol-sphere} of $\dvol$. This simplification of the integral is attributed to the rotational symmetry of the $m$-sphere, and is useful for the  implementation of MGFA detailed in the following sections.

\subsubsection{Data generation}

This section introduces an efficient data generation procedure, which enables us to simulate data from the MGFA model \eqref{equ:3.6} and evaluate the performance of the MGFA algorithms in a simulated environment. The key challenge is to generate samples from the Riemannian Gaussian distribution $RN(\alpha,\sigma)$ on the $m$-sphere $\SSS^m$, where $\alpha\in\SSS^m$ is the location parameter and $\sigma>0$ is the dispersion parameter. One observation is that sampling data $\{x_i\}_{i=1}^n$ from $RN(\alpha,\sigma)$ is equivalent to sampling suitable tangent vectors $\{v_i\}_{i=1}^n\in T_{\alpha}\SSS^m$ and then applying the exponential map $x_i=\Exp_\alpha(v_i)$. Therefore, by using the polar coordinate system on the normal coordinate chart, we know it suffices to sample $(r_i,\Theta_i)\in[0,\pi]\times\SSS^{m-1}$ from the following density function
\$
p(r,\Theta)\propto e^{-\frac{r^2}{2\sigma^2}}\sin^{m-1}(r),\quad r\in[0,\pi],\Theta\in\SSS^{m-1}.
\$
Since $r$ and $\Theta$ are independent, we can use a two-step sampling:
\begin{itemize}
    \item Sample $\{\Theta_i\}_{i=1}^n\subseteq T_{\alpha}\SSS^m$ uniformly from the unit sphere $\SSS^{m-1}$ in $T_{\alpha}\SSS^m$;
    \item Generate $\{r_i\}_{i=1}^n\subseteq[0,\pi]$ independently from the following distribution
    \$
    p(r)\propto e^{-\frac{r^2}{2\sigma^2}} \sin^{m-1}(r),\quad r\in[0,\pi].
    \$
    This step can be efficiently done via a rejection sampling method~\citep{liu2001monte}. Specifically, we can sample an $r$ uniformly over $[0,\pi]$ and then accept  such sample with probability
    \$
    p_0=\frac{e^{-\frac{r^2}{2\sigma^2}}\sin^{m-1}(r)}{e^{-\frac{r_*^2}{2\sigma^2}}\sin^{m-1}(r_*)},\quad\textnormal{where } r_*=\argmax_{r\in[0,\pi]}e^{-\frac{r^2}{2\sigma^2}}\sin^{m-1}(r).
    \$
\end{itemize}
By setting $v_i=r_i\Theta_i$ and $x_i=\Exp_\alpha(v_i)$, we obtain samples from $RN(\alpha,\sigma)$ on $\SSS^m$. Once we can sample data from $RN(\alpha,\sigma)$ efficiently, it is easy to sample data from the geodesic factor regression model \eqref{equ:3.4} and the MGFA model \eqref{equ:3.6} accordingly.

\subsubsection{Data assignment}

Now we discuss data assignment when provided with the estimated parameters $\hat \Upsilon$. As mentioned in Section \ref{sec:alg-3.2}, we assign samples to the cluster with the highest likelihood, ${\rm prob}_{ik}=\hat w_k\cdot f^{\rm approx}(x_i;\hat\alpha_k,\hat V_k,\hat\sigma_k)$. The only remaining challenge in calculating ${\rm prob}_{ik}$ is the calculation of the normalizing constant $Z(\hat\sigma_k)$. In this section, we address this challenge within the context of the $m$-sphere $\SSS^m$. Specifically, by definition, the normalizing constant $Z(\sigma)$ is given by
\$
Z(\sigma)=\int_{\cM}e^{-\frac{d_g^2(x,\alpha)}{2\sigma^2}}\dvol(x),
\$
where the definition is independent of the choice of $\alpha$.  Since $\cM=\SSS^m,$ we can use \eqref{equ:int-sphere} to rewrite $Z(\sigma)$ as follows
\#
Z(\sigma)=\vol(\SSS^{m-1})\int_0^\pi e^{-\frac{r^2}{2\sigma^2}}\sin^{m-1}(r)dr.\label{equ:6.1}
\#
The integral on the right hand side can be efficiently computed using the function \texttt{quad} in Python, since it is an integral on a bounded interval $[0,\pi]$.

\subsubsection{Model estimation}\label{sec:f.1.4}

Now let us discuss the estimation of the geodesic factor regression models when given data $\{x_i\}\subseteq\SSS^m$. Recall that in Section \ref{sec:3} we propose to estimate the location $\alpha$ and vectors $V$ using the \Frechet mean of $\{x_i\}$ and principal component analysis, respectively. The \Frechet mean of $\{x_i\}$ can be computed by a gradient method~\citep{fletcher2013geodesic} or simply approximated by a heuristic estimator $\frac{\sum x_i}{\norm{\sum x_i}}$. Suppose we have estimated $\alpha$ and $V$. It remains to estimate $\sigma$ as follows
\#
\hat\sigma=\argmax_{\sigma}\frac{1}{Z(\sigma)}e^{-\frac{{\rm res}}{2\sigma^2}}=\argmin_{\sigma}\log(Z(\sigma))+\frac{\rm res}{2\sigma^2},\label{equ:6.2}
\#
where ${\rm res}>0$ is a known average squared residual error. Taking derivative of the objective function in \eqref{equ:6.2}, we know
\$
\ell(\sigma)\coloneqq\frac{Z'(\sigma)}{Z(\sigma)}-\frac{\rm res}{\sigma^3}
\$
contains $\hat\sigma$ as a root. By \eqref{equ:6.1}, we know
\$
\ell_1(\sigma)\coloneqq \int_{0}^{\pi}e^{-\frac{r^2}{2\sigma^2}}\sin^{m-1}(r)r^2dr-{\rm res}\cdot \int_{0}^{\pi}e^{-\frac{r^2}{2\sigma^2}}\sin^{m-1}(r)dr
\$
also contains $\hat\sigma$ as a root. Since for any given $\sigma$, $\ell_1(\sigma)$ can be efficiently computed by \texttt{quad} in Python, the root $\hat\sigma$ can be estimated via a bisection method. When $\sigma$ or ${\rm res}$ is small, we can also use a variational method. In this case,
\$
\ell_1(\sigma)&\approx\int_0^{\infty}e^{-\frac{r^2}{2\sigma^2}}r^{m+1}dr-{\rm res}\cdot \int_{0}^\infty e^{-\frac{r^2}{2\sigma^2}}r^{m-1}dr\\
&=2^{\frac{m-2}{2}}\sigma^{m}\cdot\left(2\sigma^2\cdot\int_0^\infty e^{-t}t^{\frac{m}{2}}dt-{\rm res}\cdot\int_0^\infty e^{-t}t^{\frac{m}{2}-1}dt\right),
\$
which implies that $\hat\sigma\approx \sqrt{\frac{\rm res}{m}}.$ This is an accurate approximation when $\hat\sigma\leq 0.05$.

\subsection{Hyperbolic spaces $\HH^{m}$}

This section studies hyperbolic spaces $\cM=\HH^{m}$. We  first review the geometry of $\HH^m$ in Section \ref{sec:f.2.1}. Then we delve into the implementation details of MGFA in later sections.

\subsubsection{Basic geometry of $\HH^{m}$}\label{sec:f.2.1}

In this paper, we will use two equivalent models to represent hyperbolic space $\HH^{m}$: hyperboloid model and \Poincare ball model. We use the hyperboloid model in the computation  and the \Poincare ball model in visualization. Here we review hyperboloid model and refer readers to \citep{benedetti1992lectures} for transformation between these two models. In the hyperboloid model, $\HH^m$ is an  unit imaginary sphere in the Minkowski space. The Minkowski space is $\RR^{m+1}$ endowed with the Minkowski bilinear form
    \$
    \inner{x}{y}_{M}=-x_0y_0+\sum_{i=1}^mx_iy_i.
    \$
    The hyperboloid is given by the unit imaginary sphere
    \$
    \HH^{m}=\{x\in\RR^{m+1}\mid \inner{x}{x}_{M}=-1,x_0>0\}.
    \$
    The tangent space to $\HH^m$ at $x$ is
    \$
    T_x\HH^m=\{v\in\RR^{m+1}\mid \inner{x}{v}=0\}.
    \$
    Since the Minkowski  bilinear form restricted to $T_x\HH^m$ is an inner product, this gives the Riemannian metric on $\HH^m$. Given a base $x\in\HH^m$ and a tangent vector $v\in T_x\HH^m$, the exponential map is given by
    \$
    \Exp_x(v)=\cosh(\norm{v})x+\sinh(\norm{v})v/\norm{v}.
    \$
    Given $x,y\in\HH^m$, the distance between $x$ and $y$ is $d(x,y)=\operatorname{arccosh}(-\inner{x}{y}_{M})$, and the logarithm map is
    \$
    \Log_x(y)=d(x,y) \cdot (y - \cosh(d(x, y)) x)/\sinh(d(x, y)).
    \$
    The segment domain of $\Exp_\alpha$ is the whole tangent space. With the normal coordinate chart, the volume form can be written as
    \$
    \dvol=\sinh^{m-1}(r)drd\Theta,
    \$
    where $(r,\Theta)\in[0,\infty)\times \SSS^{m-1}$ is the polar coordinate. For any function $f(x)=h(d_g(x,\alpha))$ on $\HH^m$, its integral is given by
    \#\label{equ:int-hyperboloid}
    \int_{\HH^m}f(x)\dvol(x)=\vol(\SSS^{m-1})\int_0^\infty f(r)\sinh^{m-1}(r)dr.
    \#
    This is similar to the integral \eqref{equ:int-sphere} in the spherical case. We will utilize this formula to design our data simulation, model estimation, and data clustering procedures.

\subsubsection{Data generation}

In this section, we present a data sampling procedure based on inverse cumulative distribution function. Similar to the cases of spheres and shape spaces, the key challenge in simulation is to sample data from the Riemannian Gaussian distribution $RN(\alpha,\sigma)$. In this work, we propose to use the polar coordinate system on the normal coordinate chart, and then it suffices to sample $(r,\Theta)$ from the following density function:
\$
p(r,\Theta)\propto e^{\frac{-r^2}{2\sigma^2}}\sinh^{m-1}r,\quad r\in[0,\infty), \Theta\in\SSS^{m-1}.
\$
Since $r$ and $\Theta$ are independent, we can first sample $\Theta$ uniformly from the unit sphere $\SSS^{m-1}$ and then sample $r$ using the inverse cumulative distribution function method. Then we can combine $r$ and $\Theta$ to find the desired data. In practice, this is effective when the dimension $m$ is low.

\subsubsection{Data assignment and model estimation}

Similar to the spherical case, it suffices to compute the normalizing constant $Z(\sigma)$ for data assignment. We shall use the integral formula \eqref{equ:int-hyperboloid} and the python function \texttt{quad} for the computation. As for model estimation, we will use similar strategies as in Section \ref{sec:f.1.4}. In particular, when $\sigma$ is small, we will estimate $\sigma$ by $\hat\sigma\approx\sqrt{{\rm res}/{m}}$, same as the spherical case.

\begin{table}[t]
    \centering
    \caption{The mean (and standard deviation) of the RI and ARI for Ward clustering, movMF, skmeans, MoRN, and MGFA on $\SSS^m$ in various settings. Bold numbers indicate the best results in the corresponding settings.}
    \vspace{10pt}
    \begin{adjustbox}{width=\columnwidth}
    \begin{tabular}{ccccccc}
        \midrule[1pt]
      Setting  & Criteria & Ward clustering & movMF & skmeans & MoRN & MGFA  \\
       \midrule[0.1pt]
        \multirow{2}{*}{$\SSS^2$}& RI & 0.65 (0.10) & 0.59 (0.08) & 0.73 (0.09) & 0.73 (0.09)  & \textbf{0.93} (0.02) \\

       &ARI &  0.29 (0.19) &  0.18 (0.16) & 0.45 (0.19) &  0.46 (0.17) & \textbf{0.86} (0.04) \\
        \midrule[0.1pt]
        \multirow{2}{*}{$\SSS^{300}$}& RI & 0.54 (0.04)  & 0.51 (0.02) & 0.50 (0.002) & 0.50 (0.002)  & \textbf{0.92} (0.05) \\
       & ARI & 0.07 (0.09) & 0.01 (0.04) & 0.002 (0.004) & 0.002 (0.004) & \textbf{0.85} (0.10)  \\
       \midrule[0.1pt]
        \multirow{2}{*}{multi-class}& RI & 0.72 (0.03) & 0.70 (0.01) & 0.71 (0.01) &  0.71 (0.01) & \textbf{0.94} (0.04)  \\
       & ARI & 0.30 (0.05) & 0.24 (0.03) & 0.26 (0.02) & 0.25 (0.02) & \textbf{0.83} (0.10) \\
       \midrule[0.1pt]
       \multirow{2}{*}{mis-spec - A}& RI & 0.99 (0.006) & 0.97 (0.07)  & \textbf{1.00} (0.001)& \textbf{1.00} (0.001) & 0.99 (0.03) \\
       & ARI & 0.97 (0.01)  & 0.94 (0.15) & \textbf{1.00} (0.002) & \textbf{1.00} (0.002) & 0.98 (0.06)\\
       \midrule[0.1pt]
       \multirow{2}{*}{mis-spec - B}& RI & 0.53 (0.05)  & 0.51 (0.02) & 0.50 (0.004) & 0.50 (0.004)  &  \textbf{0.75} (0.03) \\
       & ARI & 0.07 (0.09) & 0.02 (0.05) &  0.005 (0.008) & 0.005 (0.008) & \textbf{0.50} (0.07)  \\
       \midrule[0.1pt]
       \multirow{2}{*}{mis-spec - C}& RI & 0.55 (0.06)  & 0.51 (0.03) & 0.50 (0.005) & 0.50 (0.006)  &  \textbf{0.93} (0.05) \\
       & ARI & 0.09 (0.11) & 0.02 (0.05) &  0.005 (0.01) & 0.007 (0.01) & \textbf{0.86} (0.10)  \\
       \midrule[1pt]
    \end{tabular}
    \end{adjustbox}
    \label{tab:1}
\end{table}

\section{Simulation on Spheres}\label{sec:sim-sphere}

We evaluate MGFA for spherical data clustering under diverse scenarios, comparing its performance with established methods including Ward clustering, mixture of von Mises-Fisher distributions (movMF) \citep{banerjee2005clustering,hornik2014movmf}, spherical $k$-means (skmeans) \citep{hornik2012spherical}, and MoRN. For each experiment, we measure clustering accuracy using the Rand index (RI)~\citep{rand1971objective} and adjusted Rand index (ARI).

\subsection{2-dimensional spherical data clustering}\label{sec:sim-sphere-s2}

For visual clarity, we first cluster data points sampled from the MGFA model  \eqref{equ:3.6} on $\SSS^2$, with two groups and a single latent factor:
\$
\begin{array}{lllll}
    \textnormal{(G1)} & \alpha_1=(1,0,0),&v_1=(0,1,0), &\sigma_1=0.05, &\omega_1=0.4,\\
    \textnormal{(G2)} & \alpha_2=(-0.6,0,-0.8), &v_2=(0,1,0), &\sigma_2=0.1, &\omega_2=0.6.
\end{array}
\$
We draw $n=500$ samples per trial and cluster them using each method. A representative result is depicted in Figure~\ref{fig:1}. The average RI and ARI scores over 100 trials (Table~\ref{tab:1}) demonstrate MGFA's superior capability in accurately identifying latent structures compared to other methods, with an RI of 0.93 versus $\leq0.73$ for alternatives.

\subsection{High-dimensional spherical data clustering}

We next assess MGFA performance on high-dimensional spheres ($\SSS^{300}$) using two clusters with one latent factor:
\$
\begin{array}{lllll}
    \textnormal{(G1)} & \alpha_1=(1,0,\ldots), & v_1 = (0,0,0.8,0.8,0,\ldots), & \sigma_1=0.03, & \omega_1=0.4, \\
    \textnormal{(G2)} & \alpha_2 = (0,1,0,\ldots), & v_2 = (0,0,0.8,0.8,0,\ldots), & \sigma_1=0.03, & \omega_2 = 0.6.
\end{array}
\$
After 100 trials with $n=1000$ samples per trial, MGFA consistently outperformed other methods with an RI of 0.92, significantly surpassing the best alternative RI ($\leq0.54$) (Table~\ref{tab:1}).

\subsection{Multi-class clustering}\label{sec:multiclass}

In a multi-class setting on $\SSS^{100}$, we test MGFA with four distinct groups and a single latent factor:
\$
\begin{array}{lllll}
\textnormal{(G1)}&    \alpha_1=(1,0,\ldots), & v_1 = (0,0,0.8,0.8,0,\ldots), & \sigma_1=0.05, & \omega_1=0.2, \\
\textnormal{(G2)}&     \alpha_2 = (0,1,0,\ldots), & v_2 =  (0,0,0.8,0.8,0,\ldots), & \sigma_1=0.05, & \omega_2 = 0.3,\\
\textnormal{(G3)}&     \alpha_3=(0,0,1,0,\ldots), & v_3 = (0,0,0,0.8,0.8,0,\ldots), & \sigma_1=0.03, & \omega_1=0.2, \\
\textnormal{(G4)}&     \alpha_4 = (-0.6,-0.8,0,\ldots), & v_4 = (0,0,0,0.8,0.8,0,\ldots), & \sigma_1=0.03, & \omega_2 = 0.3.
\end{array}
\$
MGFA again demonstrates superior performance with an average RI of 0.94, substantially outperforming alternative methods ($\leq0.72$) across 100 trials (Table~\ref{tab:1}).

\subsection{Robustness in misspecified settings}

In the last experiment, we examine the robustness of MGFA over a variety of misspecified models. In particular, we consider misspecified factor dimension in models A and B, and misspecified noise type in model C.
\begin{itemize}
    \item[(A)] We use the following MGFA model on $\SSS^{100}$ with $K=3$ groups and zero latent factor:
    \$
    \begin{array}{lllll}
       \textnormal{(G1)} & \alpha_1=(1,0,\ldots),  &  v_1 =\zero, & \sigma_1=0.15, & \omega_1 = 0.2,\\
       \textnormal{(G2)} & \alpha_2=(0,1,0,\ldots),  & v_2=\zero, &\sigma_2=0.15, &\omega_2=0.3,\\
       \textnormal{(G3)} & \alpha_3=(0,0,1,0,\ldots),  & v_3 = \zero, & \sigma_3=0.15, &\omega_3=0.5.
    \end{array}
    \$
    \item[(B)] We use the following MGFA model on $\SSS^{100}$ with $K=2$ groups and two latent factors:
\$
\begin{array}{lllll}
\textnormal{(G1)}&     \alpha_1=(1,0,\ldots), & v_{11}=(0, 0, 0.6,0.6, 0,\ldots), & \sigma_1 = 0.03, & \omega_1=0.4, \\
 &    & v_{12}=(0,0,0,0, 0.6,0.6,0,\ldots ), & & \\
 \textnormal{(G2)}&     \alpha_2=(0,1,0,\ldots), & v_{21}=(0, 0, 0.6,0.6, 0,\ldots), & \sigma_2=0.03, &\omega_2 = 0.6,\\
  &   & v_{22}=(0, 0, 0, 0.6, 0.6, 0,\ldots). & &
\end{array}
\$
\item[(C)] We use the following MGFA model on $\SSS^{100}$ with $K=2$ groups and one latent factor:
\$
\begin{array}{lllll}
       \textnormal{(G1)} & \alpha_1=(1,0,\ldots),  &  v_1 =(0, 0, 0.8, 0.8, 0,\ldots), & \kappa_1=1000, & \omega_1 = 0.4,\\
       \textnormal{(G2)} & \alpha_2=(0,1,0,\ldots),  & v_2=(0, 0, 0.8, 0.8, 0,\ldots), &\kappa_2=1000, &\omega_2=0.6,
\end{array}
\$
where we use vMF distributions rather than Riemannian Gaussian distributions for noise modeling in MGFA, and $\kappa_1,\kappa_2$ are the concentration parameters of vMF distributions.
\end{itemize}
For each of these models, we draw $n=500$ samples and then cluster them into $K$ groups using MGFA with $q=1$, MoRN, Ward clustering, movMF, and skmeans. We run the experiments for 100 times and report the average RIs and ARIs of different clustering methods in Table \ref{tab:1}. The results show that when the actual factor dimension is zero (A), all methods work very well including the misspecified MGFA with $q=1$. In addition, when the actual factor dimension is larger than 1 (B), the MGFA with $q=1$, though misspecified, still achieves better performance than all the rest methods. Moreover, the MGFA method is robust to the noise type as demonstrated in case C. Overall, these results demonstrate the robustness of MGFA under different factor dimensions and noise types.

\section{Simulation on Shape spaces $\CC\PP^m$} \label{sec:sim-shape}

This subsection investigates data clustering on shape spaces $\CC\PP^m$ under various scenarios, comparing MGFA with two alternative approaches that do not incorporate factor structures: Ward clustering and MoRN. To evaluate and compare the performance of these methods, we compute  RI and  ARI.

\begin{table}[t]
    \centering
    \caption{The mean (and standard deviation) of RIs and ARIs for Ward clustering, MoRN, and MGFA on $\CC\PP^m$ in various settings. Numbers in bold indicate the best results in the corresponding settings.}
    \vspace{10pt}

    \begin{tabular}{ccccc}
    \midrule[1pt]
        Setting & Criteria & Ward clustering & MoRN & MGFA  \\
        \midrule[0.1pt]
        \multirow{2}{*}{$\CC\PP^3$} & RI & 0.70 (0.08) & 0.59 (0.10) & \textbf{0.89} (0.06) \\
         & ARI & 0.40 (0.16) & 0.18 (0.20) & \textbf{0.79} (0.12)\\
         \midrule[0.1pt]
        \multirow{2}{*}{multi-class} & RI & 0.86 (0.05) & 0.75 (0.05) & \textbf{0.95} (0.03) \\
         & ARI & 0.70 (0.11) & 0.46 (0.09) & \textbf{0.89} (0.06) \\
         \midrule[0.1pt]
         \multirow{2}{*}{mis-spec - A} & RI & 0.86 (0.03) & \textbf{0.98} (0.01) & 0.97 (0.01) \\
         & ARI & 0.68 (0.06) & \textbf{0.95} (0.02) & 0.93 (0.02) \\
         \midrule[0.1pt]
         \multirow{2}{*}{mis-spec - B} & RI &  0.67 (0.07) & 0.54 (0.07)  & \textbf{0.71} (0.04)  \\
         & ARI & 0.33 (0.15) & 0.08 (0.15) & \textbf{0.43} (0.08)  \\
         \midrule[0.1pt]
         \multirow{2}{*}{mis-spec - C} & RI &  0.86 (0.07) & 0.94 (0.06)  & \textbf{0.96} (0.02)  \\
         & ARI & 0.73 (0.13) & 0.88 (0.13) & \textbf{0.93} (0.04)  \\
         \midrule[1pt]
    \end{tabular}

    \label{tab:2}
\end{table}

\subsection{Clustering in $\CC\PP^3$}

In the first experiment, we examine binary clustering in $\CC\PP^3$. Specifically, we consider the following MGFA model with two groups and one factor dimension:
\$
\begin{array}{lllll}
     \textnormal{(G1)} & \alpha_1=(1,0,0,0)\in\CC^4,&v_1=(0,0.7,0.7,0)\in\CC^4, &\sigma_1=0.05, &\omega_1=0.4,\\
    \textnormal{(G2)} & \alpha_2=(0,0,0,1)\in\CC^4, &v_2=(0,0.7,0.7,0)\in\CC^4, &\sigma_2=0.05, &\omega_2=0.6,
\end{array}
\$
where the equivalent class $[\alpha_k]=\{c\cdot \alpha_k\mid c\in\SSS^1\}\in\CC\PP^{3}$ is the base point and  $\bar v_k\in T_{[\alpha_k]}\CC\PP^{3}$ is the tangent vector associated with $v_k$. Readers may refer to Section \ref{sec:6.2.1} for a review of these concepts. In each trial, we draw $n=500$ samples from the MGFA model, and then apply MGFA with $q=1$, Ward clustering, and MoRN to produce two clusters. The average RIs and ARIs across 100 repeated trials are reported in Table \ref{tab:2}. The results demonstrate that MGFA, with an average RI (0.89), significantly outperforms Ward clustering and MoRN, whose average RIs are at most 0.7.

\subsection{Multi-class clustering in $\CC\PP^{30}$}\label{sec:7.2.2}

In the second experiment, we focus on multi-class clustering on $\CC\PP^{30}$, and consider the following MGFA model with three groups and one latent factor, whose parameters $\{[\alpha_k],\bar v_k,\sigma_k,\omega_k\}_{k=1}^K$ are given by
\$
\begin{array}{lllll}
     \textnormal{(G1)} & \alpha_1=(1,0,\ldots) \in\CC^{31},&v_1=(0,0,0.7,0,\ldots) \in\CC^{31}, &\sigma_1=0.05, &\omega_1=0.4,\\
    \textnormal{(G2)} &\alpha_2=(0,1,0,\ldots) \in\CC^{31}, &v_2=(0,0,0.7,0,\ldots) \in\CC^{31}, &\sigma_2=0.05, &\omega_2=0.3,\\
    \textnormal{(G3)} &\alpha_3=(0,0,1,0,\ldots)\in\CC^{31}, &v_3=(0,0,0,0.7,0,\ldots)\in\CC^{31}, &\sigma_3=0.05, &\omega_3=0.3,
\end{array}
\$
Here $[\alpha_k]\in\CC\PP^{30}$ stands for the equivalent class $\{c\cdot \alpha_k\mid c\in\SSS^1\}$ and $\bar v_k\in T_{[\alpha_k]}\CC\PP^{30}$ is the tangent vector associated with $v_k$. In each trial, we generate $n=500$ samples from the model and then cluster them into three groups using  MGFA with $q=1$, Ward clustering, and MoRN. We repeat the experiment for 100 times and calculate the mean and standard deviation of the RIs and ARIs for these three methods. The results, as displayed in Table \ref{tab:2}, show the superiority of MGFA over Ward clustering and MoRN.

\subsection{Misspecified models}

In the third experiment, we examine the robustness of MGFA across diverse misspecified settings: A and B misspecify the factor dimension and C misspecifies the noise type.
\begin{itemize}
    \item[(A)] We use the following MGFA model on $\CC\PP^{30}$ with $K=3$ groups and zero factor:
    \$
    \begin{array}{lllll}
        \textnormal{(G1)} & \alpha_1=(1,0,\ldots)\in\CC^{31}, & v_1=\zero, & \sigma_1=0.2, & \omega_1=0.4, \\
        \textnormal{(G2)} & \alpha_2=(0,1,0,\ldots)\in\CC^{31},  & v_2=\zero, & \sigma_2=0.2, & \omega_2= 0.3, \\
        \textnormal{(G3)} & \alpha_3=(0,0,1,0,\ldots)\in\CC^{31}, & v_3=\zero, & \sigma_3 = 0.2, & \omega_3=0.3.
    \end{array}
    \$
    Here the equivalent class $[\alpha_k]=\{c\cdot \alpha_k\mid c\in\SSS^1\}\in\CC\PP^{30}$ denotes the base point.

    \item[(B)] We use the following MGFA model on $\CC\PP^{30}$ with $K=2$ groups and two factors:
    \$
    \begin{array}{lllll}
       \textnormal{(G1)} & \alpha_1=(1,0,\ldots)\in\CC^{31},  & v_{11}=(0,0,1,0,\ldots)\in\CC^{31} & \sigma_1=0.05, & \omega_1=0.4, \\
         && v_{21}=(0,0,0,0.5,0,\ldots)\in\CC^{31}, & & \\
       \textnormal{(G2)} &\alpha_2=(0,1,0,\ldots)\in\CC^{31},  & v_{21}=(0,0,1,0,\ldots)\in\CC^{31}, & \sigma_2=0.05, &\omega_2=0.6, \\
         && v_{22}=(0,0,0,0.5,0,\ldots)\in\CC^{31},& &
    \end{array}
    \$
    where $[\alpha_k]=\{c\cdot \alpha_k\mid c\in\SSS^1\}$ denotes the base point.
    \item[(C)] We use the following MGFA model on $\CC\PP^{50}$ with $K=2$ groups and one factor:
    \$
    \begin{array}{lllll}
        \textnormal{(G1)} & \alpha_1=(1,0,\ldots)\in\CC^{51}, & v_1=(0,0,0.7,0,\ldots) \in\CC^{51},& \sigma_1=0.005, & \omega_1=0.4, \\
        \textnormal{(G2)} & \alpha_2=(0,1,0,\ldots)\in\CC^{51},  &v_2=(0,0,0.7,0,\ldots) \in\CC^{51}, & \sigma_2=0.005, & \omega_2= 0.3.
    \end{array}
    \$
    where we use Riemannian Laplacian distributions rather than Riemannian Gaussian distributions for noise modeling in MGFA, and $\sigma_k$ is the parameter of Riemannian Laplacian distributions.
\end{itemize}
For each setting, we generate $n=500$ samples from the specified model and then cluster the data into $K$ groups using MGFA with $q=1$, Ward clustering, and MoRN. Each experiment is repeated 100 times, and we report the mean and standard deviation of the RIs and ARIs in Table~\ref{tab:2}. The results indicate that, for model A, the correctly specified MoRN method achieves the best performance, while MGFA with $q=1$ remains competitive, demonstrating robustness despite the misspecified factor dimension. For model B, MGFA outperforms the other methods, as its modeling assumptions closely align with the true underlying structure. Lastly, in model C, MGFA again achieves superior performance, highlighting its robustness to misspecifications in the noise distribution.

\section{Simulation on hyperbolic spaces $\mathbb{H}^{m}$}\label{sec:sim-hyperboloid}

This section investigates data clustering on hyperbolic spaces $\HH^m$, comparing MGFA with two alternative methods—Ward clustering and MoRN—that do not account for factor structures. To assess the performance of each method, we use  RI and ARI as evaluation metrics.

\subsection{2-dimensional hyperbolic data clustering}

For the purpose of visualization, we first consider data clustering on $\HH^{2}$. In this experiment, we consider the following MGFA model with two groups and one latent factor:
\$
\begin{array}{lllll}
    \textnormal{(G1)} & \alpha_1=(3,2,2),&v_1=(2,1,2), &\sigma_1=0.1, &\omega_1=0.4,\\
    \textnormal{(G2)} & \alpha_2=(1,0,0), &v_2=(0,1,0), &\sigma_2=0.1, &\omega_2=0.6,
\end{array}
\$
where $\alpha_k\in\{x\in\RR^{3}\mid x_1^2=1+x_2^2+x_3^2,x_1>0\}$ uses the hyperboloid representation and $v_k\in T_{\alpha_k}\HH^{2}=\{v\in\RR^{3}\mid v_1x_1=v_2x_2+v_3x_3\}$ is the tangent vector. In a single trial, we draw $n=100$ samples from the model and then cluster them into two groups using MGFA with $q=1$, MoRN, or Ward clustering. The visualization of the clustering results is displayed in Figure \ref{fig:3}. For each method, we compute the RI and ARI after clustering. The average RI and ARI across 100 repeated trials are shown in Table \ref{tab:Hyperboloid}. As displayed in Figure \ref{fig:3}, MGFA successfully recovers the latent factor structure while the other two methods fail to recover the hidden factor pattern. As a result, MGFA achieves a significantly higher RI (0.99) than its competitors ($\leq 0.76)$.

\begin{figure}[t]
    \centering
    \includegraphics[width=\linewidth]{Figure/hyperboloid.pdf}
    \caption{Visualization of clustering results on $\HH^{2}$. The first panel displays the actual clusters, while the rest panels display clustering results for Ward clustering, MoRN and MGFA, respectively. The RIs for these methods are provided in parentheses. The data are displayed using Poincar\'e representation. }
    \label{fig:3}
\end{figure}

\subsection{Multi-class clustering on $\HH^{10}$}

In the second experiment, we consider multi-class clustering on $\HH^{10}$, and examine the following MGFA model with three groups and one latent factor:
\$
\begin{array}{lllll}
    \textnormal{(G1)} & \alpha_1=(3,2,2,0,\ldots)\in\RR^{11}, &v_1=(2,1,2,0,\ldots)\in\RR^{11}, &\sigma_1=0.1, &\omega_1=0.4,\\
    \textnormal{(G2)} & \alpha_2=(1,0,\ldots)\in\RR^{11}, &v_2=(0,1,0,\ldots)\in\RR^{11}, &\sigma_2=0.1, &\omega_2=0.3,\\
    \textnormal{(G3)} & \alpha_3=(2,-\sqrt{2},-1,0,\ldots)\in\RR^{11}, &v_3=(1,0,-2,0,\ldots)\in\RR^{11}, &\sigma_3=0.1, &\omega_3=0.3,
\end{array}
\$
Here $\alpha_k\in\{x\in\RR^{11}\mid x_1^2=1+\sum_{i\geq 2}x_i^2,x_1>0\}$ uses the hyperboloid representation and $v_k\in T_{\alpha_k}\HH^{10}=\{v\in\RR^{11}\mid v_1x_1=\sum_{i\geq 2}v_ix_i\}$ denotes the tangent vector. For any single trial, we generate $n=100$ data points from the MGFA model, and then cluster data into three clusters using MGFA with $q=1$, MoRN, and Ward clustering. The average RIs and ARIs across 100 repeated trials are reported in Table \ref{tab:Hyperboloid}. The average RI of MGFA (0.90) is significantly greater than the average RIs of the other methods ($\leq0.76)$, demonstrating the superiority of MGFA in multi-class clustering.

\begin{table}[t]
    \centering
    \caption{The mean (and standard deviation) of RIs and ARIs for Ward clustering, MoRN, and MGFA on $\HH^m$ in various settings. Numbers in bold indicate the best results in the corresponding settings.}
    \vspace{10pt}

    \begin{tabular}{ccccc}
    \midrule[1pt]
        Setting & Criteria & Ward clustering & MoRN & MGFA  \\
     \midrule[0.1pt]
        \multirow{2}{*}{$\HH^2$} & RI & 0.76 (0.12)   & 0.71 (0.10) & \textbf{0.99} (0.01)   \\
         & ARI &  0.51 (0.25) & 0.41 (0.20)  & \textbf{0.98} (0.03) \\
         \midrule[0.1pt]
        \multirow{2}{*}{multi-class} & RI & 0.76 (0.04)  & 0.75 (0.03)  & \textbf{0.90} (0.06) \\
         & ARI & 0.49 (0.09)  & 0.47 (0.08)  & \textbf{0.78} (0.12)   \\
         \midrule[0.1pt]
         \multirow{2}{*}{mis-spec - A} & RI & 0.90 (0.08) & \textbf{0.95} (0.03)  & 0.89 (0.05) \\
         & ARI & 0.80 (0.15)  & \textbf{0.89} (0.07)   & 0.79 (0.11) \\
         \midrule[0.1pt]
         \multirow{2}{*}{mis-spec - B} & RI &  0.62 (0.08) & 0.64 (0.08)  & \textbf{0.70} (0.09)  \\
         & ARI & 0.23 (0.17) & 0.28 (0.17) & \textbf{0.39} (0.19)  \\
         \midrule[0.1pt]
         \multirow{2}{*}{mis-spec - C} & RI &  0.82 (0.23) & 0.70 (0.21)  & \textbf{0.99} (0.07)  \\
         & ARI & 0.64 (0.46) & 0.40 (0.42) & \textbf{0.97} (0.14)  \\
         \midrule[1pt]
    \end{tabular}

    \label{tab:Hyperboloid}
\end{table}

\subsection{Robustness in misspecified settings}

In the third experiment, we examine the robustness of MGFA across diverse misspecified settings: A and B misspecify the factor dimension and C misspecifies the noise type.
\begin{itemize}
    \item[(A)] We use the following MGFA model on $\HH^{10}$ with two groups and zero factor:
    \$
\begin{array}{lllll}
    \textnormal{(G1)} & \alpha_1=(3,2,2,0,\ldots)\in\RR^{11}, &v_1=(2,1,2,0,\ldots)\in\RR^{11}, &\sigma_1=0.5, &\omega_1=0.4,\\
    \textnormal{(G2)} & \alpha_2=(2,-\sqrt{2},-1,0,\ldots)\in\RR^{11}, &v_2=(1,0,-2,0,\ldots)\in\RR^{11}, &\sigma_2=0.5, &\omega_2=0.6,
\end{array}
\$
where $\alpha_k\in\{x\in\RR^{11}\mid x_1^2=1+\sum_{i\geq 2}x_i^2,x_1>0\}$ uses the hyperboloid representation.
    \item[(B)] We use the following MGFA model on $\HH^{10}$ with two groups and two factors:
        \$
\begin{array}{lllll}
    \textnormal{(G1)} & \alpha_1=(3,2,2,0,...)\in\RR^{11}, &v_{11}=(4,2,4,0,...)\in\RR^{11}, &\sigma_1=0.1, &\omega_1=0.4,\\
    &&v_{12}=(0, \frac{1}{2},-\frac{1}{2},\frac{1}{2},0,...)\in\RR^{11}&&\\
    \textnormal{(G2)} & \alpha_2=(2,-\sqrt{2},-1,0,...)\in\RR^{11}, &v_{21}=(2,0,-4,0,...)\in\RR^{11}, &\sigma_2=0.1, &\omega_2=0.6,\\
    &&v_{22}=(0,0,\frac{1}{2},\frac{1}{2},0,...)\in\RR^{11}&&
\end{array}
\$
where $\alpha_k\in\{x\in\RR^{11}\mid x_1^2=1+\sum_{i\geq 2}x_i^2,x_1>0\}$ uses the hyperboloid representation.

    \item[(C)] We use the following MGFA model on $\HH^{10}$ with two groups and one factor:
    \$
\begin{array}{lllll}
    \textnormal{(G1)} & \alpha_1=(3,2,2,0,\ldots)\in\RR^{11}, &v_1=(2,1,2,0,\ldots)\in\RR^{11}, &\sigma_1=0.005, &\omega_1=0.4,\\
    \textnormal{(G2)} & \alpha_2=(2,-\sqrt{2},-1,0,\ldots)\in\RR^{11}, &v_2=(1,0,-2,0,\ldots)\in\RR^{11}, &\sigma_2=0.005, &\omega_2=0.6,
\end{array}
\$
where we use Riemannian Laplacian distributions rather than Riemannian Gaussian distributions for noise modeling in MGFA, and $\sigma_k$ is the parameter for Riemannian Laplacian distributions.
\end{itemize}
For each setting, we generate $n = 100$ samples and cluster them into two groups using MGFA with $q = 1$, MoRN, and Ward clustering. The average RI and ARI across 100 repetitions are presented in Table~\ref{tab:Hyperboloid}. In Model A, MoRN achieves the highest performance as it correctly matches the underlying data distribution, while MGFA also attains a comparably high accuracy, highlighting its robustness despite model misspecification. In Model B, MGFA yields the highest RI ($0.70$), outperforming both Ward clustering and MoRN (RI $\leq 0.64$), further confirming MGFA's resilience to misspecifications in factor dimensionality. Similarly, in Model C, MGFA significantly surpasses the alternatives, achieving an RI of $0.99$, compared to $0.82$ or lower for Ward clustering and MoRN, underscoring MGFA's robustness against noise-type misspecification.

\section{Real data analysis}\label{sec:real-data-analysis}

This paper conduct real data analysis for ADNI research. As the aging population grows, dementia—particularly Alzheimer's disease (AD)—has become a significant societal challenge \citep{prince2015world,mirzaei2022machine}. Despite substantial research, the cause of AD remains unclear, making early diagnosis and intervention critical \citep{mirzaei2022machine, rasmussen2019alzheimer}. Mild cognitive impairment (MCI), an intermediate stage between normal aging and AD, is a key target for early detection \citep{petersen2016mild}. Advances in neuroimaging and cerebrospinal fluid (CSF) biomarkers increasingly support accurate MCI diagnosis \citep{frisoni2017strategic}. However, the inherent heterogeneity of MCI poses challenges for precise diagnosis and effective treatment \citep{alashwal2019application}. Our objective is to use novel clustering methods in AD-related research.

\subsection{Corpus callosal shape analysis}\label{sec:cc-analysis}

We analyze the corpus callosal   shape dataset from the ADNI study. The corpus callosum (CC), the largest white matter structure in the brain, connects the cerebral hemispheres and plays a critical role in brain function, making it a focus of neuroimaging studies \citep{paul2007agenesis}. Examining CC shape provides insights into neurological disorders and aids in developing therapeutic approaches.
To address this, we use MGFA to cluster MCI subjects into more homogeneous subgroups based on CC shapes and analyze cluster features, including cognitive abilities, CSF biomarkers, and demographics. Given known sex differences in MCI \citep{au2017sex,li2014sex,laws2016sex,ferretti2018sex}, we stratify the cohort by sex and apply MGFA separately to female and male groups to explore sex-specific patterns.
Section \ref{sec:8.1.1} introduces CC shape geometry for MGFA. We then perform clustering on late MCI (LMCI) subjects, analyzing female and male cohorts separately (Sections \ref{sec:8.1.2-female} and \ref{sec:8.1.3-male}), followed by a comparative analysis.


\subsubsection{Kendall's shape spaces}\label{sec:8.1.1}


The Kendall's shape space is used to model the shapes of planar contours~\citep{kendall1984shape}, where the shape means the information that remains when location, scale, and orientation are removed~\citep{dryden2016statistical}. For any contour in $\RR^2$, it can be represented by a configuration of $m$ points in $\RR^2$, or a complex $m$-vector, $y\in\CC^{m}$. Location information can be removed by pre-multiplying by the Helmert submatrix $H\in\RR^{(m-1)\times m}$, $y_H=Hy\in\CC^{m-1}$, where the $j$-th row of $H$ is given by
\#\label{equ:helmert}
(h_j,\ldots,h_j,-jh_j,0,\ldots,0),\quad h_j=-j(j+1)^{-1/2},
\#
where $h_j$ are repeated $j$ times. The scale information is then removed by normalizing $y_H$, leading to $x={y_H}/{\norm{y_H}}\in\CC\SSS^{m-2}$. Finally, the orientation information can be removed by taking the equivalent class $[x]=\{c\cdot x\mid c\in\CC\}\in\CC\PP^{m-2}$. Consequently, a planar contour's shape is conceptualized as a point in a shape space, and the associated shape analysis simplifies to data analysis in a shape space.

\begin{figure}[t]
    \centering
    \includegraphics[width=\linewidth]{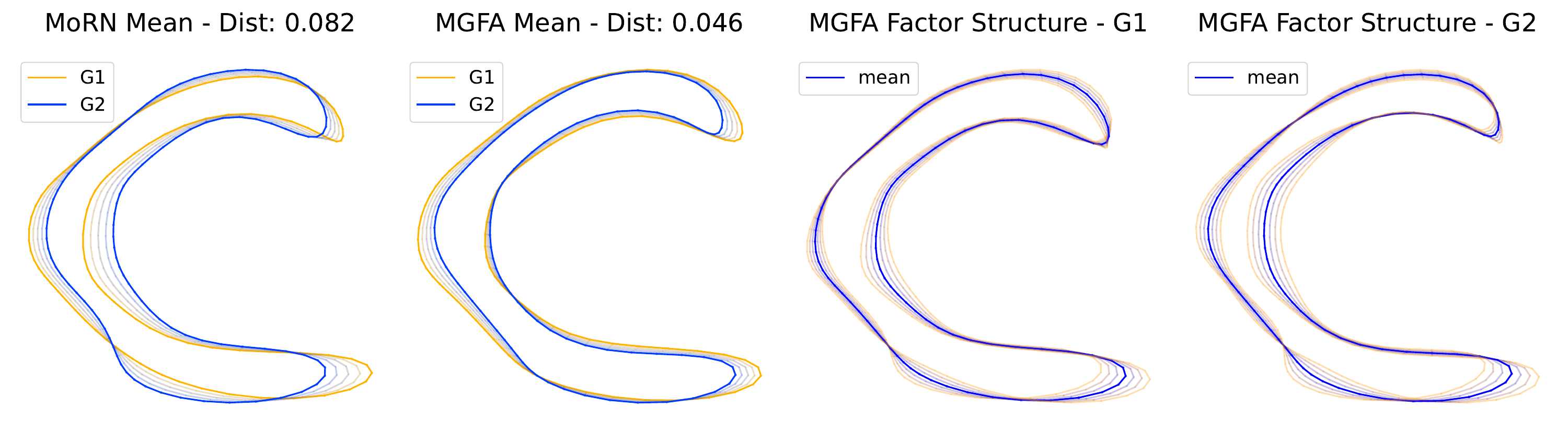}
    \caption{Clustering results for the female cohort. The first two panels display the mean CC shapes of two groups identified by MoRN and MGFA, along with the geodesics between these mean shapes. The geodesic distances between mean shapes are 0.082 for MoRN and 0.046 for MGFA. The rest two panels visualize the factor structures of two groups identified by MGFA. The thick lines represent the mean CC shapes, while the thin lines stand for  $\Exp(\alpha,{\rm length}\cdot v)$, where $\alpha$ is the mean shape, $v$ is a unit tangent vector modeling the factor structure, $\rm length$ is chosen from $[-l,l]$ equidistantly, and $l$ is half the maximum norm of estimated latent factors.}
    \label{fig:Female-Mean-Shape}
\end{figure}

\begin{table}[t]
    \centering
    \caption{FDR-adjusted p-values for subgroup differences in the female LMCI cohort.}
    \label{tab:model_test_female}
    \resizebox{\textwidth}{!}{%
    \begin{tabular}{cccccccc}
    \midrule[1pt]
         & AGE & EDU & A$\beta$ & PTAU & TAU & ADAS11 & ADAS13  \\
    \midrule[1pt]
       MoRN  & 0.635 & 0.442 & 0.442 &  0.052 & 0.094 & 0.690 & 0.794  \\
       MGFA & \textbf{0.000} & 0.631 & 0.855 & 0.806 & 0.690 & 0.631 & 0.549   \\

      \midrule[1pt]

        & MMSE  &  {immediate} &  {learning} &  {percent forgetting}& ST2SV & ST3SV  & ST4SV \\
      \midrule[1pt]
       MoRN & 0.638 & 0.962 & 0.690 & 0.794 & 0.613 & 0.442 & 0.503  \\
       MGFA &  0.631 & 0.631  & 0.351  & 0.893 & \textbf{0.028} & \textbf{0.013} & \textbf{0.008}  \\

      \midrule[1pt]

         & ST5SV   & ST6SV  & ST103CV  & ST88SV &  ST44CV & ST29SV & ICV \\
      \midrule[.5pt]
       MoRN  & 0.442 & 0.442 & 0.535 & 0.442 & 0.613 & 0.442 & 0.094 \\
       MGFA & \textbf{0.013} & \textbf{0.028}  & 0.855 & 0.193 &  0.879 & 0.549  & 0.855  \\
      \midrule[1pt]
       \multicolumn{8}{r}{Bold numbers  represent statistically  significant differences $(p<0.05)$.}
    \end{tabular}

    }
\end{table}

\subsubsection{Clustering female LMCI subjects}\label{sec:8.1.2-female}

In this section, we use MGFA to uncover meaningful subgroups within the female LMCI cohort. After excluding those with missing data, our final sample consists of 46 female LMCI subjects, aged 70.89 years on average (range: 55.1 - 84.0 years) and with an average education length of 15.34 years (range: 8.0-20.0 years). For each subject, we apply \texttt{FreeSurfer} \citep{fischl2012freesurfer} to process T1-weighted MRI and then use \texttt{CCseg} \citep{vachet2012automatic} to extract the planar CC contour data on the midsagittal slice. The output CC contour is given as a configuration of 100 points in $\RR^2$, and we will model its shape in the Kendall's shape spaces, as introduced in Section \ref{sec:8.1.1}. Besides the CC shape and demographic information, each subject is further characterized by  brain measures (cf. Table \ref{tab:5}, Appendix \ref{sec:d1}), CSF variables (A$\beta$, TAU, and PTAU), and cognitive test scores, including the Mini-Mental State Examination (MMSE) \citep{tombaugh1992mini}, Alzheimer's Disease Assessment Scale (ADAS) subscores (ADAS13 and ADAS11) \citep{kueper2018alzheimer}, and Rey Auditory Verbal Learning Test (RAVLT) \citep{rey1958examen}. Our objective is to stratify the female LMCI cohort into distinct subgroups and examine the subgroup differences in characteristics and within-group variable correlations.

Our study explores two methods for clustering the female LMCI cohort: MoRN, and MGFA with a latent dimension of $q=1$. We use these methods to produce two clusters, yielding sizes of 25 and 21 for MoRN, and 21 and 25 for MGFA. Figure \ref{fig:Female-Mean-Shape} visualizes the mean CC shapes and connecting geodesics for each subgroup, revealing that MoRN-identified subgroups exhibit larger differences in mean CC shapes compared to MGFA-identified subgroups, primarily in the trunk and splenium regions. The reduced between-cluster variations in MGFA can be attributed to the within-cluster factor structure, implying that the differences between clusters may be explained by the underlying factors. The third and fourth panels in Figure \ref{fig:Female-Mean-Shape} further illustrate this point, showing a stronger factor structure in the second MGFA-identified subgroup, particularly in the trunk and splenium regions.


Next, we compare the  subgroup differences in age, education length, brain measures, CSF variables, and cognitive scores, and use the Benjamini-Hochberg procedure \citep{benjamini1995controlling} to control the false discovery rate (FDR). As shown in Table \ref{tab:model_test_female}, MoRN-identified groups show significant difference in TAU and PTAU, while MGFA-identified clusters exhibit significant differences in age and sub-cortical volumes of the CC, including anterior, central, mid-anterior, mid-posterior, and posterior CC regions. To further characterize these subgroups, we examine the correlations between  CSF variables and cognitive test scores, aiming to identify potential patterns within individual subgroup. Figure \ref{fig:appendix-female} in the Appendix visualizes these correlations and FDR-adjusted p-values in heatmaps. Given the significance level $0.05$, we observe the following.
\begin{itemize}[leftmargin=*]
    \item The first MoRN-identified cluster exhibits significant correlations between A$\beta$ and several cognitive test scores (MMSE, RAVLT percent forgetting, and RAVLT immediate). This cluster also shows significant correlations between RAVLT percent forgetting and two other CSF variables (TAU and PTAU). In contrast, the second MoRN-identified cluster does not show any significant correlations between CSF variables and cognitive test scores.

    \item The first MGFA-identified cluster shows significant correlations between A$\beta$ and all cognitive scores (MMSE, ADAS13, ADAS11, RAVLT percent forgetting, RAVLT immediate, and RAVLT learning). The second MGFA-identified cluster does not show any significant correlations between  CSF variables and cognitive test scores.
\end{itemize}

These findings suggest that both MoRN and MGFA identify two distinct subgroups: one with significant correlations between CSF biomarkers and cognitive performance, and another without. The first  subgroup, identified by both models, comprises individuals more sensitive to CSF variables, which may indicate a higher risk of AD progression \citep{wolk2009amyloid,hansson2006association,blennow2004csf}.
A key distinction between MoRN and MGFA lies in the specific correlations within this subgroup. MGFA reveals significant associations between A$\beta$ and multiple cognitive scores (ADAS11, ADAS13, RAVLT learning), whereas MoRN identifies significant correlations between RAVLT percent forgetting and tau pathology (TAU and PTAU). Notably, MGFA, which accounts for a one-dimensional latent factor, better captures the effects of A$\beta$ accumulation, a major pathogenic event in Alzheimer's disease \citep{haass1993cellular}. These results underscore the importance of incorporating latent factor structures to enhance the understanding of biomarker-cognition relationships.




\begin{figure}[t]
    \centering
    \includegraphics[width=\linewidth]{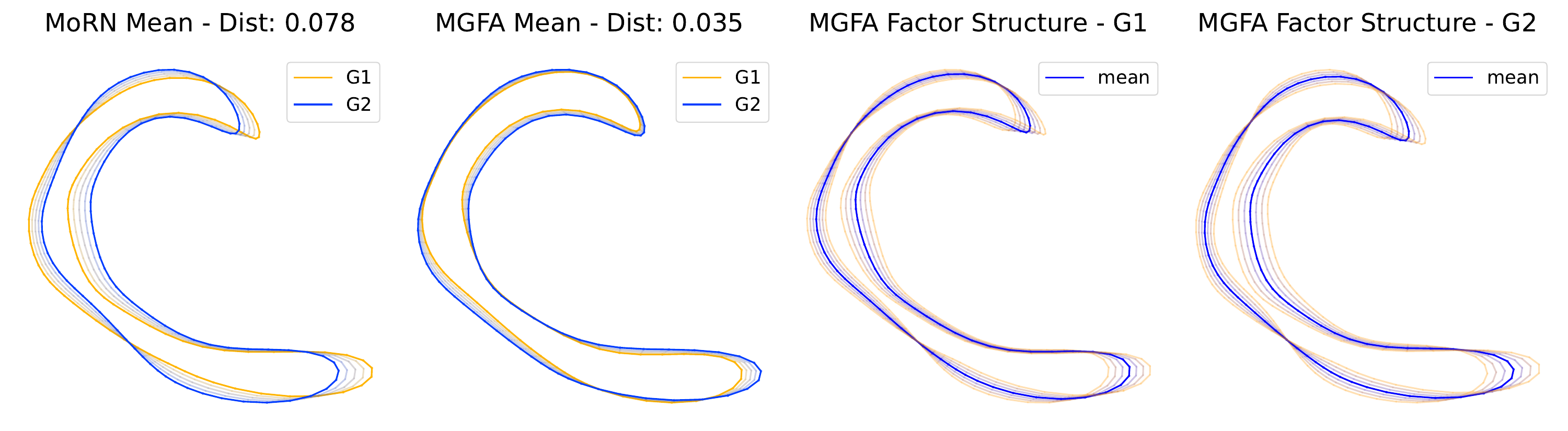}
    \caption{Clustering results for male LMCI subjects. The first two panels display the mean CC shapes of two groups identified by MoRN and MGFA, and visualize the geodesics between these mean shapes. The geodesic distances between mean shapes are 0.078 for MoRN and 0.035 for MGFA. The rest two panels visualize the factor structures of two groups identified by MGFA. The thick lines represent the mean CC shapes, while the thin lines stand for  $\Exp(\alpha,{\rm length}\cdot v)$, where $\alpha$ is the mean shape, $v$ is a unit tangent vector modeling the factor structure, $\rm length$ is chosen from $[-l,l]$ equidistantly, and $l$ is half the maximum norm of estimated latent factors.}
    \label{fig:Male-Mean-Shape}
\end{figure}

\subsubsection{Clustering male LMCI subjects}\label{sec:8.1.3-male}

We now apply MGFA to identify meaningful subgroups within the male LMCI cohort. After excluding individuals with missing values, our final sample comprises 105 male LMCI subjects, characterized by an average age of 75.82 years  (range: 54.4-88.3 years) and  an average education length of 16.20 years (range: 6.0-20.0 years). Using the same procedure as for the female cohort, we extract planar CC contour data for each male subject, generating a 100-point configuration in $\RR^2$ that is subsequently mapped to Kendall's shape space. We then stratify the male LMCI cohort into two distinct subgroups and examine subgroup differences, using the same approach as for the female cohort.

Our study still examines two models for clustering the male LMCI cohort: MoRN and MGFA with a latent dimension of $q=1$. We apply these models to stratify data into two subgroups, resulting in sizes of 56 and 49 for MoRN, and 63 and 42 for MGFA. We visualize the mean CC shapes and connecting geodesics for each subgroup in Figure \ref{fig:Male-Mean-Shape}, indicating that MGFA-identified subgroups have more similar mean CC shapes compared to MoRN-identified subgroups. This reduced between-cluster variation in MGFA can be attributed to the effects of  underlying latent factors. The last two panels in Figure \ref{fig:appendix-male} show the factor structure within each MGFA-identified cluster. It turns out that the second subgroup exhibits a stronger factor structure than the first subgroup, particularly in the trunk and splenium regions.


\begin{table}[t]
    \centering
    \caption{FDR-adjusted p-values for subgroup differences in the male LMCI cohort.}
    \label{tab:model_test_male}
    \resizebox{\textwidth}{!}{%
    \begin{tabular}{cccccccc}
    \midrule[1pt]
         & AGE & EDU & A$\beta$ & PTAU & TAU & ADAS11 & ADAS13  \\
    \midrule[.5pt]
       MoRN  & 0.700  & 0.477 & 0.477   & \textbf{0.007}  &   \textbf{0.007}  & 0.528 & 0.670  \\
       MGFA &  0.751 & 0.751  & 0.751   &  0.388   &  0.552 & 0.927  & 0.751 \\

       \midrule[1pt]

         & MMSE &  immediate &  learning& percent forgetting  & ST2SV & ST3SV   & ST4SV \\
    \midrule[.5pt]
       MoRN  &  0.840  &  0.477 & 0.533 & 0.477  & 0.931  &  \textbf{0.032} & 0.233 \\
       MGFA & 0.751 & 0.126  & 0.938   &   0.751 & 0.388  &  0.239 & 0.382     \\
     \midrule[1pt]
        & ST5SV   & ST6SV  & ST103CV & ST88SV & ST44CV & ST29SV  & ICV \\
    \midrule[.5pt]
       MoRN  &  0.273  &  0.059  & 0.784 &  0.477 & 0.784    &  0.670 &  \textbf{0.007} \\
       MGFA  & 0.258 &  0.126 & 0.388  & 0.938  &   0.938   & 0.751   & 0.751 \\
    \midrule[1pt]
       \multicolumn{8}{r}{Bold numbers  represent statistically  significant differences $(p<0.05)$.}
    \end{tabular}
    }
\end{table}

Next, we examine the subgroup differences in age, education length, brain measures, CSF variables, and cognitive scores, employing the Benjamini-Hochberg correction to control the false discovery rate. As shown in Table \ref{tab:model_test_male}, MGFA-identified subgroups do not display significant difference in these characteristics. In contrast, MoRN-identified subgroups show significant difference in TAU, PTAU, brain size (ICV), and the subcortical volume of the central CC. To further elucidate  subgroup differences, we examine the correlations between CSF variables and cognitive test scores,  apply the Benjamini-Hochberg procedure to adjust the p-values for multiple comparisons, and identify significant pairs. These adjusted p-values are visualized using heatmaps in Figure \ref{fig:appendix-male} in the Appendix. Given the significance level $0.05$, we find the following significant correlations:
\begin{itemize}[leftmargin=*]
    \item The second MoRN-identified cluster exhibits significant correlations between A$\beta$ and ADAS13, whereas the first cluster shows no significant correlations between CSF variables and cognitive test scores.

    \item The second MGFA-identified subgroup shows significant correlations between RAVLT percentage forgetting and tau pathology (TAU, PTAU). The first MGFA-identified subgroup exhibits significant correlations between A$\beta$ and ADAS test scores (ADAS11, ADAS13).
\end{itemize}
These findings show that both MoRN and MGFA identify one subgroup that exhibit significant correlations between A$\beta$ and cognitive scores (ADAS13). However, the subgroup identified by MGFA also show significant correlations between A$\beta$ and ADAS11, better capturing the effects of A$\beta$ accumulation. Moreover, the other MoRN-identified subgroup does not show significant correlations between CSF variables and cognitive performance, while the other MGFA-identified cluster exhibits significant associations between tau pathology and memory performance. These results suggest that MGFA may be more sensitive to the nuances of A$\beta$ and tau pathology, key features of AD progression \citep{blennow2004csf}.

Finally, we compare the female and male cohorts, examining their corresponding subgroups. Notably, the first female subgroup identified by MGFA exhibits significantly stronger correlations between A$\beta$ and cognitive test scores, compared to the second male subgroup identified by MGFA. Specifically,  in the female subgroup, A$\beta$  shows significant associations with all cognitive test scores (MMSE, ADAS11, ADAS13, RAVLT percentage forgetting, RAVLT learning, RAVLT immediate), whereas in the male subgroup, A$\beta$ only correlates with ADAS11 and ADAS13. This sex difference in A$\beta$'s effects on cognitive aligns with previous findings \citep{koran2017sex,li2014sex,hirata2008females}, which indicate that females are more vulnerable to the harmful effects of A$\beta$ accumulation than males. Furthermore, this sex difference may contribute to the higher prevalence of LMCI and AD in females compared to males \citep{au2017sex,li2014sex}, suggesting a critical need for sex-specific considerations in AD research and clinical practice.



\subsection{Supplementary to corpus callosal shape analysis}\label{sec:d}



In this section, we present additional materials for  corpus callosal shape analysis.

\subsubsection{IDs for various brain measures}\label{sec:d1}

Table \ref{tab:5} summarizes various brain measures and their corresponding IDs used in Section \ref{sec:8}. For further information, one may refer to \href{https://adni.bitbucket.io/reference/ucsffsx51.html}{https://adni.bitbucket.io/reference/ucsffsx51.html}.

\begin{table}[H]
    \centering
    \caption{IDs of various brain measures}
    \vspace{10pt}
    \begin{tabular}{cl}
    \midrule[1pt]
    ID & Meaning \\
    \midrule[0.1pt]
        \texttt{ST2SV} & Subcortical Volume of CorpusCallosumAnterior \\
        \texttt{ST3SV} & Subcortical Volume of CorpusCallosumCentral\\
        \texttt{ST4SV} & Subcortical Volume  of CorpusCallosumMidAnterior\\
        \texttt{ST5SV} & Subcortical Volume  of CorpusCallosumMidPosterior\\
        \texttt{ST6SV} & Subcortical Volume  of CorpusCallosumPosterior\\
        \texttt{ST103CV} & Cortical Volume of RightParahippocampal\\
        \texttt{ST88SV} & Subcortical Volume of RightHippocampus\\

        \texttt{ST44CV} & Cortical Volume of LeftParahippocampal \\
        \texttt{ST29SV} & Subcortical Volume of LeftHippocampus\\
        \texttt{ICV} & Intracranial Volume\\
       \midrule[1pt]
    \end{tabular}

    \label{tab:5}
\end{table}

\subsubsection{Within-subgroup correlation heatmaps}\label{sec:d.1.2}

In this section, we provide  heatmaps of within-subgroup variable correlations. Figure \ref{fig:appendix-female} gives heatmaps for subgroups within the female cohort, while Figure \ref{fig:appendix-male} gives heatmaps for the male cohort. The inputs for these heatmaps are the FDR-adjusted p-values from the pairwise Pearson correlation test. Since the matrix is symmetric, we only visualize the lower-diagonal part of the matrix.

\begin{figure}[H]
    \centering
    \includegraphics[width=0.87\linewidth]{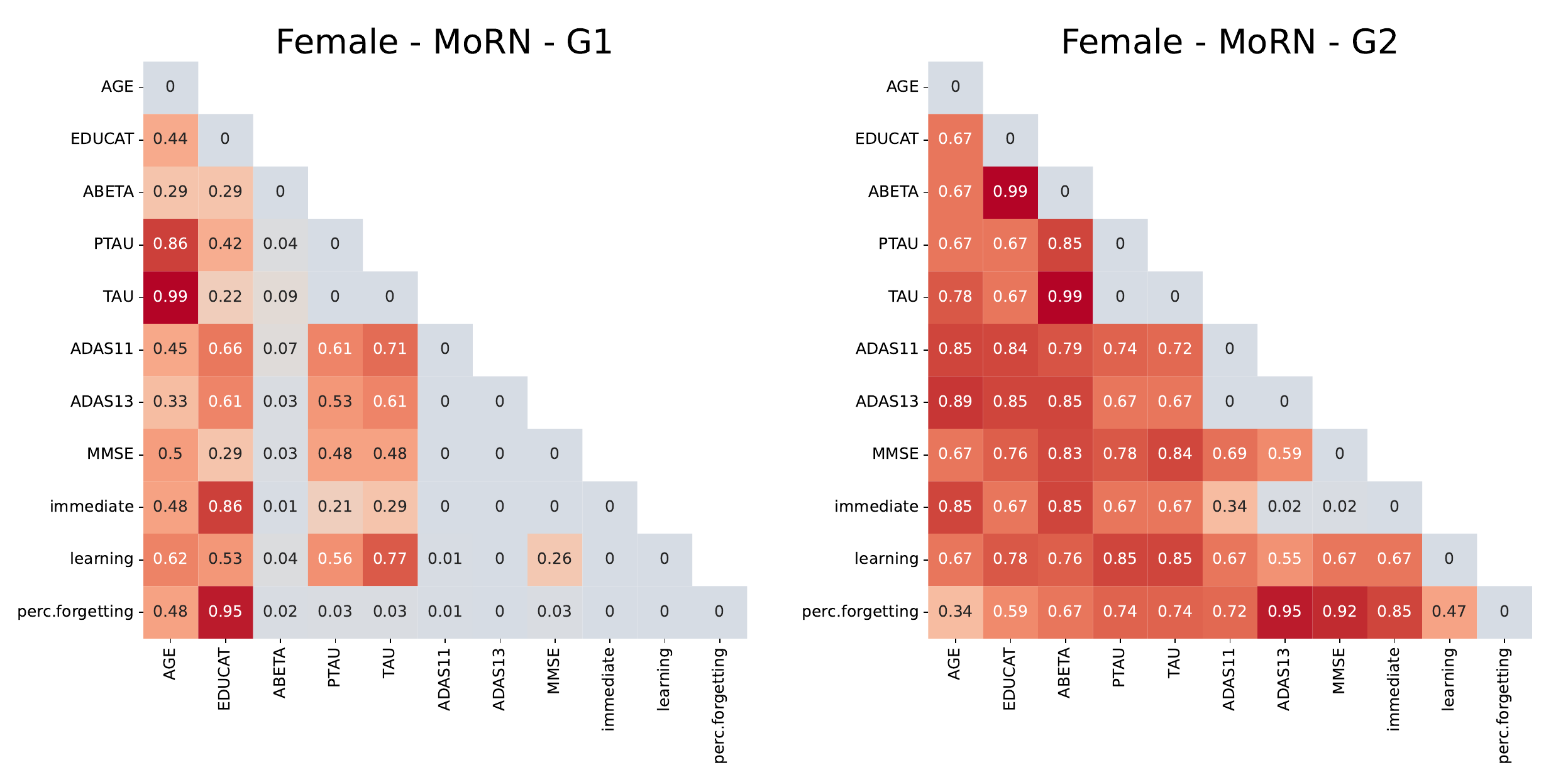}
    \includegraphics[width=0.87\linewidth]{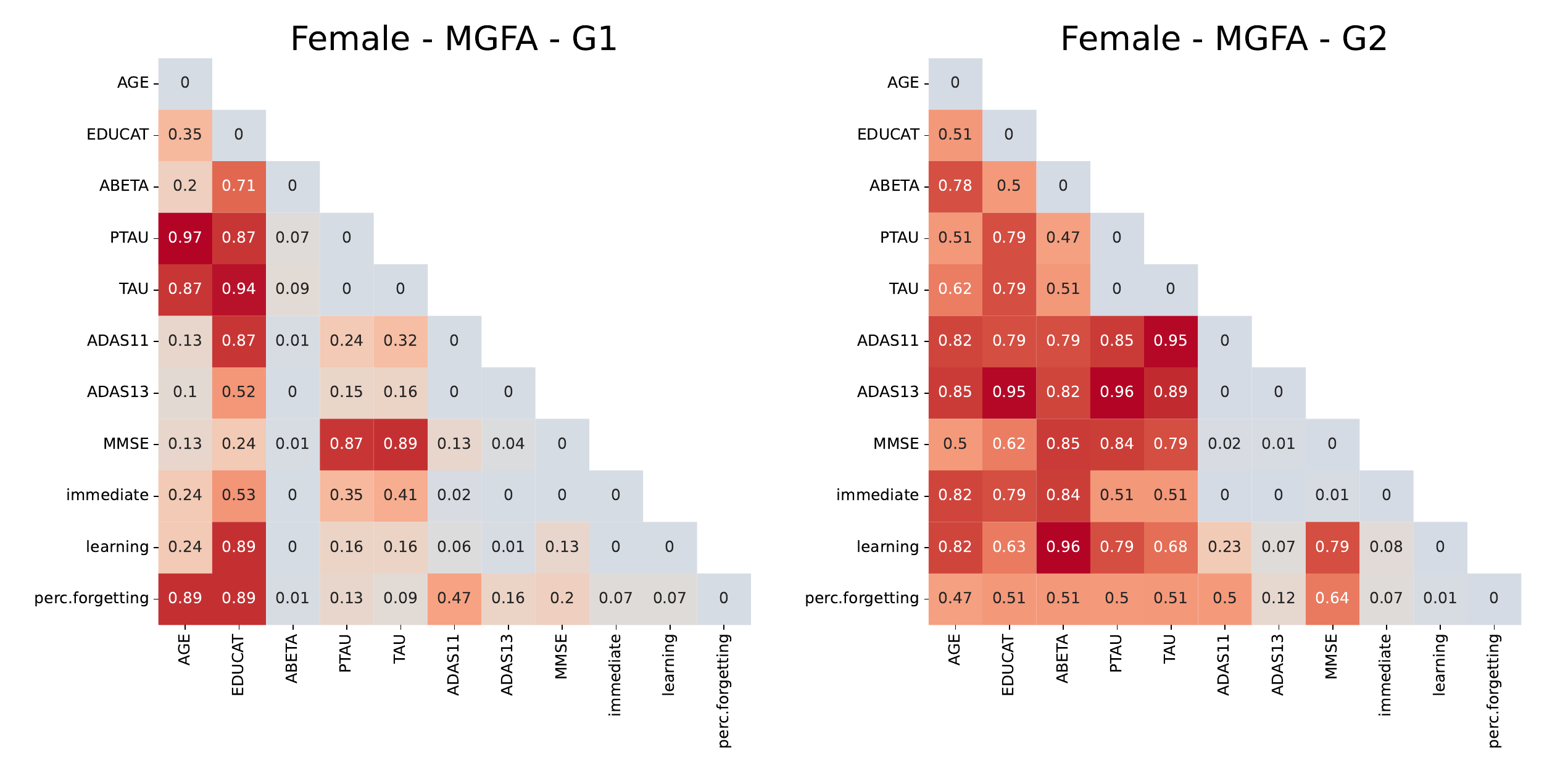}
    \caption{Within-subgroup correlation tests in the female cohort. The first row shows results for two MoRN-identified subgroups while the second row corresponds to two MGFA-identified subgroups. For each panel, we provide the heatmap of variable correlation tests, measured by FDR-adjusted p-values from the Pearson correlation test. The variables of interest include AGE, EDU, CSF (A$\beta$, TAU, PTAU), and cognition scores.}
    \label{fig:appendix-female}
\end{figure}

\begin{figure}[H]
    \centering
    \includegraphics[width=0.87\linewidth]{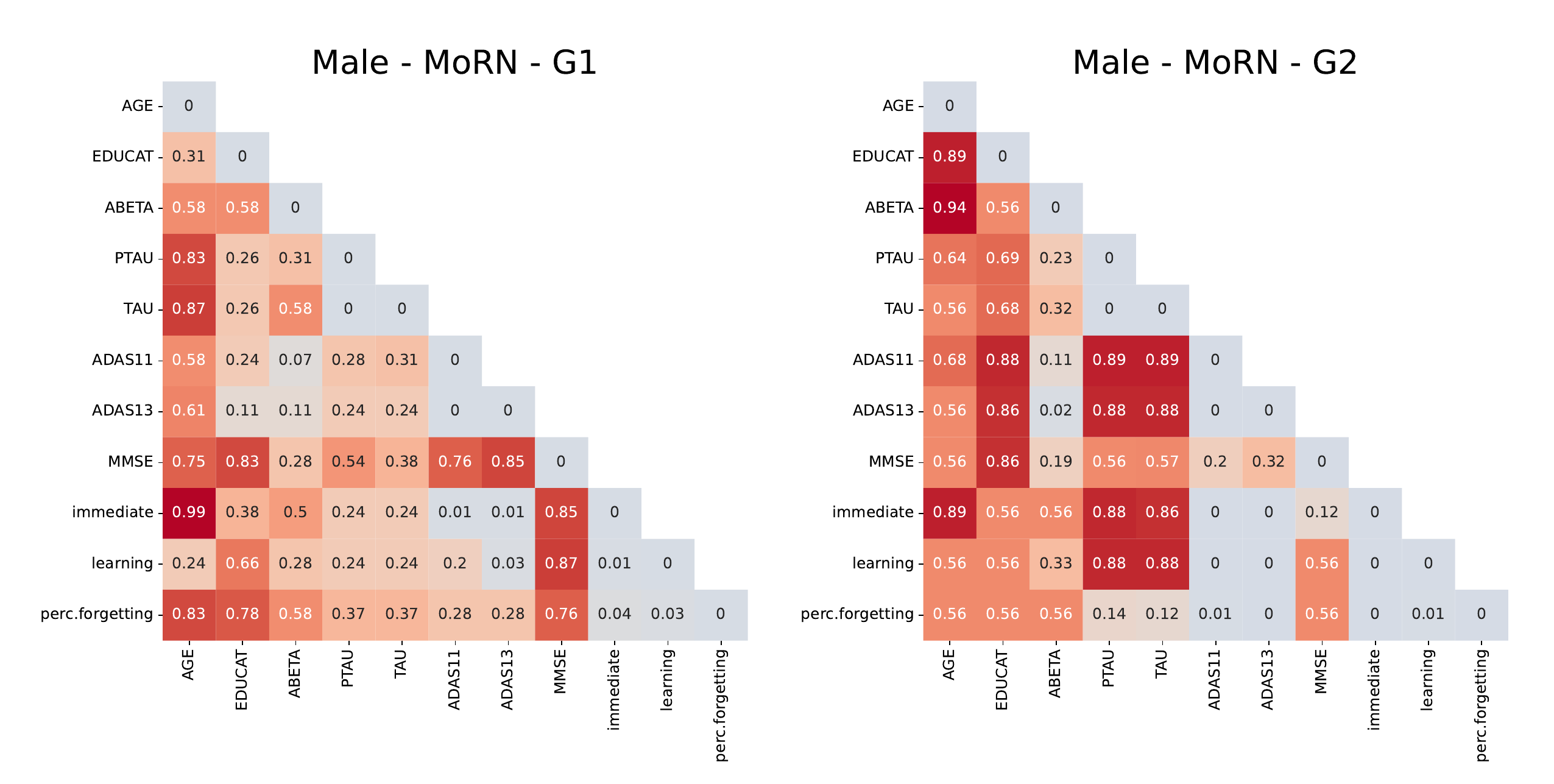}
    \includegraphics[width=0.87\linewidth]{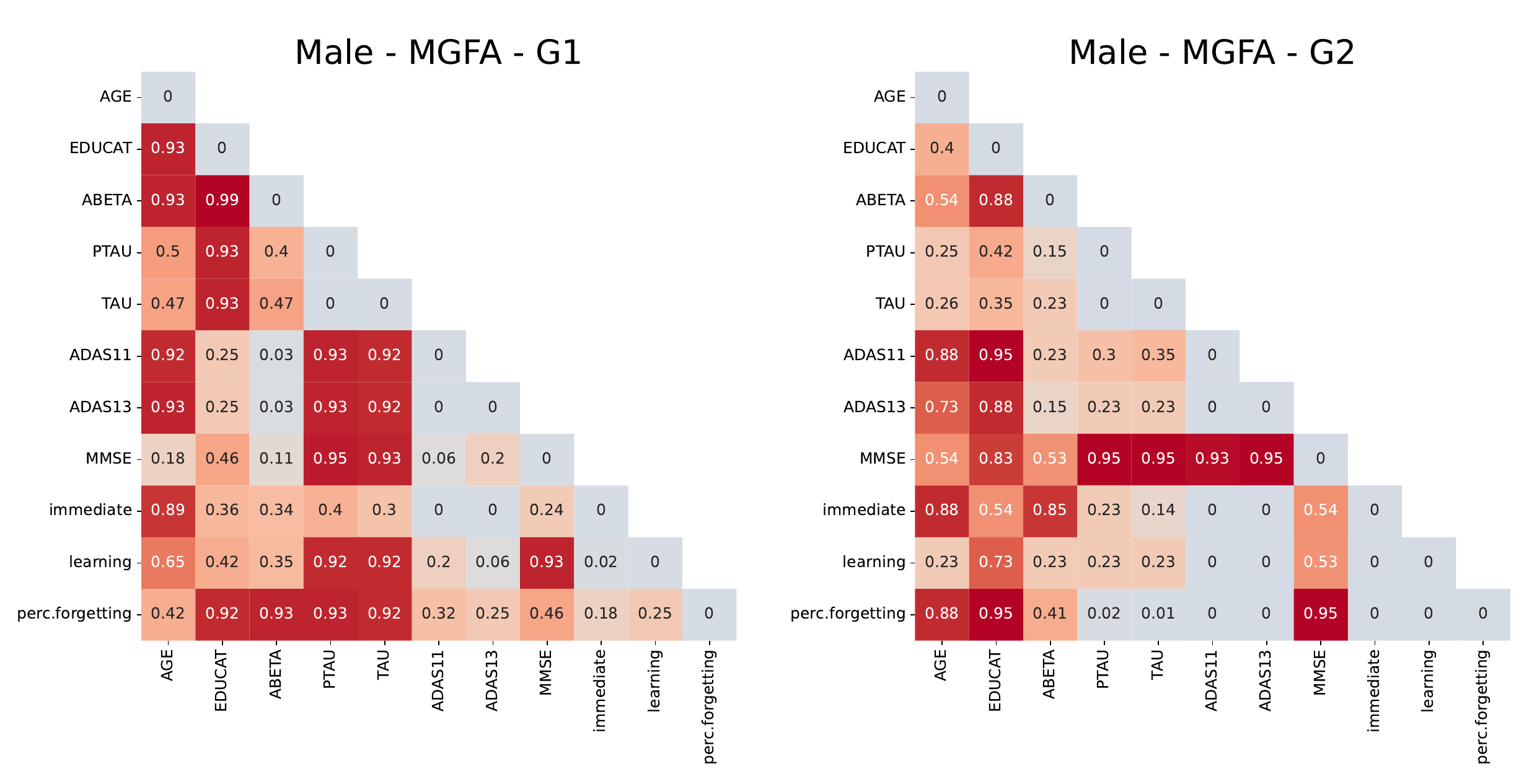}
    \caption{Within-subgroup correlation tests in the male cohort. The first row shows results for MoRN-identified subgroups while the second row corresponds to MGFA-identified subgroups. For each panel, we display the Pearson correlation tests in a heatmap, measured by FDR-adjusted p-values. The variables of interest include AGE, EDU, CSF (A$\beta$, TAU, PTAU), and cognitive scores.}
    \label{fig:appendix-male}
\end{figure}

\subsection{Additional results in left hippocampal shape analysis}\label{sec:hippo}

This section presents additional results in left hippocampal shape analysis. First, Table \ref{tab:hipp-demo-subgroup} summarizes the demographic information of identified subgroups.

Next, we investigate the factor structures within these subgroups, as depicted in Figure \ref{fig:LH-factor}. The analysis reveals distinct factor structures across groups. Notably, although G1 and G3 share similar mean shapes, their factor structures differ significantly, distinguishing them as separate clusters. Additionally, G2 shows a strong factor structure, indicating considerable variability in shapes across subjects within this group. Also, the factor pattern of G2 is different from its surface distance pattern in Figure \ref{fig:LH-Mean}, demonstrating that the reference shape does not belong to this group.

Moreover, we compare G1 and G2, which have the largest geodesic distance of 0.0659 and the highest and lowest MMSE scores, respectively. Figure \ref{fig:geodesics-LH} illustrates geodesic between their mean shapes, highlighting key structural differences. Notably, the most pronounced variations occur in the top and bottom regions, which may correspond to functionally distinct areas of the hippocampus.

\begin{table}[t]
       \centering
       \caption{Demographic information and the mean variable values for each subgroup in the hippocampal shape analysis. The standard deviations of the variables are provided in the parentheses.}
       \label{tab:hipp-demo-subgroup}
       \resizebox{\linewidth}{!}{%
       \begin{tabular}{cccccccc}
       \midrule[1pt]
          Group  & $n$ & Sex (F/M) & Age (years) & Education Length (years) & MMSE & Hipp / ICV (1e-4) & ICV (1e5) \\
        \midrule[.5pt]
         G1   & 145 & 58 / 87 & 73.91 (7.13) & 15.30 (3.15)  & 27.17 (1.72) & 40.58 (7.12) & 15.69 (1.75)   \\
         G2 & 14 & \ 4 / 10 & 71.39 (7.23) & 14.85 (3.16) & 26.00 (1.47) & 40.21 (4.89) & 16.39 (1.50) \\
         G3 & 79 &  27 / 52 & 74.27 (7.62) & 15.50 (2.77) & 26.80 (1.88) & 42.03 (7.03) & 15.48 (1.50) \\
         G4 & 82 &  29 / 53 & 74.54 (7.56) & 16.02 (3.26) & 26.95 (1.86) & 40.40 (5.77) & 15.93 (1.77)  \\
        \midrule[1pt]
       \end{tabular}
       }
   \end{table}

\begin{figure}[H]
    \centering
    \includegraphics[width=\linewidth]{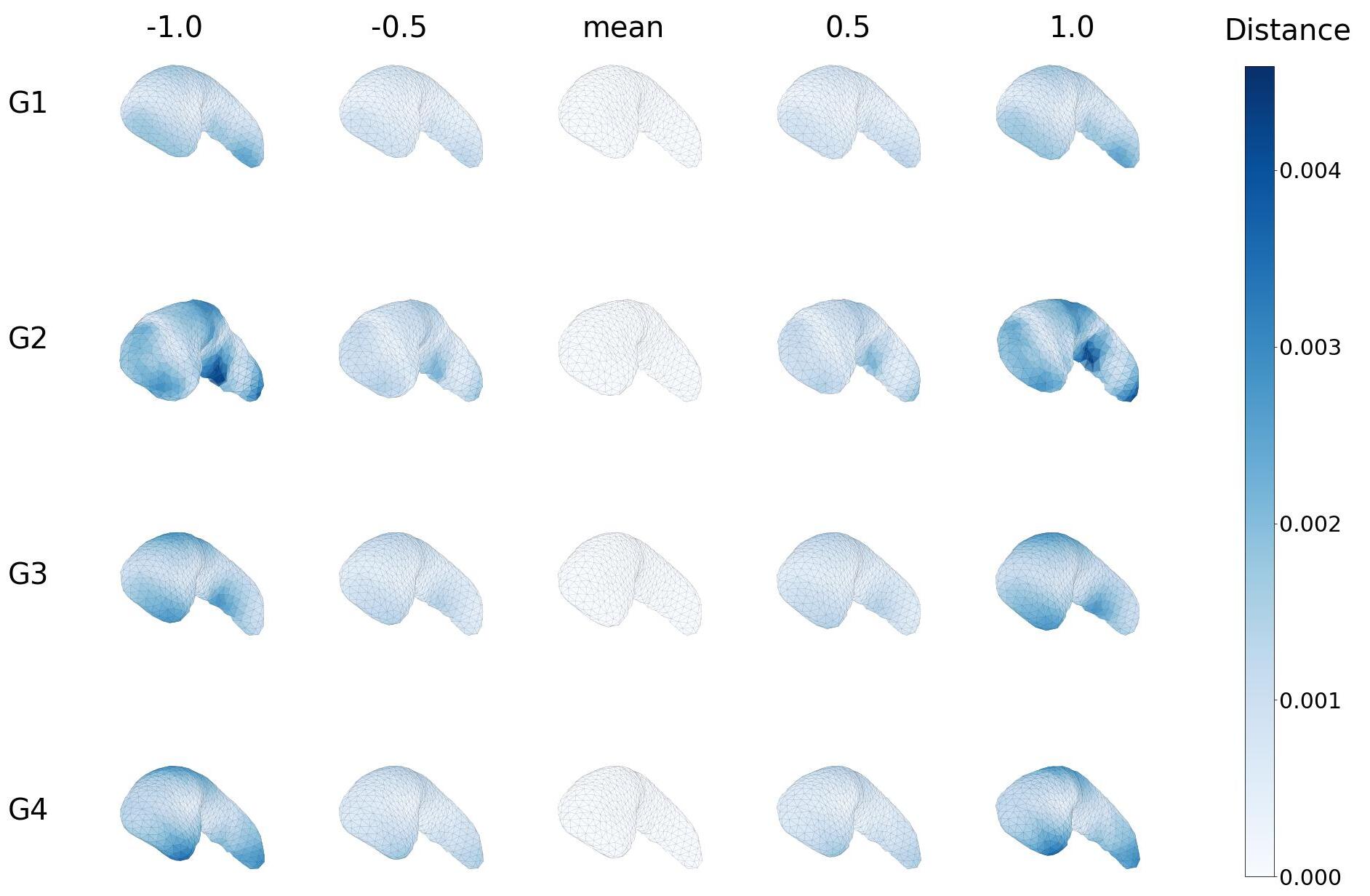}
    \caption{Visualization of the factor structures for each subgroup. For each group, we depict $\Exp_{\alpha_k}(r\cdot v_k)$ with $r\in[-1,-1/2,0,1/2,1]$, where $\alpha_k$ is the mean shape of the $k$-th group and $v_k\in T_{\alpha_{k}}\cM$ is the estimated loading vector. The color gradient corresponds to the surface distance from each panel to its corresponding mean shape.}
    \label{fig:LH-factor}
\end{figure}

\begin{figure}[t]
      \centering
      \includegraphics[width=\linewidth]{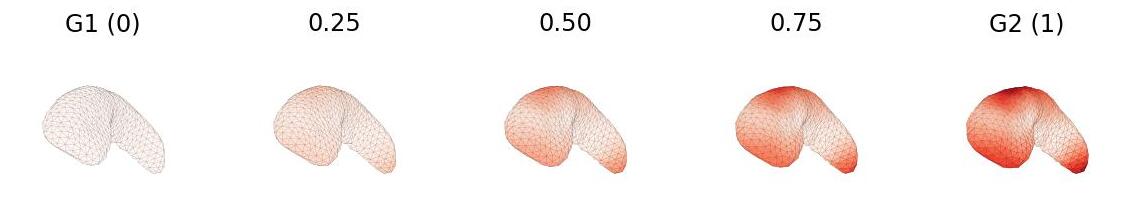}
      \caption{Visualization of the connecting geodesics  between the mean shapes for group one   and group two. These panels correspond to $\Exp_{\alpha_1}(r\cdot \Log_{\alpha_1}(\alpha_2))$ with $r\in[0,1/4,2/4,3/4,1]$, where $\alpha_1$ and $\alpha_2$ are the mean shapes of G1 and G2. The color gradient corresponds to the surface distance between the shape and the mean shape of G1. The geodesic distance between G1 and G2 is 0.0659.}
      \label{fig:geodesics-LH}
  \end{figure}

\end{document}